\def\I{{\mathbb I}}
\def\P{{\mathbb P}}
\def\pr{\mathbb{P}}
\def\Or{\mathbb{OR}}
\def\X{{\mathbf{X}}}
\def\x{{\mathbf{x}}}
\def\E{{\mathbb E}}
\def\D{{\mathcal D}}
\def\T{{\mathcal T}}
\def\ACORE{\texttt{ACORE }}
\def\BFF{\texttt{BFF }}
\let\hat\widehat
\newtheorem{prop}{Proposition}
\newtheorem{thm}{Theorem}
\newtheorem{Assumption}{Assumption}
\newtheorem{Lemma}{Lemma}
\newtheorem{Corollary}{Corollary}
\newcommand{\BlackBox}{\rule{1.5ex}{1.5ex}}  % end of proof
    \renewenvironment{proof}{\par\noindent{\bf Proof\ }}{\hfill\BlackBox\\[2mm]}
    \newenvironment{proof}{\par\noindent{\bf Proof\ }}{\hfill\BlackBox\\[2mm]}
\newcommand{\codecomment}[1]{\textbf{\color{black}// #1}}
\definecolor{awesome}{rgb}{1.0, 0.13, 0.32}
\definecolor{safetyorange}{rgb}{1.0, 0.4, 0.0}
\definecolor{vermilion}{rgb}{0.89, 0.26, 0.2}
\newcommand{\revJMLR}[1]{{\color{black} #1}}
\newcommand{\revEJS}[1]{{\color{black} #1}}
\newcommand{\addEJS}[1]{{\color{black} #1}}
\newcommand{\remove}[1]{\textbf{\color{lightgray} [#1]}}
\newcommand{\addTwo}[1]{{\color{black} #1}}
\begin{document}

\begin{frontmatter}

% "Title of the Paper"
\title{
Likelihood-Free Frequentist Inference:
\addEJS{Bridging Classical Statistics and \\ Machine Learning for Reliable\\ Simulator-Based Inference}\support{This work was supported in part by NSF DMS-2053804, NSF PHY-2020295, and the C3.ai Digital Transformation Institute. RI is grateful for the financial support of CNPq (422705/2021-7 and 305065/2023-8) and FAPESP (2019/11321-9 and 2023/07068-1).}
}
\runtitle{Likelihood-Free Frequentist Inference}

\author{
\fnms{Niccol\`o} \snm{Dalmasso}\textsuperscript{1,}\thanksref{t1}\ead[label=e1]{niccolo.dalmasso@gmail.com},
\fnms{Luca} \snm{Masserano}\textsuperscript{2,}\thanksref{t1}\ead[label=e2]{masserano.luca@gmail.com},\\
\fnms{David} \snm{Zhao}\textsuperscript{2}\ead[label=e3]{dzhaoism@gmail.com},
\fnms{Rafael} \snm{Izbicki}\textsuperscript{3}\ead[label=e4]{rafaelizbicki@gmail.com},
\fnms{Ann} \snm{B. Lee}\textsuperscript{2}\corref{}\ead[label=e5]{annlee@andrew.cmu.edu}
}

\address{\textsuperscript{1}Department of Statistics and Data Science, Carnegie Mellon University\\ \printead{e1}}
\address{\textsuperscript{2}Department of Statistics and Data Science, Machine Learning Department, \\ Carnegie Mellon University \\ \printead{e2}, \printead{e3} \\ \printead{e5}}
\address{\textsuperscript{3}Department of Statistics, Federal University of S\~{a}o Carlos \\ \printead{e4}}
%\address{\textsuperscript{4}Department of Statistics and Data Science, Machine Learning Department, \\ Carnegie Mellon University \\ \printead{e5}}

\thankstext{t1}{Equal Contribution} 

\runauthor{Dalmasso, Masserano, Zhao, Izbicki, and Lee}

\begin{abstract}
Many areas of science rely on simulators that implicitly encode intractable likelihood functions of complex systems. Classical statistical methods are poorly suited for these so-called likelihood-free inference (LFI) settings, especially outside asymptotic and low-dimensional regimes. At the same time, popular LFI methods --- such as Approximate Bayesian Computation or more recent machine learning techniques --- \addTwo{do not necessarily lead to valid scientific inference because they do not guarantee confidence sets with nominal coverage in general settings.} In addition, LFI currently lacks practical diagnostic tools to check the actual coverage of computed confidence sets across the entire parameter space. In this work, we propose a modular inference framework that bridges classical statistics and modern machine learning to provide (i) a practical approach for constructing confidence sets with \addTwo{near finite-sample validity at} any value of the unknown parameters, and (ii) interpretable diagnostics for estimating empirical coverage across the entire parameter space. We refer to this framework as {\em likelihood-free frequentist inference} (LF2I). Any method that defines a test statistic can leverage LF2I to create  valid confidence sets and diagnostics without costly Monte Carlo \addTwo{or bootstrap} samples at fixed parameter settings. We study two likelihood-based test statistics (\texttt{ACORE} and \texttt{BFF}) and demonstrate their performance on high-dimensional complex data. Code is available at \href{https://github.com/lee-group-cmu/lf2i}{https://github.com/lee-group-cmu/lf2i}.
\end{abstract}

\begin{keyword}[class=MSC]
\kwd[Primary ]{62F25}
\kwd[; secondary ]{62G08, 62P35}
\end{keyword}

\begin{keyword}
\kwd{likelihood-free inference}
\kwd{simulator-based inference}
\kwd{frequentist coverage}
\kwd{confidence sets}
\kwd{\revJMLR{Neyman inversion}}
\end{keyword}

% history:
% \received{\smonth{1} \syear{0000}}

%\tableofcontents

\end{frontmatter}

%\ann{use addTwo \addTwo{in teal color}, and removeTwo \removeTwo{in gray color} to make new changes}

% Main text entry area
\section{Introduction}
\label{sec:intro}
Hypothesis testing and uncertainty quantification are the hallmarks of scientific inference. Methods that achieve good statistical performance (e.g., high power) often rely on being able to explicitly evaluate a likelihood function, which relates parameters of the data-generating process to observed data.  However, in many areas of science and engineering, complex phenomena are modeled by forward simulators that {\em implicitly} define a likelihood function. For example,\footnote{\noindent{\bf Notation.} 
Let $F_{\theta}$  
represent the stochastic forward \revJMLR{model} for a sample point $\X \in \mathcal{X}$ at parameter $\theta \in \Theta$. \revJMLR{We refer to $F_{\theta}$ as a ``simulator'', as the assumption is that we can sample data from the model.} We denote  i.i.d ``observable'' data from  $F_{\theta}$ by $\D=\left\{ \X_1,\ldots,\X_n \right\}$, and the actually observed or measured data by $D=\left\{ \x_1^{\text{obs}},\ldots,\x_n^{\text{obs}} \right\}$. The likelihood function  
\revEJS{is defined as $\mathcal{L}(D;\theta)=\prod_{i=1}^{n} p(\x_i^{\text{obs}}|\theta)$, where $p(\cdot|\theta)$  
is the density of $F_\theta$ with respect to a fixed dominating measure $\nu$\revEJS{, which could be the Lebesgue measure.}}}
given input parameters \revJMLR{$\theta$ from some parameter space $\Theta$}, 
a stochastic model \revJMLR{$F_\theta$} may encode the interaction of atoms or elementary particles\revEJS{, or the transport of radiation through the atmosphere or through matter in the Universe by combining deterministic dynamics with random fluctuations and measurement errors,} to produce synthetic data $\X$.
\\\\
Simulation-based inference \addEJS{with an intractable likelihood} is commonly referred to as {\em likelihood-free inference} (LFI). The most well-known approach to LFI is Approximate Bayesian Computation (ABC; see \cite{beaumont2002approximate, marin2012approximate, sisson2018handbook} for a review). These methods use simulations sufficiently close to the observed data   $D=\left\{ \x_1^{\text{obs}},\ldots,\x_n^{\text{obs}} \right\}$ to infer the underlying parameters, or more precisely, the posterior distribution $p(\theta|D)$. Recently, the arsenal of LFI methods has been expanded with new machine learning algorithms (such as neural density estimators) that instead use the output from simulators as training data. The objective here is to learn a ``surrogate model'' or {\em approximation} of the likelihood $p(D|\theta)$ or posterior \revJMLR{$p(\theta|D)$}. The surrogate model, rather than the simulations themselves, is then used for inference. \addEJS{Machine-learning (ML) based methods have revolutionized LFI in terms of the complexity and dimensionality of the problems that can be tackled (see \cite{Cranmer2020Review} for a recent review). Nevertheless, neither ABC nor ML-based LFI approaches guarantee confidence sets with frequentist coverage,} which are crucial to ensure reliability of downstream scientific conclusions. Suppose that we have a high-fidelity simulator $F_\theta$, which implicitly encodes the likelihood, and that we observe data $\D$ of finite sample size $n$. \addTwo{We address two open challenges in LFI:}\\

\vspace{-0.2cm}\noindent \addTwo{\textbf{i)} The first challenge is} finding practical procedures for constructing a $(1-\alpha)$ confidence set $R(\D)$ with nominal coverage\footnote{
 \addEJS{We  use the notation $\P_{\D|\theta}(\cdot)$ to emphasize the fact that $\D$ is random, but $\theta$ is fixed. }}
\begin{equation}\label{eq:cond_coverage}
\pr_{\mathcal{D}|\theta} \left( \theta \in R(\D)\right) =  1-\alpha , 
\end{equation} where $\alpha \in (0,1)$,
{\em regardless of} the  true value of the unknown parameter $\theta \in \Theta$ and of the number of observations $n$. Monte Carlo and bootstrap procedures are computationally infeasible for continuous parameter spaces $\Theta$, and large-sample theory does not apply when, e.g., $n=1$. \addTwo{The latter $n=1$ scenario is very common in, e.g., large astronomical surveys where each object (e.g., galaxy or star) has a different parameter value $\theta$ and may only be measured once.}\\

\vspace{-0.2cm} \noindent \addTwo{\textbf{ii)} The second challenge is} finding practical and interpretable procedures to check that the empirical coverage of the constructed sets $R(\D)$ is indeed close to (and no smaller than) $1-\alpha$ for {\em any} $\theta \in \Theta$ (again, without resorting to costly Monte Carlo simulations at fixed parameter settings on a fine grid in parameter space $\Theta$ \citep[Section 13]{cousins2018lectures}). \addTwo{Local validity across the entire parameter space is essential for reliable scientific inference because the scientist does not actually know what the true value of $\theta$ is for the object of interest.}

\paragraph{Novelty and significance} \addTwo{In this paper, we introduce a fully modular statistical framework that addresses both problems above. We refer to the general approach as {\em likelihood-free frequentist inference} (LF2I)\footnote{Code is available as a Python package at \href{https://github.com/lee-group-cmu/lf2i}{https://github.com/lee-group-cmu/lf2i}.}. LF2I is fully nonparametric and targets modern scientific applications,  involving, e.g, high-dimensional data of different modalities, intractable likelihood models, and/or small sample sizes. Section~\ref{sec: related_work} describes how LF2I is related to other work in this area.

At the heart of LF2I is the {\em Neyman construction of confidence sets}, albeit applied to a setting where the test statistic’s distribution is unknown. Frequentist confidence sets and their equivalence to hypothesis tests have a long history in statistics \citep{fisher1925,neyman1935CI,neyman1937inversion}.   While classical statistical procedures have significantly impacted fields like high-energy physics (see Section~\ref{sec: related_work}), most simulator-based methods lack theoretical guarantees for confidence sets beyond low-dimensional data and large-sample assumptions \citep{Feldman1998UnifyingApproach}. Implementing the Neyman construction for LFI is challenging not only because one cannot evaluate the likelihood, but also because one needs to test null hypotheses across the entire parameter space. While Monte Carlo and bootstrap methods estimate critical values and p-values from a batch of simulations at each null value $\theta_0$ \citep{mackinnon2009bootstrap,ventura2010bootstrap},  they become computationally infeasible for high-dimensional parameters. As a result,  practical implementations might rely on parametric assumptions or asymptotic theory~\citep{neymanpearson1928LR, wilks1938LRAsymptotic}. For instance, it is often assumed that the likelihood-ratio (LR) statistic follows a $\chi^2$ distribution, but this does not hold for irregular models or small sample sizes \cite{algeri2019searching, kieseler2022calorimetric, ho2021approximate}. This work seeks to quickly and accurately estimate critical values and coverage across the parameter space without knowing the test statistic distribution or relying on large-sample approximations.

The key insight behind LF2I is that the main quantities of interest in frequentist statistical inference --- test statistics, critical values, p-values and coverage of the confidence set --- are {\em distribution functions indexed by the (unknown) parameter $\theta$}, which generally vary smoothly over the parameter space $\Theta$.} As a result, one can leverage machine learning methods and data simulated in the neighborhood of a parameter to improve estimates of quantities of interest with fewer total simulations. 
Figure~\ref{fig:schematic_diagram} illustrates the general LF2I inference machinery, which is composed of three modular branches with separate functionalities:\\

\vspace{-0.2cm}\noindent {\bf i) The test statistic branch}  (Figure~\ref{fig:schematic_diagram} center and Section~\ref{sec:odds_based_tests}) \addTwo{uses a simulated set $\T$ to estimate a test statistic $\lambda(\D; \theta_0)$  for testing $H_{0, \theta_0}:\theta=\theta_0$ versus $H_{1, \theta_0}: \theta \neq \theta_0$.} We study the theoretical and empirical performance of LF2I confidence sets derived from likelihood-based test statistics learned via the odds function ${\mathbb{O}}(\X; \theta)$ (Equation~\ref{eq:odds_def}).\\
 
\vspace{-0.2cm}\noindent {\bf ii) The calibration branch} (Figure~\ref{fig:schematic_diagram} left and Section~\ref{sec:confidence_sets}) uses a left-out set $\mathcal{T}^{'}$ to estimate  critical values $C_{\theta_0}$  for every level-$\alpha$ test of $H_{0, \theta_0}$ via quantile regression of the estimated test statistic $\lambda(\D; \theta_0)$ on $\theta_0 \in \Theta$. Once we have estimated the quantile function $\widehat{C}_{\theta_0}$ \addTwo{indexed by $\theta_0$}, we can directly construct Neyman confidence sets  
\begin{equation}\label{eq:def_est_confset}
    \widehat{R}(\D):=\left\{\theta \in \Theta \, \middle| \, \lambda(\D;\theta) \geq \widehat{C}_{\theta} \right\}  
\end{equation}
that have approximate $(1-\alpha)$  finite-$n$ coverage \addTwo{for every value of $\theta \in \Theta$.} LF2I with critical values is amortized, meaning that once trained it can be evaluated on an arbitrary number of observations $D$. Alternatively, we can estimate p-values $p(D;\theta_0)$ for every test at $\theta=\theta_0$ with observed data $D$.\\
    
\vspace{-0.2cm}\noindent {\bf iii) The diagnostics branch} (Figure~\ref{fig:schematic_diagram} right and Section~\ref{sec:diagnostics}) uses a validation set $\mathcal{T}^{''}$
to assess the empirical coverage $\P_{\D | \theta}(\theta \in \widehat{R}(\D))$ of the constructed confidence sets $\widehat{R}(\D)$ \addTwo{ across the parameter space by regressing the indicator variable $W:=\mathbb{I}(\lambda(\D;\theta) \geq \widehat{C}_{\theta})$ on $\theta$.} \addTwo{The diagnostics branch is not part of the inference procedure itself. Its purpose is to provide an independent assessment of local (instance-wise) coverage of the final constructed confidence sets.}\\

\noindent The LF2I approach was first introduced in a conference proceeding \cite{dalmasso2020ACORE}. This preliminary version --- \texttt{ACORE} (Approximate Computation via Odds Ratio Estimation) --- uses a test statistic that maximizes odds over the parameter space. In this follow-up paper, we analyze the statistical and computational properties of LF2I, while also introducing a new test statistic --- the Bayesian Frequentist Factor (\texttt{BFF}) --- which is the Bayes Factor \citep{jeffreys_1935, jeffreys_1961} treated as a frequentist test statistic. We show that the validity of LF2I only depends on calibration, whereas its power depends on the test statistic's definition and its estimation quality. In addition to new theoretical results in Section~\ref{sec:theory}, we compare LF2I with approaches using Monte Carlo methods or Wilks' theorem (Section \ref{sec:GMM}), and we illustrate how our diagnostics can help scientists in choosing the best tool to handle nuisance parameters (Section \ref{sec:hep_example}). Finally, we construct confidence sets given a high-dimensional particle physics simulation where ABC approaches are neither computationally feasible nor valid (Section \ref{sec:muons}).

\begin{figure}[t!]
\centering
\includegraphics[width=0.6\textwidth]{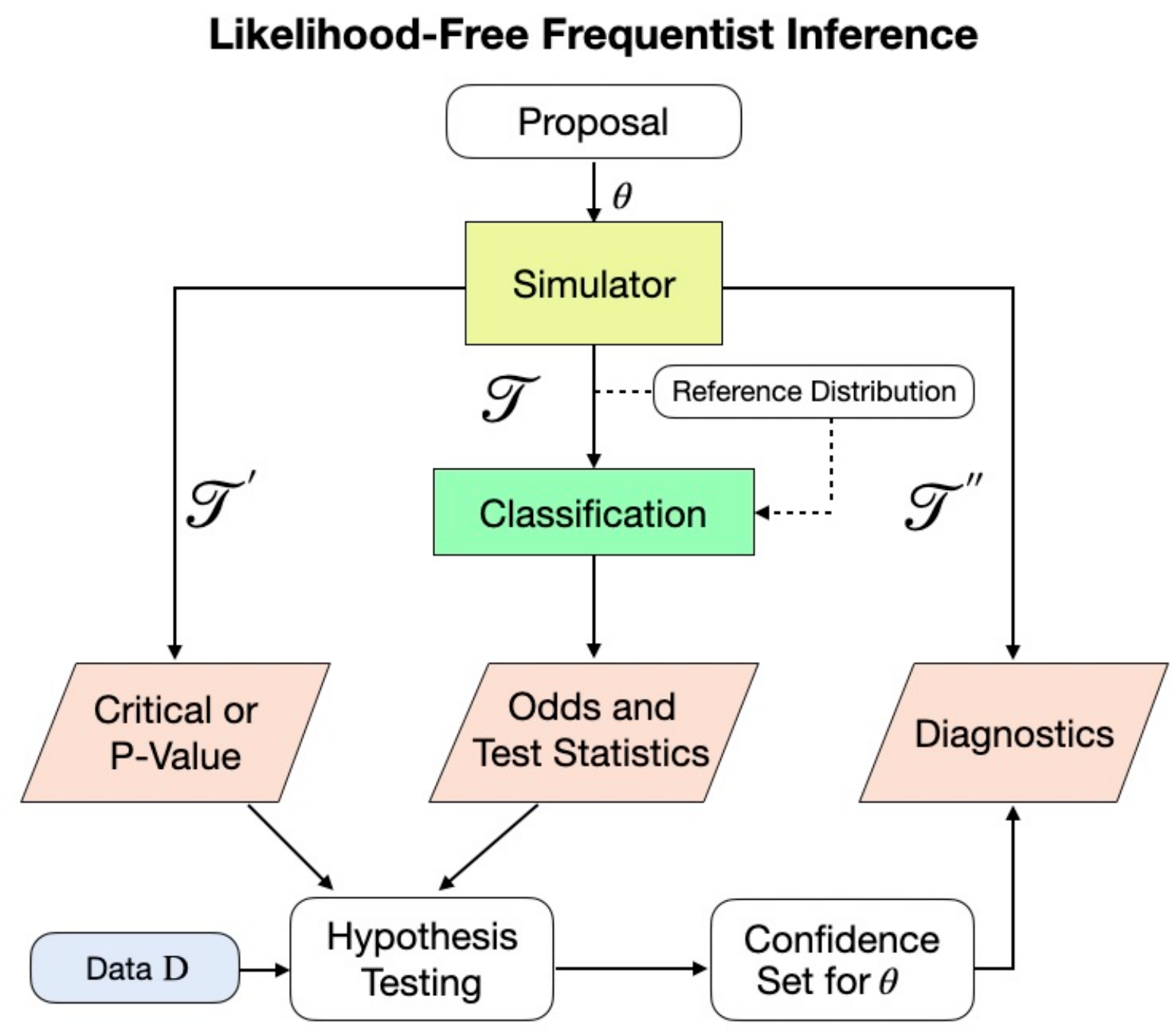}
\caption{\small {\bf The three-branch fully modular framework for likelihood-free frequentist inference (LF2I).} {\bf Center branch:} Draw a sample $\T$ of size $B$ from the simulator to estimate an arbitrary test statistic $\lambda(\D;\theta)$. Here we show how to do so by estimating the likelihood via the odds function $\mathbb{O}(\X;\theta)$. {\bf Left branch:} Draw a second sample $\T'$ of size $B'$ to estimate the critical values $C_{\theta}$ or p-values $p(\D; \theta)$ for all $\theta \in \Theta$. \textbf{Left $+$ Center:} Once data $D$ are observed, we can construct confidence sets $\widehat{R}(D)$ with finite-$n$ validity according to Equation~\ref{eq:est_conf_set}. {\bf Right branch:} The LF2I diagnostics branch independently checks whether the coverage $\P_{\D | \theta} (\theta \in \widehat{R}(\D))$ of the confidence set is indeed correct across the entire parameter space.}
\label{fig:schematic_diagram}
\vspace{-.2cm}
\end{figure}

\section{Statistical Inference in a Traditional Setting} 
\label{sec:classical_inf}
\addEJS{We now review the Neyman construction of confidence sets and the definitions of likelihood ratio and Bayes factor, before moving on to the details of the LF2I framework and its two instances, \ACORE and \texttt{BFF}.}

\paragraph{Equivalence of tests and confidence sets} A classical approach to constructing a confidence set for an unknown parameter $\theta \in \Theta$ is to invert a series of hypothesis tests \citep{neyman1937inversion}. Suppose that for each possible value $\theta_0 \in \Theta$, there is a level-$\alpha$ test $\delta_{\theta_0}$ of  
\begin{equation}
H_{0, \theta_0}: \theta=\theta_0 \ \ \mbox{versus} \ \ H_{1, \theta_0}: \theta \neq \theta_0.
\label{eq:test_Neyman}
\end{equation}
That is, a test $\delta_{\theta_0}$  where the type I error (the probability of erroneously rejecting a true null hypothesis $H_{0, \theta_0}$) is no larger than $\alpha$. For observed data $\D=D$, let $R(D)$ be the set of all parameter values $\theta_0 \in \Theta$ for which the test $\delta_{\theta_0}$ does not reject $H_{0,\theta_0}$. Then, by construction, the random set $R(\D)$ satisfies
\addEJS{\begin{equation*}
\P_{\D|\theta} \left( \theta \in R(\D) \right) \geq 1-\alpha \quad \forall \theta \in \Theta,
\end{equation*}}
which makes it a $(1-\alpha)$ {\em confidence set} for $\theta$. Similarly, we can define   tests with a desired significance level  by inverting  a confidence set with a certain coverage.

\paragraph{Likelihood ratio test} A general form of hypothesis tests that often leads to high power is the likelihood ratio test (LRT). Consider testing 
\begin{equation}   \label{eq:hypothesis_testing}
 H_0: \theta \in \Theta_0 \ \ \mbox{versus} \ \ H_1: \theta \in \Theta_1,
\end{equation}
where $\Theta_1=\Theta \setminus \Theta_0$.  For the {\em likelihood ratio (LR) statistic}, 
\begin{equation}
\label{eq::LRT}
\text{LR}(\D; \Theta_0) = 
\log \frac{\sup_{\theta \in \Theta_0}\mathcal{L}(\D;\theta)}{\sup_{\theta \in \Theta}\mathcal{L}(\D;\theta)},
\end{equation}
the LRT of hypotheses (\ref{eq:hypothesis_testing})
rejects $H_0$ when $\text{LR}(D; \Theta_0) < C$ for some constant $C$. Figure~\ref{fig:neyman_inversion_statistics} illustrates the construction of confidence sets for $\theta$ from level $\alpha$ likelihood ratio tests (\ref{eq:test_Neyman}). 
The critical value for each such test $\delta_{\theta_0}$ is 
\addEJS{$C_{\theta_0} = \sup \left\{C: \P_{\D|\theta_0} \left(\text{LR}(\D; \theta_0)< C  \right) \leq \alpha \right\}$}.

\paragraph{Bayes factor}
Let $\pi$ be a probability measure over the parameter space $\Theta$. The Bayes factor \citep{jeffreys_1935, jeffreys_1961} for comparing the hypothesis $H_0: \theta \in \Theta_0 $ to its complement, the alternative $H_1$, is the ratio of the marginal likelihood of the two hypotheses: 
\begin{equation}
\label{eq::BF}
\text{BF}(\D; \Theta_0) \equiv  \frac{\P(\D|H_0)}{\P(\D|H_1)} = \frac{\int_{\Theta_0} \mathcal{L}(\D;\theta) d\pi_0(\theta)}{\int_{\Theta_1}\mathcal{L}(\D;\theta) d\pi_1(\theta)},
\end{equation}
 where $\pi_0$ and $\pi_1$ are the restrictions of $\pi$ to the parameter regions $\Theta_0$ and $\Theta_1=\Theta_0^c$, respectively.
The Bayes factor is often used as a Bayesian alternative to significance testing, as it quantifies the change in the odds in favor of $H_0$ when going from the prior to the posterior: $\frac{\P(H_0|\D)}{\P(H_1|\D)} = \text{BF}(\D; \Theta_0) \frac{\P(H_0)}{\P(H_1)}$.\\

\begin{figure}[t!]
    \centering
    \includegraphics[width=0.3\textwidth]{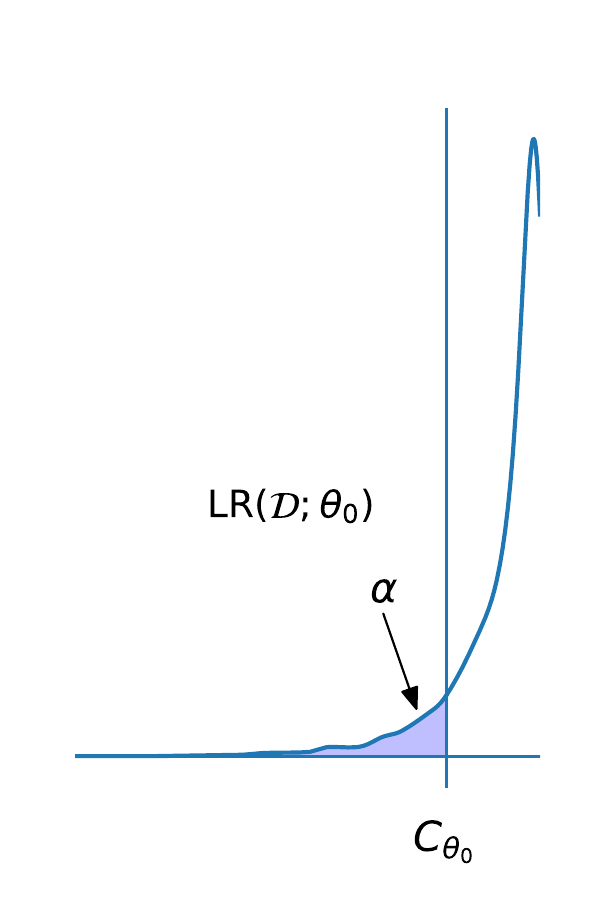}
    \includegraphics[width=0.27\textwidth]{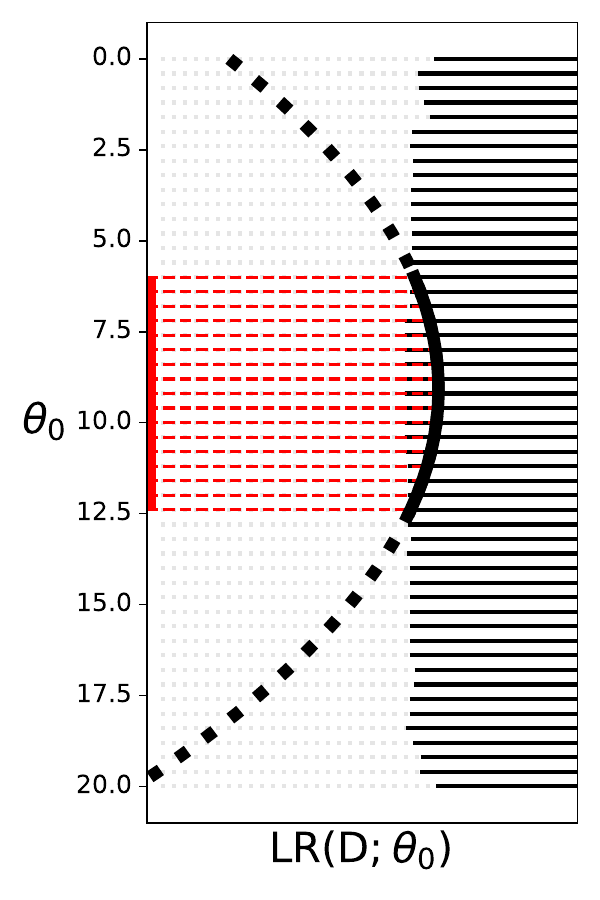}
    \vspace*{-3mm}
    \caption{\small {\bf Neyman construction of confidence sets by inverting hypothesis tests.} {\bf Left:} For each $\theta_0 \in \Theta$, we find the critical value $C_{\theta_0}$ that rejects the null hypothesis $H_{0,\theta_0}$ at level $\alpha$; that is, $C_{\theta_0}$ is the $\alpha$-quantile of the distribution of the test statistic under the null (a likelihood ratio $\text{LR}(\mathcal{D}; \theta_0)$ in this case). {\bf Right:} The horizontal solid lines represent acceptance regions for each $\theta_0 \in \Theta$. Suppose we observe data $D$. The confidence set for $\theta$ (red vertical solid line) consists of all $\theta_0$-values for which the observed test statistic $\text{LR}(D; \theta_0)$ (black curve) falls in the acceptance region.}
    \label{fig:neyman_inversion_statistics}
\end{figure}

\section{Likelihood-Free Frequentist Inference \addEJS{via Odds Estimation}}
\label{sec:LF2I}
In the typical LFI setting, we cannot directly evaluate the likelihood ratio $\text{LR}(\D; \Theta_0)$ or even the likelihood $\mathcal{L}(\D;\theta)$. 
\addEJS{In this work, we describe a version of LF2I that is based on odds estimation.} We assume that we have access to (i) a forward simulator $F_\theta$ to draw observable data, (ii) a reference distribution $G$ that does not depend on $\theta$, with larger support than $F_\theta$ for all $\theta \in \Theta$, and (iii) a probabilistic classifier to discriminates samples from $F_\theta$ and $G$.  

\subsection{\addEJS{Estimating an Odds Function across the Parameter Space}}\label{sec:learn_odds}
We start by generating a labeled sample $\T = \{(\theta_i,\X_i,Y_i)\}_{i=1}^B$ to compare data from $F_\theta$ with data from the reference distribution $G$. Here, $\theta \sim \pi_{\Theta}$ (a proposal distribution over $\Theta$), the ``label''  $Y \sim \text{Ber}(p)$,
$\X|(\theta,Y=1)\sim F_{\theta}$ and $\X|(\theta,Y=0)\sim G$. We then define the odds at $\theta$ and fixed $\x$ as
\begin{equation}\label{eq:odds_def}
{\mathbb{O}}(\x; \theta):=\frac{\P(Y=1|\theta,\x)}{\P(Y=0|\theta,\x)}.
\end{equation}
One way of interpreting ${\mathbb{O}}(\x; \theta)$ is to regard it as a measure of the chance that $\x$ was generated from $F_\theta$ rather than from $G$. That is, a large odds ${\mathbb{O}}(\x; \theta)$ reflects the fact that it is plausible that $\x$ was generated from $F_\theta$ (instead of $G$). We call $G$ a ``reference distribution'' as we are comparing $F_\theta$ for different $\theta$ with this distribution. \addEJS{Equation~\ref{eq:odds_def} is equivalent to the likelihood $p(\x|\theta)$ up to a normalization constant, as shown in \cite[Proposition 3.1]{dalmasso2020ACORE}.} The odds function ${\mathbb{O}}(\X; \theta)$  with $\theta \in \Theta$ as a parameter can be \revJMLR{estimated} with a probabilistic classifier, such as a neural network with a softmax layer, suitable for the data at hand. Algorithm~\ref{alg:joint_y} in Appendix~\ref{app:methods} summarizes our procedure for simulating a labeled sample $\T$. For all experiments in this paper, we use p=1/2 and $G=F_\X$, where $F_\X$ is the (empirical) marginal distribution of $F_\theta$ with respect to $\pi_\Theta$.

\subsection{Test Statistics based on Odds} \label{sec:odds_based_tests}
For testing $H_{0, \Theta_0}:\theta\in \Theta_0$ versus all alternatives $H_{1,\Theta_0}: \theta \notin \Theta_0$,
we consider two test statistics: \ACORE and \texttt{BFF}. Both statistics are based on $\mathbb{O}(\X; \theta)$, but whereas \ACORE eliminates the parameter $\theta$ by maximization, \BFF averages over the parameter space. 

\subsubsection{\ACORE by Maximization}\label{sec:ACORE}
The  \ACORE statistic \citep{dalmasso2020ACORE} for testing \revJMLR{Equation~\ref{eq:test_Neyman}} is given by 
\iffalse
\remove{      \begin{align}
            \Lambda(\D; \Theta_0) &:= \log \frac{ \sup_{\theta_0 \in \Theta_0}  \prod_{i=1}^n  \mathbb{O}(\X_i^{\text{obs}};\theta_0)}{\sup_{\theta \in \Theta}   \prod_{i=1}^n  \mathbb{O}(\X_i^{\text{obs}};\theta)} \nonumber \\
            	& =   \ \sup_{\theta_0 \in \Theta_0} \inf_{\theta_1 \in \Theta} \sum_{i=1}^n  \log \left( \Or(\X_i^{\text{obs}};\theta_0,\theta_1)\right),
      \end{align}}
\fi
\revJMLR{       \begin{align}
            \Lambda(\D; \Theta_0) &:= \log \frac{ \sup_{\theta \in \Theta_0}  \prod_{i=1}^n  \mathbb{O}(\X_i;\theta)}{\sup_{\theta \in \Theta}   \prod_{i=1}^n  \mathbb{O}(\X_i;\theta)} \nonumber \\
            	& =   \ \sup_{\theta_0 \in \Theta_0} \inf_{\theta_1 \in \Theta} \sum_{i=1}^n  \log \left( \Or(\X_i;\theta_0,\theta_1)\right),
     \label{eq:ACORE_statistic}
      \end{align}}
      where the odds ratio 
\revJMLR{      \begin{equation}\label{eq:oddsratio_def} \Or(\x;\theta_0,\theta_1):= \frac{\mathbb{O}(\x;\theta_0)}{\mathbb{O}(\x;\theta_1)} \end{equation}}
at $\theta_0,\theta_1 \in \Theta$ measures the plausibility that \revJMLR{a fixed $\x$} was generated from $\theta_0$ rather than $\theta_1$. We use  $\widehat \Lambda(\D; \Theta_0)$ to denote the \ACORE statistic based on $\T$ and estimated odds $\widehat{\mathbb{O}}(\X;\theta_0)$. When $\widehat{\mathbb{O}}(\X;\theta_0)$ is well-estimated for every $\theta$ and $\X$, $\widehat \Lambda(\D; \Theta_0)$ is the same as the $\text{LR}(\D; \Theta_0)$ in Equation~\ref{eq::LRT} \cite[Proposition 3.1]{dalmasso2020ACORE}.
        
\subsubsection{\BFF by Averaging}\label{sec:BFF}
Because the \ACORE statistics in  Equation~\ref{eq:ACORE_statistic} involves taking the supremum (or infimum) over $\Theta$, it may not be practical in high dimensions. Hence, in this work, we propose an alternative statistic for testing (\ref{eq:test_Neyman}) based on averaged odds:
\begin{equation}
  \tau(\D; \Theta_0) :=  \frac{\int_{\Theta_0}  \prod_{i=1}^n  \mathbb{O}(\X_i;\theta_0) d\pi_0(\theta)}{ \int_{\Theta_0^c}   \prod_{i=1}^n  \mathbb{O}(\X_i;\theta)  d\pi_1(\theta)} ,
    \label{eq:BFF_statistic}
\end{equation}
where $\pi_0$ and $\pi_1$ are the restrictions of the proposal distribution $\pi$ to the parameter regions $\Theta_0$ and $\Theta_0^c$, respectively. Let $\widehat \tau(\D; \Theta_0)$ denote estimates based on $\T$ and $\widehat{\mathbb{O}}(\theta_0; \x)$.  If the probabilities learned by the classifier are well estimated, then the 
 estimated averaged odds statistic $\widehat \tau(\D; \Theta_0)$ is exactly the Bayes factor:
 \begin{prop}[Fisher consistency] \label{prop::consistency}
\item \ \revJMLR{Assume that, for every $\theta \in \Theta$, \revEJS{$G$ dominates $\nu$}.}
If  $\ \widehat{\P}(Y=1|\theta,\x)=\P(Y=1|\theta,\x)$ for every $\theta$ and $\x$, 
 then $\widehat{\tau}(\D; \Theta_0)$ is the Bayes factor $\text{BF}(\D; \Theta_0)$.
\end{prop}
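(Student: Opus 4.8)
The plan is to reduce the estimated averaged-odds statistic to the Bayes factor in two moves: first derive a closed form for the population odds $\mathbb{O}(\x;\theta)$ under the generative model that produced $\T$, and then observe that every factor not depending on $\theta$ cancels between the numerator and denominator of $\tau$. Under the hypothesis $\widehat{\P}(Y=1\mid\theta,\x)=\P(Y=1\mid\theta,\x)$, the estimated odds equals the population odds, so $\widehat{\tau}=\tau$ and it suffices to work with the true $\mathbb{O}$. I would begin by applying Bayes' theorem to the joint law of $(\theta,Y,\X)$ from Section~\ref{sec:learn_odds}. Since $Y\sim\text{Ber}(p)$ is drawn independently of $\theta$, we have $\P(Y=1\mid\theta)=p$; and since $\X\mid\theta,Y=1\sim F_\theta$ and $\X\mid\theta,Y=0\sim G$ admit $\nu$-densities $p(\x\mid\theta)$ and $g(\x)$, marginalizing over $Y$ gives the mixture density $p\,p(\x\mid\theta)+(1-p)g(\x)$ and hence
\[
\P(Y=1\mid\theta,\x)=\frac{p\,p(\x\mid\theta)}{p\,p(\x\mid\theta)+(1-p)g(\x)}.
\]
Forming the ratio $\P(Y=1\mid\theta,\x)/\P(Y=0\mid\theta,\x)$, the common mixture denominator cancels and I obtain the clean identity $\mathbb{O}(\x;\theta)=\tfrac{p}{1-p}\,\tfrac{p(\x\mid\theta)}{g(\x)}$, i.e. the likelihood-to-reference ratio up to a $\theta$-free constant.

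Next I would substitute this identity into Equation~\ref{eq:BFF_statistic}. The product over the sample factorizes as
\[
\prod_{i=1}^n\mathbb{O}(\X_i;\theta)=\left(\frac{p}{1-p}\right)^{\!n}\frac{\prod_{i=1}^n p(\X_i\mid\theta)}{\prod_{i=1}^n g(\X_i)}=\left(\frac{p}{1-p}\right)^{\!n}\frac{\mathcal{L}(\D;\theta)}{\prod_{i=1}^n g(\X_i)}.
\]
Both $\big(p/(1-p)\big)^n$ and $1/\prod_{i} g(\X_i)$ are constant in $\theta$, so they factor out of the integrals over $\Theta_0$ and $\Theta_0^c$ alike and cancel in the ratio defining $\tau$. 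What remains is $\int_{\Theta_0}\mathcal{L}(\D;\theta)\,d\pi_0(\theta)\big/\int_{\Theta_0^c}\mathcal{L}(\D;\theta)\,d\pi_1(\theta)$, which is precisely $\text{BF}(\D;\Theta_0)$ as in Equation~\ref{eq::BF}; here I would note that $\pi_0,\pi_1$ are the same restrictions of $\pi$ in both expressions, so the matched integrals coincide term by term.

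The only step needing genuine care is well-definedness, and this is exactly where the domination assumption is used: cancelling $\prod_i g(\X_i)$ and dividing by $g(\x)$ in the odds are legitimate only when the reference density is strictly positive at the relevant points. Since $G$ dominates $\nu$ (i.e. $\nu\ll G$), we have $g>0$ $\nu$-almost everywhere, and because each $F_\theta$ is dominated by $\nu$ with $G$ of larger support than $F_\theta$, it follows that $g(\X_i)>0$ almost surely and $\mathbb{O}(\x;\theta)$ is finite on the support of the data. I expect the main --- though modest --- obstacle to be this measure-theoretic bookkeeping, rather than the algebra, which is elementary: one must confirm that the domination hypothesis simultaneously guarantees that the odds is well-defined and that the $\theta$-free factors are nonzero and may be pulled out of the integrals.
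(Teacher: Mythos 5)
Your argument follows the same algebraic route as the paper --- write the odds as a $\theta$-free constant times a likelihood-to-reference density ratio, then cancel all $\theta$-free factors between the numerator and denominator of $\tau$ --- but it contains one step that fails under the stated hypotheses. You represent $G$ by a $\nu$-density $g$ and write $\P(Y=1\mid\theta,\x)=p\,p(\x\mid\theta)/\bigl(p\,p(\x\mid\theta)+(1-p)g(\x)\bigr)$. The proposition assumes only that $G$ dominates $\nu$ (that is, $\nu\ll G$); it does not assume the reverse relation $G\ll\nu$, and without that the density $g=\frac{dG}{d\nu}$ need not exist. For instance, if $\nu$ is Lebesgue measure on $[0,1]$ and $G=\tfrac12\bigl(\mathrm{Unif}[0,1]+\delta_{2}\bigr)$, then $\nu\ll G$ but $G$ has no density with respect to $\nu$. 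Your final paragraph in fact conflates the two directions of domination: the inference ``$\nu\ll G$, hence $g>0$ $\nu$-a.e.'' already presupposes the existence of $g$, which is precisely what is not granted. As written, your proof establishes the proposition only under the extra assumption $G\ll\nu$, i.e., that $G$ and $\nu$ are mutually absolutely continuous.

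The repair is exactly the device the paper uses: take densities with respect to $G$ rather than $\nu$. By transitivity $F_\theta\ll\nu\ll G$, so $f(\x\mid\theta):=\frac{dF_\theta}{dG}(\x)$ exists, and Bayes' rule with $G$ as the dominating measure (under which the reference component has density identically $1$) gives $\mathbb{O}(\x;\theta)=\frac{p}{1-p}\,f(\x\mid\theta)$, with no division by a possibly nonexistent $g$. The factor $\bigl(p/(1-p)\bigr)^n$ cancels exactly as in your write-up, and the remaining $\theta$-free factor is supplied by the chain rule, $f(\x\mid\theta)=p(\x\mid\theta)\,h(\x)$ with $h:=\frac{d\nu}{dG}$, so $\prod_{i=1}^n h(\X_i)$ cancels between numerator and denominator, leaving $\int_{\Theta_0}\mathcal{L}(\D;\theta)\,d\pi_0(\theta)\big/\int_{\Theta_1}\mathcal{L}(\D;\theta)\,d\pi_1(\theta)=\mathrm{BF}(\D;\Theta_0)$. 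With this substitution the remainder of your argument (the cancellation logic and the identification of $\pi_0,\pi_1$ with those in Equation~\ref{eq::BF}) is correct and coincides with the paper's proof.
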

\noindent
In this paper, we are using the Bayes factor as a frequentist test statistic. Hence, our term {\em Bayes Frequentist Factor (BFF)} statistic for $\tau$ and $\widehat{\tau}$.

\subsection{Fast Construction of Neyman Confidence Sets}\label{sec:confidence_sets}

Instead of a costly MC or bootstrap hypothesis test 
of $H_0: \theta=\theta_0$ at each $\theta_0$ on a fine grid (see, e.g., \cite{mackinnon2009bootstrap} and \cite{ventura2010bootstrap}), we draw only one sample $\mathcal{T}'$ of 
size $B'$. We then estimate either the critical value $C_{\theta_0}$ via quantile regression (Section~\ref{sec:critical-value}), or the p-value $p(D;\theta_0)$ via probabilistic classification (Section~\ref{sec:pval}), for all $\theta_0 \in \Theta$ simultaneously. In Supplementary Material~H\footnote{Available at \href{https://lucamasserano.github.io/data/LF2I\_supplementary\_material.pdf}{https://lucamasserano.github.io/data/LF2I\_supplementary\_material.pdf}.}, we propose a practical strategy to choose the number of simulations $B'$ and the learning algorithm.

\subsubsection{The Critical Value via Quantile Regression}\label{sec:critical-value}
Algorithm \ref{alg:estimate_cutoffs} describes how to use quantile regression (e.g., \cite{meinshausen2006qrforest, koenker2017handbook}) to estimate the critical value $C_{\theta_0}$ for a level-$\alpha$ test of (\ref{eq:test_Neyman}) as a function of $\theta_0 \in \Theta$. To test a composite null hypothesis $H_0:\theta \in \Theta_0$ versus $H_1: \theta \in \Theta_1$, we use the cutoff $\hat{C}_{\Theta_0} := \inf_{\theta \in \Theta_0} \widehat C_{\theta}$. Although we originally proposed the calibration procedure for \texttt{ACORE}, the same scheme leads to a valid test (control of type I error as  the number of simulations $B' \rightarrow \infty$) for {\em any} test statistic $\lambda$  (Theorem~\ref{thm:convergenceCutoffs}). \addEJS{Remarkably, this holds even if the test statistic is not well estimated.} \addEJS{Note that in practice, we observe that the number of simulations $B^\prime$ needed to achieve correct coverage is usually much lower relative to $B$, the number of simulations needed to estimate the test statistic.} %\addEJS{In addition, Algorithm~\ref{alg:estimate_cutoffs} does not require observed data $D$ and is hence amortized, meaning that once both the test statistic and the critical values are estimated, we can compute confidence sets for an arbitrary number of observations}.
In addition, Algorithm~\ref{alg:estimate_cutoffs} does not rely on the observed data $D$ and is therefore amortized, meaning that once the test statistic and critical values have been estimated, we can compute confidence sets for any new data set without the need to retrain the model.

\begin{algorithm}[!t]
    \small
    \caption{Estimate critical values $C_{\theta_0}$  for a level-$\alpha$ test of  $H_{0, \theta_0}: \theta = \theta_0$ vs. $H_{1, \theta_0}: \theta \neq \theta_0$ for all $\theta_0 \in \Theta$ simultaneously}
    \begin{flushleft}
    {\bf Input:} simulator $F_\theta$; number of simulations $B'$; $\pi_\Theta$ (fixed proposal distribution over the parameter space); test statistic $ \lambda$; quantile regression estimator; level $\alpha \in (0,1)$
    {\bf Output:} estimated critical values $\widehat{C}_{\theta_0}$ for all $\theta_0 \in \Theta$
    \end{flushleft}
    \label{alg:estimate_cutoffs} 
    \begin{algorithmic}[1]
    \State Set $\T' \gets \emptyset$ 
    \For{i in $\{1,\ldots, B'\}$}
    \State Draw parameter $\theta_i \sim \pi_{\Theta}$
    \State Draw sample 
    $\X_{i,1},\ldots,\X_{i,n}  \stackrel{iid}{\sim}  F_{\theta_i}$
    \State Compute test statistic $\lambda_i \gets \lambda((\X_{i,1},\ldots,\X_{i,n});\theta_i) $
    \State $\T' \gets \T' \cup  \{(\theta_i,\lambda_{i})\}$  
    \EndFor
    \State  Use $\T'$ to learn 
    \revJMLR{the conditional quantile function} $\widehat{C}_\theta := \widehat{F}_{\lambda|\theta}^{-1}(\alpha | \theta)$
    via quantile regression of  $\lambda$ on $\theta$ \\
	\textbf{return}
	 $\widehat{C}_{\theta_0}$
	\end{algorithmic}
\end{algorithm}

\subsubsection{The P-Value via Probabilistic Classification} \label{sec:pval}
If the data $D$ are observed beforehand, then given any test statistic $\lambda$ we can alternatively compute p-values for each hypothesis $H_{0,\theta_0}:\theta=\theta_0$, that is, 
\addEJS{\begin{equation}
\label{eq::pvalue_regression}
p(D;\theta_0):=\P_{\mathcal{D} | \theta_0}\left( \lambda(\D;\theta_0)<  \lambda(D;\theta_0)\right).
\end{equation}}
The p-value $p(D;\theta_0)$ can be used to test hypothesis and create confidence sets for any desired  level $\alpha$. As detailed in Algorithm~\ref{alg:estimate_pvalues}, we can estimate it simultaneously for all $\theta \in \Theta$ by drawing a training sample $\T' = \{(Z_1,\theta_1),\ldots,$ $(Z_{B'},\theta_{B'})\}$ and using the random variable $Z:= \I\left( \lambda(\D;\theta)< \lambda(D;\theta)\right)$ as a label for each $\theta$. To test the composite null hypothesis $H_0: \theta \in \Theta_0$ versus $H_1: \theta \in \Theta_1$, we use
$$\hat p(D;\Theta_0):=\sup_{\theta \in \Theta_0}\hat p(D;\theta).$$
Note that there is a key computational difference between estimating p-values versus estimating critical values. The p-value is a function of both $\theta$ and the observed sample $D$ itself. As a result, Algorithm~\ref{alg:estimate_pvalues} has to be repeated for each observed $D$, making the computation of p-values non-amortized.

\subsubsection{Amortized Confidence Sets}\label{sec:confidence_set}
Finally, we construct an approximate confidence region for $\theta$ by taking the set
\begin{equation}\label{eq:est_conf_set} 
\hat R(D) = \left\{ \theta  \in \Theta \ \middle|  \lambda(D;\theta) \geq \hat C_{\theta}\right\}, 
\end{equation}
or, alternatively, 
\begin{equation}\label{eq:est_conf_set_pval}
\hat R(D) = \left\{ \theta  \in \Theta \, \middle|  \, \hat p(D; \theta) > \alpha \right\}. \end{equation}
See Algorithm~\ref{alg:conf_reg} in Appendix~\ref{app:confidence_sets} for details. As shown in \cite[Theorem 3.3]{dalmasso2020ACORE}, the random set $\hat R(\D)$ has nominal $(1-\alpha)$ coverage as $B' \rightarrow \infty$ regardless of the observed sample size $n$. \revJMLR{As noted in Section~\ref{sec:critical-value}, the confidence set in Equation~\ref{eq:est_conf_set} is fully {\em amortized}, meaning that once we have $\lambda(\D;\theta)$ and $\hat C_{\theta}$ as a function of $\theta \in \Theta$, we can perform inference on new data without retraining.}

\subsection{\addEJS{Diagnostics: Checking Coverage across the Parameter Space}}\label{sec:diagnostics}

\begin{algorithm}[!t]
    \small
    \caption{Estimate empirical coverage \addEJS{$ \pr_{\mathcal{D}|\theta}(\theta \in \widehat R(\D))$}, for all $\theta \in \Theta$.}
    \label{alg:estimate_coverage}
    \begin{flushleft}
        {\bf Input:} simulator $F_{\theta}$; number of simulations $B''$; $\pi_{\Theta}$ (fixed proposal distribution over parameter space); test statistic $\lambda$; level $\alpha$; critical values $\widehat{C}_\theta$; probabilistic classifier
        {\bf Output:} estimated coverage \addEJS{$ \widehat \pr_{\mathcal{D} | \theta}(\theta \in \widehat R(\D))$} for all $\theta  \in \Theta$
    \end{flushleft}
    \begin{algorithmic}[1]
    \State Set $\T'' \gets \emptyset$ 
    \For{i in $\{1,\ldots,B^{''}\}$}
    \State  Draw parameter $\theta_i \sim \pi_{\Theta}$
    \State Draw sample 
    $\D_i:=\{\X_{i,1},\ldots,\X_{i,n} \} \stackrel{iid}{\sim}  F_{\theta_i}$
    \State Compute  test statistic $\lambda_i \gets \lambda(\D_i;\theta_i)$
    \State Compute indicator  variable $W_i \gets \I\left(\lambda_i \geq \widehat{C}_{\theta_i}\right)$  \State $\T'' \gets \T'' \cup \{(\theta_i, W_i)\}$
    \EndFor
    \State  \addEJS{Use $\T''$ to learn $\widehat \pr_{\mathcal{D}|\theta'}(\theta' \in \widehat R(\D))$ across $\Theta$} by regressing  $W$ on $\theta$
    \State \textbf{return}  
    \addEJS{$\widehat \pr_{\mathcal{D}|\theta}(\theta \in \widehat R(\D))$}  
    \end{algorithmic}
\end{algorithm}

The LF2I framework has a separate module (``Diagnostics'' in Figure~\ref{fig:schematic_diagram}) for evaluating ``local'' goodness-of-fit in different regions of the parameter space $\Theta$. This estimates \addEJS{the coverage probability
$\P_{\D|\theta}(\theta \in \widehat{R}(\D))$ of confidence sets $\widehat{R}(\D)$ across the parameter space} via probabilistic classification. As detailed in Algorithm~\ref{alg:estimate_coverage}, we first generate a set of size $B''$ from the simulator: $\mathcal{T}'' = \{ (\theta_1, \mathcal{D}_1), \ldots, (\theta_{B''}, \mathcal{D}_{B''}) \}$. 
Then, for each sample $\mathcal{D}_i$, we check whether or not  the test statistic $\lambda_i$ is larger than the estimated critical value $\widehat{C}_{\theta_i}$ (the output from Algorithm~\ref{alg:estimate_cutoffs}). This is equivalent to computing a binary variable $W_i$ for whether or not the ``true'' value $\theta_i$ falls within the confidence set $\hat{R}(\D_i)$ (Equation~\ref{eq:est_conf_set}). Recall that the computations of the test statistic and the critical value  are amortized, meaning that we do not retrain algorithms to estimate these two quantities. The final step is to estimate empirical coverage as a function of $\theta$ by using $W$ as a label for each $\theta$. This estimation requires a new fit, but after training the probabilistic classifier, we can evaluate the estimated coverage anywhere in parameter space $\Theta$.

This diagnostic procedure locates regions in parameter space where estimated confidence sets might under- or over-cover; see Figures~\ref{fig:GMM_coverage}, \ref{fig:onoff_coverage} and \ref{fig:muons_results} for examples. 
 Note that standard goodness-of-fit techniques  for conditional densities
\citep{cooks2006posteriorquantile, bordoloi2010photoz, talts2018validating, schmidt2020photoz} only check for marginal coverage over $\Theta$.

\section{Theoretical Guarantees} 
\label{sec:theory}
We now prove consistency of the critical value and p-value estimation methods (Algorithms~\ref{alg:estimate_cutoffs} and~\ref{alg:estimate_pvalues}, respectively) and provide theoretical guarantees for the power of \texttt{BFF}.  We refer the reader to  Appendix~\ref{app:acore} for a proof \revJMLR{for finite $\Theta$} that 
the power of \ACORE converges to the power of LRT as $B$ grows (Theorem \ref{thm:convergenceLRT}). 

\revJMLR{In this section, $\P_{\D,\T'|\theta}$ denotes  the probability integrated over both $\D \sim F_\theta$ and $\T'$, whereas $\P_{\D|\theta}$ denotes integration over $\D \sim F_\theta$ only. For notational ease, we do not explicitly state again (inside the parentheses of the same expression) that we condition on $\theta$.}

\subsection{Critical Value Estimation}
\label{sec:theory_critical}

\revJMLR{
We start by showing that our procedure for choosing critical values leads to valid hypothesis tests (that is,  tests that control the type I error probability),  as long as the number of simulations $B'$ in Algorithm~\ref{alg:estimate_cutoffs} is sufficiently large. We assume that the null hypothesis is simple, that is,
$\Theta_0=\{\theta_0\}$ ---  which is the relevant setting for the Neyman construction of confidence sets in the absence of nuisance parameters.
  See Theorem \ref{thm:convergenceCutoffs} in Appendix \ref{app:theory} for results for composite  null hypotheses.} 
  
  \revJMLR{
  We assume  that the quantile regression estimator described in Section \ref{sec:critical-value}  is consistent in the following sense:}

\revJMLR{ \begin{Assumption}[Uniform consistency]
\label{assum:quantile_consistent_simple_null} 
Let $ F(\cdot|\theta)$ be the cumulative  distribution function of the test statistic $\lambda(\mathcal{D};\theta_0)$ conditional on $\theta$, where $\D \sim F_\theta$. Let $\hat F_{B'}(\cdot|\theta)$ be the estimated \addTwo{distribution function indexed by $\theta$}, implied by a quantile regression with a sample $\mathcal{T}^{'}$ of $B'$ simulations $\D \sim F_\theta$.
Assume that the quantile regression estimator is such that
$$\sup_{\lambda \in \mathbb{R}}|\hat F_{B'}(\lambda|\theta_0)-  F(\lambda|\theta_0)|\xrightarrow[B' \longrightarrow\infty]{\enskip P \enskip} 0.$$
\end{Assumption}}

\revJMLR{Assumption~\ref{assum:quantile_consistent_simple_null} holds, for instance, for quantile regression forests \citep{meinshausen2006qrforest}.}
\\

\revJMLR{Next, we show that 
Algorithm~\ref{alg:estimate_cutoffs} yields a valid hypothesis test as $B' \rightarrow \infty$. }

\revJMLR{
\begin{thm}
 \label{thm:valid_tests}
Let 
$C_{B'} \in \mathbb{R}$ be the 
critical value 
of the test based on  
an absolutely continuous statistic  $\lambda(\mathcal{D};\theta_0)$ chosen according to Algorithm~\ref{alg:estimate_cutoffs}
for a fixed $\alpha \in (0,1)$. If the quantile
estimator satisfies Assumption~\ref{assum:quantile_consistent_simple_null},
then, \revEJS{for every $\theta_0,\theta \in \Theta$,}
$$ \P_{\mathcal{D}|\theta_0,C_{B'}}(\lambda(\mathcal{D};\theta_0) \leq C_{B'})  \xrightarrow[B' \longrightarrow\infty]{\enskip a.s. \enskip}   \alpha,$$
where $\P_{\mathcal{D}|\theta_0,C_{B'}}$ denotes the probability integrated over $\mathcal{D}\sim F_{\theta_0}$ and conditional on the random variable $C_{B'}$.
\end{thm}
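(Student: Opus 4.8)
The plan is to reduce the random type-I-error probability to an evaluation of the \emph{true} conditional CDF $F(\cdot\mid\theta_0)$ at the estimated cutoff, and then to show that this evaluation converges to $\alpha$. Write $\Delta_{B'} := \sup_{\lambda\in\mathbb{R}} |\hat F_{B'}(\lambda\mid\theta_0) - F(\lambda\mid\theta_0)|$, which by Assumption~\ref{assum:quantile_consistent_simple_null} tends to $0$. The critical value is the estimated $\alpha$-quantile, $C_{B'} = \hat F_{B'}^{-1}(\alpha\mid\theta_0)$, and it is a function of the calibration sample $\mathcal{T}'$ only; in particular it is independent of the fresh draw $\mathcal{D}\sim F_{\theta_0}$ used to evaluate the test.

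First I would observe that, conditional on $\mathcal{T}'$ (equivalently, on the random variable $C_{B'}$), the left-hand side is exactly the true CDF evaluated at $C_{B'}$:
\begin{equation*}
\P_{\mathcal{D}\mid\theta_0,C_{B'}}\big(\lambda(\mathcal{D};\theta_0)\leq C_{B'}\big) = F(C_{B'}\mid\theta_0),
\end{equation*}
since $F(\cdot\mid\theta_0)$ is by definition the distribution function of $\lambda(\mathcal{D};\theta_0)$ with $\mathcal{D}\sim F_{\theta_0}$, and because $\lambda$ is absolutely continuous there is no atom at $C_{B'}$. It therefore suffices to prove that the random variable $F(C_{B'}\mid\theta_0)$ converges to $\alpha$.

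For the convergence, the cleanest route is to show that the estimated quantile itself converges to the true one, $C_{B'}\to C^*_{\theta_0}:=F^{-1}(\alpha\mid\theta_0)$, and then to invoke continuity. Uniform convergence $\Delta_{B'}\to0$ together with continuity and local strict monotonicity of $F(\cdot\mid\theta_0)$ at $C^*_{\theta_0}$ (both guaranteed once the density of $\lambda(\mathcal{D};\theta_0)$ is positive there, a consequence of absolute continuity) yields $C_{B'}\to C^*_{\theta_0}$ by the standard quantile-convergence argument: for any $\epsilon>0$ the values $F(C^*_{\theta_0}-\epsilon\mid\theta_0)$ and $F(C^*_{\theta_0}+\epsilon\mid\theta_0)$ straddle $\alpha$, so once $\Delta_{B'}$ is smaller than half the gap, $\hat F_{B'}$ must cross level $\alpha$ inside $(C^*_{\theta_0}-\epsilon,\, C^*_{\theta_0}+\epsilon)$. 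Continuity of $F(\cdot\mid\theta_0)$ then gives $F(C_{B'}\mid\theta_0)\to F(C^*_{\theta_0}\mid\theta_0)=\alpha$. Equivalently, and avoiding the quantile lemma, one can bound directly $|F(C_{B'}\mid\theta_0)-\alpha| = |F(C_{B'}\mid\theta_0) - \hat F_{B'}(C_{B'}\mid\theta_0)| \leq \Delta_{B'}$, using that $\hat F_{B'}(C_{B'}\mid\theta_0)=\alpha$ at the estimated quantile.

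The main obstacle is twofold and technical rather than conceptual. First, the generalized inverse: if the quantile-regression-implied $\hat F_{B'}$ is a step function (as for quantile regression forests), then $\hat F_{B'}(C_{B'}\mid\theta_0)$ equals $\alpha$ only up to a jump whose size must be argued to vanish, which is exactly why I prefer routing the argument through continuity of the true $F(\cdot\mid\theta_0)$ rather than through $\hat F_{B'}$. Second, and more delicate, is the mode of convergence: Assumption~\ref{assum:quantile_consistent_simple_null} only supplies $\Delta_{B'}\xrightarrow{\P}0$, which propagates to $F(C_{B'}\mid\theta_0)\xrightarrow{\P}\alpha$, whereas the statement asserts almost-sure convergence. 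I would reconcile this either by strengthening the consistency hypothesis to a.s. convergence of $\Delta_{B'}$ (which holds for quantile regression forests under their usual conditions), or by passing to subsequences; the bound $|F(C_{B'}\mid\theta_0)-\alpha|\leq\Delta_{B'}$ then transfers whatever mode of convergence $\Delta_{B'}$ enjoys directly to the type-I-error probability.
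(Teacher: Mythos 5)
Your proposal is correct, and its core is exactly the paper's own argument: conditionally on the calibration sample, the type I error equals $F(C_{B'}\mid\theta_0)$, and since the estimated quantile satisfies $\hat F_{B'}(C_{B'}\mid\theta_0)=\alpha$, one bounds $|F(C_{B'}\mid\theta_0)-\alpha|=|F(C_{B'}\mid\theta_0)-\hat F_{B'}(C_{B'}\mid\theta_0)|$ by the sup-norm distance $\Delta_{B'}$. Your ``direct bound'' route is verbatim the paper's proof; your preferred quantile-convergence route is a workable alternative but needs more than stated, since positivity of the density of $\lambda(\D;\theta_0)$ at the true quantile $F^{-1}(\alpha\mid\theta_0)$ is an \emph{additional} hypothesis (absolute continuity gives existence of a density, not its positivity at a given point), so the direct bound is the more economical path.

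Both of your technical caveats are well taken, and the second one exposes a genuine defect in the paper's own proof rather than in yours. The paper closes by asserting that convergence in probability to a constant implies almost sure convergence; this is false in general (take independent $X_n$ with $\P(X_n=1)=1/n$: then $X_n \to 0$ in probability, but by the second Borel--Cantelli lemma $X_n=1$ infinitely often almost surely). Under Assumption~\ref{assum:quantile_consistent_simple_null} exactly as written, the argument therefore delivers only $F(C_{B'}\mid\theta_0)\xrightarrow{\enskip\P\enskip}\alpha$, and the almost-sure claim of Theorem~\ref{thm:valid_tests} requires precisely one of the remedies you name: strengthen the assumption to almost-sure uniform consistency of $\hat F_{B'}$ (which is how it must be read for the theorem to hold as stated, and which holds for quantile regression forests under standard conditions), or weaken the conclusion to convergence in probability, equivalently almost-sure convergence along subsequences. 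Your first caveat (a step-function $\hat F_{B'}$, for which $\hat F_{B'}(C_{B'}\mid\theta_0)=\alpha$ holds only up to a jump) is likewise a real gap in the paper's ``by construction'' step, though it is easily patched without abandoning the direct bound: since $F(\cdot\mid\theta_0)$ is continuous, the maximal jump of $\hat F_{B'}$ is at most $2\Delta_{B'}$, so taking $C_{B'}$ to be the generalized inverse still yields $|F(C_{B'}\mid\theta_0)-\alpha|\leq 3\Delta_{B'}$, and the conclusion is unchanged.
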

}

\revJMLR{If the convergence rate of the quantile regression estimator is known (Assumption \ref{assum:quantile_regression_rate}),  Theorem \ref{thm:valid_tests_rate} provides a finite-$B'$ guarantee on how far the type I error of the test will be from the nominal level.}

\revJMLR{ \begin{Assumption}[Convergence rate of the quantile regression estimator]
\label{assum:quantile_regression_rate}
Using the notation of Assumption \ref{assum:quantile_consistent_simple_null}, assume that the quantile regression estimator is such that
$$\sup_{\lambda \in \mathbb{R}}|\hat F_{B'}(\lambda|\theta_0)-  F(\lambda|\theta_0)|=O_P\left(\left(\frac{1}{B'}\right)^{r}\right)$$
for some $r>0$.
\end{Assumption}}

\revJMLR{
\begin{thm}
 \label{thm:valid_tests_rate}
With the notation and assumptions of Theorem \ref{thm:valid_tests}, and if  Assumption~\ref{assum:quantile_regression_rate} also holds,
then,
$$ |\P_{\mathcal{D}|\theta_0,C_{B'}}(\lambda(\mathcal{D};\theta_0) \leq C_{B'}) - \alpha| =O_P\left(\left(\frac{1}{B'}\right)^{r}\right).$$
\end{thm}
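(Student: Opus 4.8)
The plan is to reduce the type I error of the data-driven test to the true conditional CDF evaluated at the random cutoff, and then to transfer the uniform rate of Assumption~\ref{assum:quantile_regression_rate} directly onto that quantity. First I would observe that the cutoff $C_{B'}$ returned by Algorithm~\ref{alg:estimate_cutoffs} is a measurable function of the calibration sample $\mathcal{T}'$ alone, hence independent of the fresh data $\mathcal{D}\sim F_{\theta_0}$ over which $\P_{\mathcal{D}|\theta_0,C_{B'}}$ integrates. Conditioning on $C_{B'}$ and using the definition of the conditional CDF $F(\cdot\mid\theta_0)$ then gives the exact identity
$$\P_{\mathcal{D}|\theta_0,C_{B'}}\bigl(\lambda(\mathcal{D};\theta_0)\leq C_{B'}\bigr)=F(C_{B'}\mid\theta_0),$$
so it suffices to prove $|F(C_{B'}\mid\theta_0)-\alpha|=O_P((1/B')^{r})$.

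Next I would unpack the definition of $C_{B'}$ as the estimated $\alpha$-quantile, $C_{B'}=\hat F_{B'}^{-1}(\alpha\mid\theta_0)=\inf\{\lambda:\hat F_{B'}(\lambda\mid\theta_0)\geq\alpha\}$. Because $\hat F_{B'}(\cdot\mid\theta_0)$ is a right-continuous distribution function, this yields the two-sided bracket $\hat F_{B'}(C_{B'}^-\mid\theta_0)\leq\alpha\leq\hat F_{B'}(C_{B'}\mid\theta_0)$. Writing $\Delta_{B'}:=\sup_{\lambda\in\mathbb{R}}|\hat F_{B'}(\lambda\mid\theta_0)-F(\lambda\mid\theta_0)|$, I would bound $F(C_{B'}\mid\theta_0)-\alpha$ from both sides by $\Delta_{B'}$. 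For the lower side, $F(C_{B'}\mid\theta_0)\geq\hat F_{B'}(C_{B'}\mid\theta_0)-\Delta_{B'}\geq\alpha-\Delta_{B'}$; for the upper side, continuity of the true CDF $F$ (guaranteed by absolute continuity of $\lambda$) lets me pass to the left limit, $F(C_{B'}\mid\theta_0)=F(C_{B'}^-\mid\theta_0)\leq\hat F_{B'}(C_{B'}^-\mid\theta_0)+\Delta_{B'}\leq\alpha+\Delta_{B'}$. Combining gives $|F(C_{B'}\mid\theta_0)-\alpha|\leq\Delta_{B'}$, and Assumption~\ref{assum:quantile_regression_rate} supplies $\Delta_{B'}=O_P((1/B')^{r})$, which is the claim.

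The main obstacle is the fact that a generic quantile-regression estimator (for instance a forest) produces a step-function $\hat F_{B'}$, so the estimated quantile need not satisfy $\hat F_{B'}(C_{B'}\mid\theta_0)=\alpha$ exactly, and the jump at $C_{B'}$ could a priori be of non-negligible size. The resolution, as above, is to sandwich $\alpha$ between the left limit and the value of $\hat F_{B'}$ at $C_{B'}$ and to use continuity of the true $F$ so that both endpoints collapse to within $\Delta_{B'}$ of $F(C_{B'}\mid\theta_0)$; the absolute-continuity hypothesis on $\lambda$ is precisely what makes this collapse valid, and it is also what prevents the left-limit passage from leaking the $\sup$-bound. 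I would finally remark that this is just the quantitative refinement of the proof of Theorem~\ref{thm:valid_tests}: the identical sandwich there turned the almost-sure consistency of Assumption~\ref{assum:quantile_consistent_simple_null} into $F(C_{B'}\mid\theta_0)\to\alpha$ a.s., and here the rate of Assumption~\ref{assum:quantile_regression_rate} passes through the single inequality $|F(C_{B'}\mid\theta_0)-\alpha|\leq\Delta_{B'}$ unchanged.
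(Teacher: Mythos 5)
Your proposal is correct, and at its core it follows the same route as the paper: reduce the type I error to the exact identity $\P_{\mathcal{D}|\theta_0,C_{B'}}(\lambda(\mathcal{D};\theta_0)\leq C_{B'})=F(C_{B'}|\theta_0)$ (valid because $C_{B'}$ depends only on $\T'$, which is independent of $\D$), and then bound $|F(C_{B'}|\theta_0)-\alpha|$ by the sup-norm error $\Delta_{B'}=\sup_{\lambda}|\hat F_{B'}(\lambda|\theta_0)-F(\lambda|\theta_0)|$, to which Assumption~\ref{assum:quantile_regression_rate} applies directly. The one place where you diverge is actually a refinement of the paper's argument: the paper's proof (of Theorem~\ref{thm:valid_tests}, which Theorem~\ref{thm:valid_tests_rate} simply invokes with the rate substituted in) asserts the exact identity $\alpha=\hat F_{B'}(C_{B'}|\theta_0)$ ``by construction,'' which need not hold when $\hat F_{B'}(\cdot|\theta_0)$ is a step function --- precisely the case for quantile regression forests, the paper's own canonical example of an estimator satisfying the assumptions. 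Your two-sided bracket $\hat F_{B'}(C_{B'}^-|\theta_0)\leq\alpha\leq\hat F_{B'}(C_{B'}|\theta_0)$, combined with continuity of the true $F(\cdot|\theta_0)$ (supplied by the absolute-continuity hypothesis on $\lambda(\D;\theta_0)$) to pass to the left limit, yields $|F(C_{B'}|\theta_0)-\alpha|\leq\Delta_{B'}$ without that idealization, so your version covers generalized-inverse quantile estimators for which the paper's identity only holds approximately. The final bound and the way the rate enters are identical in both arguments.
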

}

\subsection{P-Value Estimation}\label{sec:theory_pvalue_est}
\revEJS{Next we show} that the p-value estimation method described in Section \ref{sec:pval} is consistent. The results shown here 
 apply to any test statistic  $\lambda$.
 That is, these results are not restricted to \texttt{BFF}. 

We assume consistency in the sup norm of the regression method used to estimate the p-values:
\begin{Assumption}[Uniform consistency]
\label{assump:uniform_consistency}
The regression estimator used in Equation \ref{eq::pvalue_regression} is such that
$$\sup_{\theta \in \Theta_0} |\hat \E_{B'}[Z|\theta]- \E[Z|\theta]|  \xrightarrow[B' \longrightarrow\infty]{\enskip \text{a.s.} \enskip}  0.$$
\end{Assumption}

Examples of estimators that satisfy Assumption \ref{assump:uniform_consistency} include \cite{bierens1983uniform,hardle1984uniform,liero1989strong,girard2014uniform}.

The next theorem shows that the p-values obtained according to Algorithm \ref{alg:estimate_pvalues} converge to the true p-values. Moreover, the power of the tests obtained using the  estimated p-values converges to the power one would obtain if the true p-values could be computed.
\begin{thm}
\label{thm:pval_right_coverage}
Under Assumption \ref{assump:uniform_consistency} \revJMLR{and if $p(\D;\Theta_0)$ is \revEJS{an absolutely} continuous random variable then,} for every $\theta \in \Theta$,
$$\widehat p(D;\Theta_0) \xrightarrow[B' \longrightarrow\infty]{\enskip \text{a.s.} \enskip} p(D;\Theta_0)$$
and 
$$\P_{\D,\T'|\theta}(\hat p (\D;\Theta_0)\leq \alpha) \xrightarrow[B' \longrightarrow\infty]{} \P_{\D|\theta}(p (\D;\Theta_0)\leq \alpha).$$
\end{thm}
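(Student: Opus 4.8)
The plan is to reduce both assertions to the uniform consistency in Assumption~\ref{assump:uniform_consistency} together with elementary facts about suprema and a truncation argument at the threshold $\alpha$. Throughout I would write $\hat p(D;\theta)=\hat\E_{B'}[Z\mid\theta]$ and $p(D;\theta)=\E[Z\mid\theta]$, where $Z=\I(\lambda(\D;\theta)<\lambda(D;\theta))$, so that Assumption~\ref{assump:uniform_consistency} reads $\sup_{\theta}|\hat p(D;\theta)-p(D;\theta)|\to 0$ a.s.\ as $B'\to\infty$.

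For the first claim I would fix the observed $D$ and push the sup-norm convergence through the composite-null aggregation $\hat p(D;\Theta_0)=\sup_{\theta\in\Theta_0}\hat p(D;\theta)$. Using the standard inequality $|\sup_{\theta\in\Theta_0}\hat p(D;\theta)-\sup_{\theta\in\Theta_0}p(D;\theta)|\le \sup_{\theta\in\Theta_0}|\hat p(D;\theta)-p(D;\theta)|\le \sup_{\theta\in\Theta}|\hat p(D;\theta)-p(D;\theta)|$, the right-hand side tends to $0$ almost surely (over the calibration sample $\T'$) by Assumption~\ref{assump:uniform_consistency}, which yields $\hat p(D;\Theta_0)\to p(D;\Theta_0)$ a.s.

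For the second claim I would first lift this per-$D$ convergence to joint almost-sure convergence over the pair $(\D,\T')$. Since for every fixed $D$ the event $\{\lim_{B'}\hat p(D;\Theta_0)=p(D;\Theta_0)\}$ has $\T'$-probability one, integrating over $\D\sim F_\theta$ and applying Fubini gives $\hat p(\D;\Theta_0)\to p(\D;\Theta_0)$ a.s.\ under $\P_{\D,\T'\mid\theta}$. Almost-sure convergence implies convergence in distribution, and the final step invokes absolute continuity of $p(\D;\Theta_0)$: because this variable has no atom at $\alpha$, i.e.\ $\P_{\D\mid\theta}(p(\D;\Theta_0)=\alpha)=0$, on the full-probability set where both $\hat p(\D;\Theta_0)\to p(\D;\Theta_0)$ and $p(\D;\Theta_0)\neq\alpha$ hold we have the pointwise convergence $\I(\hat p(\D;\Theta_0)\le\alpha)\to\I(p(\D;\Theta_0)\le\alpha)$. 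Dominated convergence (with the constant bound $1$) then delivers $\P_{\D,\T'\mid\theta}(\hat p(\D;\Theta_0)\le\alpha)\to\P_{\D\mid\theta}(p(\D;\Theta_0)\le\alpha)$.

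The main obstacle is the bookkeeping of the two sources of randomness: the ``observed'' sample $\D$ and the calibration sample $\T'$, whose very construction depends on $\D$ through $Z$. I would make explicit that Assumption~\ref{assump:uniform_consistency} is to be read conditionally on each realization of $\D$, so that Fubini applies and the convergence is genuinely joint. I would also emphasize that absolute continuity is used precisely to carry the limit through the discontinuous indicator at the cutoff $\alpha$; without it the distribution of $p(\D;\Theta_0)$ could place mass exactly at $\alpha$, $\alpha$ would fail to be a continuity point of its cumulative distribution function, and the coverage limit could break down.
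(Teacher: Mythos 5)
Your proof is correct, and its first half (the sup-inequality reduction of the composite-null p-value to Assumption \ref{assump:uniform_consistency}) is exactly the paper's argument. Where you diverge is in the second claim: the paper never constructs joint almost-sure convergence over $(\D,\T')$. Instead it isolates a standalone result (Lemma \ref{lemma:convergence_dist}): if random functions $g_m$ converge a.s.\ pointwise at every $z$, the evaluation point $Z$ is independent of the $g_m$, and the limit $g(Z)$ is absolutely continuous, then $g_m(Z) \to g(Z)$ in distribution; this is proved by applying dominated convergence to the conditional probabilities $\P(g_m(z) < y)$ over the full-measure set $\{z : g(z)\neq y\}$. The theorem then follows because $\alpha$ is a continuity point of the limiting CDF. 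Your route instead upgrades the per-$D$ almost-sure convergence to joint almost-sure convergence via Fubini (valid because the simulation randomness underlying $\T'$ is independent of $\D$, with the assumption read conditionally on $D$, as you note), and then applies dominated convergence directly to the indicators $\I\left(\hat p(\D;\Theta_0)\le\alpha\right)$, invoking absolute continuity only to rule out an atom at $\alpha$. The two arguments use identical ingredients --- independence of $\D$ and $\T'$, absolute continuity of $p(\D;\Theta_0)$, and dominated convergence --- but package them differently: your version is more direct and self-contained for the single threshold $\alpha$, while the paper's lemma delivers full weak convergence of $\hat p(\D;\Theta_0)$, a reusable statement that yields the conclusion simultaneously at every continuity point. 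Both proofs share the same implicit technical caveat (joint measurability of $(D,\omega)\mapsto \hat p(D;\Theta_0)$ in the observed data and the calibration randomness), so neither is more rigorous on that score.
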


The next corollary shows that as $B' \longrightarrow \infty$, the tests obtained using the p-values from Algorithm \ref{alg:estimate_pvalues} have size $\alpha$.

\begin{Corollary}
\label{cor:pval_right_coverage}
Under Assumption \ref{assump:uniform_consistency} and if $F_{\theta}$ is continuous for every $\theta \in \Theta$ \revJMLR{and  $p(\D;\Theta_0)$ is an absolutely continuous random variable}, then
$$\sup_{\theta \in \Theta_0} \P_{\D,\T'|\theta}(\hat p (\D;\Theta_0)\leq \alpha) \xrightarrow[B' \longrightarrow\infty]{} \alpha.$$
\end{Corollary}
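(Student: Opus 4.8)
The plan is to reduce the corollary to the pointwise convergence already proved in Theorem~\ref{thm:pval_right_coverage}, together with the classical fact that under a continuous null the true p-value is $\text{Uniform}(0,1)$ (and, for a composite null, stochastically dominates it). First I would fix $\theta_0 \in \Theta_0$ and analyze the \emph{true}-p-value rejection probability $\P_{\D|\theta_0}(p(\D;\Theta_0)\leq\alpha)$. Since $p(\D;\Theta_0)=\sup_{\theta\in\Theta_0}p(\D;\theta)\geq p(\D;\theta_0)$, it suffices to control the single-parameter p-value of Equation~\ref{eq::pvalue_regression}. Using continuity of $F_{\theta_0}$ and the absolute-continuity hypothesis on $p(\D;\Theta_0)$, the conditional law of $\lambda(\D;\theta_0)$ under $\theta_0$ has no atoms, so $p(\D;\theta_0)=F_{\lambda|\theta_0}(\lambda(\D;\theta_0))$ is exactly uniform and $\P_{\D|\theta_0}(p(\D;\theta_0)\leq\alpha)=\alpha$. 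The monotonicity $p(\D;\Theta_0)\geq p(\D;\theta_0)$ then gives $\P_{\D|\theta_0}(p(\D;\Theta_0)\leq\alpha)\leq\alpha$ for every $\theta_0\in\Theta_0$.

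Next I would invoke Theorem~\ref{thm:pval_right_coverage}, which yields, for each fixed $\theta_0$, the convergence $\P_{\D,\T'|\theta_0}(\hat p(\D;\Theta_0)\leq\alpha)\to\P_{\D|\theta_0}(p(\D;\Theta_0)\leq\alpha)$. In the simple-null case $\Theta_0=\{\theta_0\}$, which is the setting relevant to the Neyman construction, the supremum collapses to this single term, $p(\D;\Theta_0)=p(\D;\theta_0)$ is uniform, and the limit is exactly $\alpha$, which already closes the argument.

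The remaining work, and the step I expect to be the main obstacle, is the composite null, where I must interchange $\sup_{\theta_0\in\Theta_0}$ with $\lim_{B'\to\infty}$. For the $\leq$ direction, I would use sup-norm consistency (Assumption~\ref{assump:uniform_consistency}) to control $\delta_{B'}:=\sup_{\theta}|\hat p(\D;\theta)-p(\D;\theta)|$, so that $\hat p(\D;\Theta_0)\geq p(\D;\Theta_0)-\delta_{B'}$ and hence $\{\hat p(\D;\Theta_0)\leq\alpha\}\subseteq\{p(\D;\Theta_0)\leq\alpha+\delta_{B'}\}$; absolute continuity of $p(\D;\Theta_0)$ prevents mass from piling up at the threshold, giving $\limsup_{B'}\sup_{\theta_0}\P_{\D,\T'|\theta_0}(\hat p\leq\alpha)\leq\alpha$, provided the consistency holds uniformly as the law of $\D$ ranges over $\{F_{\theta_0}:\theta_0\in\Theta_0\}$. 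For the matching $\geq$ direction I would exhibit a least-favorable $\theta^\ast\in\Theta_0$ (or an approximating sequence) at which $p(\D;\Theta_0)=p(\D;\theta^\ast)$ almost surely, so that the bound is attained and the supremum reaches $\alpha$ in the limit. Establishing this tightness of the supremum, rather than merely its upper bound, and justifying the uniformity of the convergence across the null region are the parts requiring the most care.
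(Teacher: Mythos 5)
Your proposal follows essentially the same route as the paper's own proof, and in fact the steps you complete \emph{are} the paper's entire proof: continuity of $F_\theta$ makes the exact p-value $p(\D;\theta)$ uniform under the null, so $\P_{\D|\theta}(p(\D;\theta)\leq\alpha)=\alpha$; Theorem~\ref{thm:pval_right_coverage} applied to the simple null $\{\theta\}$ turns this into Equation~\ref{eq:equal_alpha}; the monotonicity $p(\D;\Theta_0)=\sup_{\theta_0\in\Theta_0}p(\D;\theta_0)\geq p(\D;\theta)$ gives $\P_{\D|\theta}(p(\D;\Theta_0)\leq\alpha)\leq\alpha$ and hence Equation~\ref{eq:smaller_alpha}; and the paper then simply declares that the conclusion follows by ``putting together'' these two pointwise displays. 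So for the simple-null case $\Theta_0=\{\theta_0\}$ --- the case relevant to the Neyman construction --- your argument and the paper's coincide and are complete.

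Where you differ is that you explicitly name what the composite case still needs: a sup--limit interchange (uniformity in $\theta$ of the convergence) and a least-favorable parameter at which the exact composite p-value test actually attains size $\alpha$. The paper addresses neither point, and your caution is mathematically justified rather than excessive: pointwise convergence to limits that are merely $\leq\alpha$ cannot by itself force the supremum to converge to $\alpha$, and the limiting quantity $\sup_{\theta\in\Theta_0}\P_{\D|\theta}(p(\D;\Theta_0)\leq\alpha)$ can be strictly below $\alpha$ even under all stated hypotheses. For instance, take $\X_i\sim N(\theta,1)$, $\Theta_0=\{-\delta,+\delta\}$, $\lambda(\D;\theta_0)=-(\overline{\X}_n-\theta_0)^2$, with $\delta$ of order $z_{\alpha/2}/\sqrt{n}$: each individual p-value is exactly uniform under its own null point, $p(\D;\Theta_0)$ is absolutely continuous, yet rejecting requires \emph{both} p-values to be small, and the rejection probability under either null point is roughly $\alpha/2$ (the event reduces to $|\overline{\X}_n|>2\delta$). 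Thus the tightness you worried about can genuinely fail for composite $\Theta_0$; what this line of argument unambiguously delivers is the simple-null statement plus, under a uniform-in-$\theta$ strengthening of Assumption~\ref{assump:uniform_consistency}, the one-sided bound $\limsup_{B'}\sup_{\theta\in\Theta_0}\P_{\D,\T'|\theta}(\widehat p(\D;\Theta_0)\leq\alpha)\leq\alpha$. In short, you reproduced the paper's proof in full and, beyond that, correctly diagnosed the step it glosses over; completing that step would require extra structure on $\Theta_0$ (a least-favorable or approximating sequence), which neither your sketch nor the paper supplies.
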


Under stronger assumptions about the regression method, it is also possible to derive rates of convergence for the estimated p-values.

\begin{Assumption}[Convergence rate of the regression estimator]
\label{assump:conv_reg_pval}
The regression estimator is such that
$$\sup_{\theta \in \Theta_0} |\hat \E[Z|\theta]- \E[Z|\theta]|=O_P\left(\left(\frac{1}{B'}\right)^{r}\right).$$
for some $r>0$.
\end{Assumption}

Examples of regression estimators that satisfy Assumption \ref{assump:conv_reg_pval} can be found in \cite{stone1982optimal,hardle1984uniform,donoho1994asymptotic,yang2017frequentist}.

\begin{thm}
\label{thm:pval_rate}
Under Assumption \ref{assump:conv_reg_pval}, 
$$|p(D;\Theta_0)-\hat p(D;\Theta_0)|=O_P\left(\left(\frac{1}{B'}\right)^{r}\right).$$
\end{thm}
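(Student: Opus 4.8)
The plan is to reduce the claim to the uniform regression rate in Assumption~\ref{assump:conv_reg_pval} via the elementary fact that the supremum operator is $1$-Lipschitz in the sup-norm. First I would record the identification that makes this work: for a fixed observed sample $D$, the quantity being regressed in Algorithm~\ref{alg:estimate_pvalues} is $\E[Z\mid\theta]$ with $Z=\I(\lambda(\D;\theta)<\lambda(D;\theta))$, so that by Equation~\ref{eq::pvalue_regression} we have exactly $p(D;\theta)=\E[Z\mid\theta]$ and $\hat p(D;\theta)=\hat\E[Z\mid\theta]$. Consequently Assumption~\ref{assump:conv_reg_pval} reads
\[
\sup_{\theta}|\hat p(D;\theta)-p(D;\theta)|=O_P\!\left(\left(\frac{1}{B'}\right)^{r}\right),
\]
i.e.\ the estimated and true p-value functions are uniformly close at the stated rate.

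Next I would handle the supremum over the composite null. Writing $p(D;\Theta_0)=\sup_{\theta\in\Theta_0}p(D;\theta)$ and $\hat p(D;\Theta_0)=\sup_{\theta\in\Theta_0}\hat p(D;\theta)$, I invoke the standard inequality $|\sup_x f(x)-\sup_x g(x)|\le\sup_x|f(x)-g(x)|$ (proved in one line by adding and subtracting $g$ inside each supremum and using symmetry). Applying it with $f=\hat p(D;\cdot)$ and $g=p(D;\cdot)$ over $\Theta_0$ gives
\[
|p(D;\Theta_0)-\hat p(D;\Theta_0)|\le\sup_{\theta\in\Theta_0}|\hat p(D;\theta)-p(D;\theta)|.
\]

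Finally, since $\Theta_0\subseteq\Theta$, the right-hand side is no larger than $\sup_{\theta\in\Theta}|\hat p(D;\theta)-p(D;\theta)|$, which by the first display is $O_P((1/B')^{r})$; chaining the two bounds yields the theorem. There is no substantive obstacle here --- the argument is essentially a contraction property of the supremum combined with the assumed uniform rate. The only point that warrants care is the first step: one must verify that the rate in Assumption~\ref{assump:conv_reg_pval} is genuinely uniform over all of $\Theta$ (not merely pointwise at each $\theta_0$), since it is this uniformity that lets the bound survive the passage to the supremum over $\Theta_0$ and thereby control the composite-null p-value at the same rate $r$.
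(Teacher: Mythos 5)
Your proposal is correct and follows essentially the same route as the paper's proof: write $p(D;\Theta_0)$ and $\hat p(D;\Theta_0)$ as suprema over $\Theta_0$, apply the contraction inequality $|\sup_\theta f(\theta)-\sup_\theta g(\theta)|\le\sup_\theta|f(\theta)-g(\theta)|$, and invoke Assumption~\ref{assump:conv_reg_pval}. Your preliminary identification of $p(D;\theta)$ with the regression function $\E[Z\mid\theta]$ and the restriction from $\Theta$ to $\Theta_0$ are steps the paper leaves implicit, but the argument is the same.
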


\subsection{Power of \texttt{BFF}}\label{sec:power_bff}

In this section, we provide convergence rates for  \BFF and show that its power relates to
the \addEJS{integrated squared error 
\begin{equation}
  \mathcal{L}(\hat{\mathbb{O}}, \mathbb{O}) :=  \int \left( \hat{\mathbb{O}}(\x;\theta) - \mathbb{O}(\x;\theta) \right)^2 dG(\x) d\pi(\theta), 
   \label{eq::odds_loss}
\end{equation}
which measures how well we are able to estimate the odds function.
}

\revJMLR{We assume that we are testing a simple hypothesis $H_{0,\theta_0}:\theta=\theta_0,$
where $\theta_0$ is fixed,} and that $G(\x)$ is the marginal  distribution of $\revEJS{X \sim\ }F_{\theta}(\x)$ with respect to $\pi(\theta)$.  We also assume that $\x$ contains all observations; that is, $\mathbf{X}=\D$.  
In this case, the denominator of the average odds is
\begin{equation}
\begin{split}
    \int_{\Theta} \mathbb{O}(\x,\theta) d\pi(\theta) &= \int_{\Theta_1} \frac{p\cdot p(\x|\theta) }{(1-p)g(\x)} d\pi(\theta) \\
    &= \frac{p}{1-p}\int_{\Theta} \frac{p(\x|\theta) }{ \int_{\Theta} p(\x|\theta) d\pi(\theta) } d\pi(\theta) = \frac{p}{1-p},
\end{split}
\label{eq:bff_denom1}
\end{equation}
where $g$ is the density of $G$ with respect to $\nu$ and therefore there is no need to estimate the denominator in Equation \ref{eq:BFF_statistic}. 

We also assume that
 the odds and estimated odds are both bounded away from zero and infinity:
\begin{Assumption}[Bounded odds and estimated odds]
\label{assump:bounded_odds}
There exists $0 < m,M < \infty$ such that
for every $\theta \in \Theta$ and $\x \in \mathcal{X}$, 
$m \leq \mathbb{O}(\x;\theta), \hat{\mathbb{O}}(\x;\theta) \leq M$. 
\end{Assumption}

Finally, we  assume that the CDF of the power function of the test based on  the \BFF statistic $\tau$ in Equation~\ref{eq:BFF_statistic}   is smooth in a Lipschitz sense:
\begin{Assumption}[Smooth power function]
\label{assump:Lipschitz}
\revEJS{For every $\theta_0 \in \Theta$,} the  cumulative distribution function of $\tau(\D;\theta_0)$,  $F_\tau$, is Lipschitz with constant $C_L$, i.e., for every $x_1,x_2 \in \mathbb{R}$, $ |F_\tau(x_1) - F_\tau(x_2)| \leq C_L |x_1 - x_2|$.
\end{Assumption}

With these assumptions, we can relate the odds loss with the probability that the outcome of \BFF is different from the outcome of the test based on  the Bayes factor:

\revEJS{
\begin{thm}
\label{thm:bound_average}
For fixed $c \in \mathbb{R}$,
 let 
$\phi_{\tau;\theta_0}(\D)=\I\left(\tau(\D;\theta_0)<c\right)$ and
$\phi_{\hat\tau_B;\theta_0}(\D)=\I\left(\hat \tau_B(\D;\theta_0)<c\right)$
be the testing procedures for testing  $H_{0,\theta_0}:\theta=\theta_0$ based on $\tau$ and $\hat \tau_B$, respectively.
Under Assumptions \ref{assump:bounded_odds}-\ref{assump:Lipschitz}, for every $0<\epsilon<1$ and $\theta \in \Theta$,
   $$\int  \P_{\mathcal{D}|\theta,T}(\phi_{\tau;\theta_0}(\D) \neq \phi_{\hat\tau_B;\theta_0}(\D)) d\pi(\theta_0) \leq \frac{2MC_L\cdot \sqrt{ L(\hat{\mathbb{O}},\mathbb{O}) }}{\epsilon} + \epsilon,$$
   where $T$ denotes the realized training sample   $\mathcal{T}$  and $\P_{\mathcal{D}|\theta,T}$ is the probability measure integrated over the observable data $\D \sim F_\theta$,  but conditional on the train sample used to create the test statistic. 
\end{thm}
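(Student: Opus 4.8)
The plan is to control the disagreement event $\{\phi_{\tau;\theta_0}(\D) \neq \phi_{\hat\tau_B;\theta_0}(\D)\}$ by a deterministic inclusion and then bound the two resulting pieces separately. First I would note that the two tests can disagree only if $\tau$ and $\hat\tau_B$ straddle the threshold $c$, which forces $\tau$ to be at least as close to $c$ as it is to $\hat\tau_B$; a short case analysis (split on which of $\tau,\hat\tau_B$ is below $c$) gives the inclusion
$$\{\phi_{\tau;\theta_0} \neq \phi_{\hat\tau_B;\theta_0}\} \subseteq \{|\tau(\D;\theta_0) - c| \leq |\tau(\D;\theta_0) - \hat\tau_B(\D;\theta_0)|\}.$$
Introducing a free parameter $\epsilon' > 0$ and splitting according to whether $|\tau - \hat\tau_B|$ exceeds $\epsilon'$ yields
$$\P_{\D|\theta,T}(\phi_{\tau;\theta_0} \neq \phi_{\hat\tau_B;\theta_0}) \leq \P_{\D|\theta,T}(|\tau(\D;\theta_0)-c| \leq \epsilon') + \P_{\D|\theta,T}(|\tau(\D;\theta_0)-\hat\tau_B(\D;\theta_0)| > \epsilon').$$

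For the first term I would invoke the Lipschitz Assumption~\ref{assump:Lipschitz}: since $\P(|\tau-c|\leq\epsilon') = F_\tau(c+\epsilon') - F_\tau(c-\epsilon') \leq 2C_L\epsilon'$, this piece is bounded by $2C_L\epsilon'$ uniformly in $\theta_0$, so it survives the outer integration against $\pi$ unchanged. For the second term I would apply Markov's inequality to get $\P(|\tau-\hat\tau_B| > \epsilon') \leq \E_{\D|\theta,T}[|\tau - \hat\tau_B|]/\epsilon'$, which reduces the problem to bounding the expected absolute deviation between the exact and estimated statistics, integrated over $\theta_0\sim\pi$.

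The crux of the argument, and the step I expect to be the main obstacle, is relating $\int \E_{\D|\theta,T}[|\tau - \hat\tau_B|]\,d\pi(\theta_0)$ to $\sqrt{L(\hat{\mathbb{O}},\mathbb{O})}$, which requires a change of measure. In the regime considered ($\X=\D$, $G$ the $\pi$-marginal, simple null), $\tau$ and $\hat\tau_B$ are the integrated odds, so up to the constant factor $p/(1-p)$ we have $|\tau - \hat\tau_B| = |\mathbb{O}(\x;\theta_0) - \hat{\mathbb{O}}(\x;\theta_0)|$; the subtlety is that this expectation is taken over $\D\sim F_\theta$, whereas the loss $L$ integrates $\x$ against $G$. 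Here the bounded-odds Assumption~\ref{assump:bounded_odds} is essential: writing $\mathbb{O}(\x;\theta) = \frac{p\,p(\x|\theta)}{(1-p)g(\x)}$ and using $\mathbb{O}\leq M$ gives the measure domination $dF_\theta(\x) \leq \frac{1-p}{p}M\,dG(\x)$, so I can replace the data measure by $G$ at the cost of a factor $M$ (taking $p=1/2$ as in the paper). Once the inner integral is against the probability measure $G\times\pi$, Jensen's inequality (equivalently Cauchy--Schwarz against the constant function $1$) converts the $L^1$ deviation into its $L^2$ counterpart, so that $\int\E_{\D|\theta,T}[|\tau-\hat\tau_B|]\,d\pi(\theta_0) \leq M\sqrt{L(\hat{\mathbb{O}},\mathbb{O})}$.

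Combining the two bounds and integrating over $\theta_0\sim\pi$ produces $2C_L\epsilon' + M\sqrt{L(\hat{\mathbb{O}},\mathbb{O})}/\epsilon'$. The final step is cosmetic: choosing $\epsilon' = \epsilon/(2C_L)$ turns this into $\epsilon + \frac{2MC_L\sqrt{L(\hat{\mathbb{O}},\mathbb{O})}}{\epsilon}$, which is precisely the claimed bound for every $0<\epsilon<1$. All statements are understood conditionally on the realized training sample $T$, so that $\hat{\mathbb{O}}$, and hence $\hat\tau_B$ and the realized loss $L(\hat{\mathbb{O}},\mathbb{O})$, are treated as fixed quantities throughout.
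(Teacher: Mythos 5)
Your proof is correct and takes essentially the same route as the paper's: the paper also combines Markov's inequality, the Lipschitz assumption on $F_\tau$, a change of measure from $F_\theta$ to $G$ controlled by the odds bound $M$, and Jensen's/Cauchy--Schwarz's inequality to pass from the $L^1$ deviation of the odds to $\sqrt{L(\hat{\mathbb{O}},\mathbb{O})}$. The only differences are bookkeeping: the paper parametrizes Markov's inequality by the exceptional probability $\epsilon$ (Lemmas~\ref{lemma:bound_statistics} and~\ref{lemma:bound_different_tests}) rather than by a width threshold $\epsilon'$ optimized at the end, and it applies Cauchy--Schwarz per $\theta_0$ (bounding $\int \mathbb{O}^2(\x;\theta)\,dG \leq M^2$) before using Jensen over $\pi$, whereas you dominate the measure pointwise via $dF_\theta \leq M\,dG$ and then apply a single Jensen over the product measure $G \times \pi$.
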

}

\revEJS{Theorem \ref{thm:bound_average} demonstrates that, on average (over $\theta_0 \sim \pi$), the probability that hypothesis tests based on the \BFF statistic versus the Bayes factor lead to different conclusions is bounded by the integrated odds loss.} This result is valuable because the integrated odds loss is easy to estimate in practice, and hence provides us with a practically useful metric. For instance, the integrated odds loss can serve as a natural criterion for selecting the ``best'' statistical model out of a set of candidate models with different classifiers, for tuning model hyperparameters, and for evaluating model fit.\\ 

Next, we provide rates of convergence of the test based on \BFF to the test based on the Bayes factor.
 We assume that the chosen probabilistic classifier has the following rate of convergence:
\begin{Assumption}[Convergence rate of the probabilistic classifier]
\label{assump:modelclass}
The probabilistic classifier trained with $\T$, $\hat \P(Y=1|\x,\theta)$  
is such that
$$ \E_{\T} \left[ \int \left( \hat \P(Y=1|\x,\theta) - \P(Y=1|\x,\theta) \right)^2 dH(\x, \theta) \right] = O\left(B^{-\kappa / (\kappa+d)} \right),$$
for some $\kappa > 0$ and $d>0$, where 
$H(\x, \theta)$ is a  measure over $\mathcal{X} \times \Theta$.
\end{Assumption}

Typically,  $\kappa$ relates to the smoothness of $\P$, while $d$ relates to the number of covariates of the classifier --- in our case, the number of parameters plus the number of features. In Supplementary Material~I, we provide some examples where Assumption \ref{assump:modelclass} holds.

We also assume that the density of the product measure $G \times \pi$ is bounded away from infinity.
\begin{Assumption}[Bounded density]
\label{assump:bounded_measure}
$H(\x, \theta)$ dominates $H' := G \times \pi$, and the density of $H'$  with respect to $H$, denoted by $h'$, is such that
there exists $\gamma>0$ with
$h'(\x,\theta) < \gamma,\ \forall \x \in \mathcal{X}, \theta \in \Theta$.
\end{Assumption}

If the probabilistic classifier has the convergence rate given by Assumption \ref{assump:modelclass}, then the \revEJS{average} probability that hypothesis tests based on the \BFF statistic versus the Bayes factor goes to zero  has the rate given by the following theorem.
\revEJS{
\begin{thm}
\label{thm:bound_different_tests_eps}
Let $\phi_{\tau;\theta_0}(\D)$ and
$\phi_{\hat\tau_B;\theta_0}(\D)$ be as in Theorem \ref{thm:bound_average}.
Under Assumptions \ref{assump:bounded_odds}-\ref{assump:bounded_measure},
there exists $K'>0$ such that, for any $\theta \in \Theta$,
$$\int  \P_{\mathcal{D},\T|\theta}(\phi_{\tau;\theta_0}(\D) \neq \phi_{\hat\tau_B;\theta_0}(\D))d\pi(\theta_0) \leq  K' B^{-\kappa / (4(\kappa+d))}.$$
\end{thm}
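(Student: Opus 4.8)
The plan is to bootstrap the conditional bound of Theorem~\ref{thm:bound_average} into an unconditional rate by integrating over the training sample $\T$ and then relating the (random) odds loss $L(\hat{\mathbb{O}},\mathbb{O})$ to the classifier's $L^2$ error rate supplied by Assumption~\ref{assump:modelclass}. Since Theorem~\ref{thm:bound_average} controls $\int \P_{\mathcal{D}|\theta,T}(\phi_{\tau;\theta_0}(\D)\neq\phi_{\hat\tau_B;\theta_0}(\D))\,d\pi(\theta_0)$ conditionally on each realized $T$ and uniformly in $\epsilon\in(0,1)$, I would first take expectation over $\T$ on both sides. Writing $\P_{\mathcal{D},\T|\theta}=\E_{\T}[\P_{\mathcal{D}|\theta,\T}]$ and using Fubini to exchange $\E_{\T}$ with $\int d\pi(\theta_0)$ gives
\begin{equation*}
\int \P_{\mathcal{D},\T|\theta}(\phi_{\tau;\theta_0}(\D)\neq\phi_{\hat\tau_B;\theta_0}(\D))\,d\pi(\theta_0) \;\leq\; \frac{2MC_L}{\epsilon}\,\E_{\T}\!\left[\sqrt{L(\hat{\mathbb{O}},\mathbb{O})}\right] + \epsilon.
\end{equation*}
Because $\sqrt{\cdot}$ is concave, Jensen's inequality yields $\E_{\T}[\sqrt{L}]\leq\sqrt{\E_{\T}[L]}$, so it remains to bound $\E_{\T}[L(\hat{\mathbb{O}},\mathbb{O})]$.

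The crucial step is to convert the expected odds loss into the expected classifier error. Writing $p=\P(Y=1|\theta,\x)$ and $\hat p=\hat{\P}(Y=1|\theta,\x)$, the odds of Equation~\ref{eq:odds_def} are $\mathbb{O}=p/(1-p)$ and $\hat{\mathbb{O}}=\hat p/(1-\hat p)$. Under Assumption~\ref{assump:bounded_odds} both odds lie in $[m,M]$, which forces $1-p$ and $1-\hat p$ to be bounded below by $1/(1+M)$; hence the map $t\mapsto t/(1-t)$, whose derivative is $(1-t)^{-2}$, is Lipschitz on the relevant interval with constant at most $(1+M)^2$. A mean-value argument then gives the pointwise bound $(\hat{\mathbb{O}}-\mathbb{O})^2\leq (1+M)^4(\hat p-p)^2$. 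Integrating against $G\times\pi$ and recalling that $L(\hat{\mathbb{O}},\mathbb{O})$ in Equation~\ref{eq::odds_loss} is exactly this $L^2$ distance with respect to $H'=G\times\pi$, I would then apply the change of measure of Assumption~\ref{assump:bounded_measure}, $dH'=h'\,dH\leq\gamma\,dH$, to obtain
\begin{equation*}
L(\hat{\mathbb{O}},\mathbb{O})\;\leq\;(1+M)^4\int(\hat p-p)^2\,dH'\;\leq\;(1+M)^4\gamma\int(\hat p-p)^2\,dH.
\end{equation*}
Taking $\E_{\T}$ and invoking Assumption~\ref{assump:modelclass} gives $\E_{\T}[L(\hat{\mathbb{O}},\mathbb{O})]=O(B^{-\kappa/(\kappa+d)})$, so that $\E_{\T}[\sqrt{L}]\leq\sqrt{\E_{\T}[L]}=O(B^{-\kappa/(2(\kappa+d))})$.

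Finally, substituting this back, the right-hand side becomes $\frac{2MC_L}{\epsilon}\,O(B^{-\kappa/(2(\kappa+d))})+\epsilon$, which holds for every $\epsilon\in(0,1)$. Balancing the two terms by choosing $\epsilon$ of the order of the square root of the first coefficient, i.e.\ $\epsilon\asymp B^{-\kappa/(4(\kappa+d))}$, makes both terms of the same order and yields the claimed rate $K'B^{-\kappa/(4(\kappa+d))}$; the exponent $1/4$ arises from composing the three halving steps (the classifier rate, the Jensen square root, and the $\epsilon$-optimization). The main obstacle I anticipate is the passage from the odds loss to the classifier error: it requires both the Lipschitz control of the reparametrization $t\mapsto t/(1-t)$---which is where Assumption~\ref{assump:bounded_odds} is essential---and the domination of the reference product measure $G\times\pi$ by the training measure $H$ in Assumption~\ref{assump:bounded_measure}, since the odds loss and the classifier error are measured against different measures. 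Everything else is bookkeeping: Fubini, Jensen, and a one-dimensional optimization in $\epsilon$.
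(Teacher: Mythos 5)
Your proposal is correct and follows essentially the same route as the paper's proof: take $\E_{\T}$ of the conditional bound from Theorem~\ref{thm:bound_average}, apply Jensen's inequality to get $\E_{\T}[\sqrt{L(\hat{\mathbb{O}},\mathbb{O})}] \leq \sqrt{\E_{\T}[L(\hat{\mathbb{O}},\mathbb{O})]}$, bound the expected odds loss by the classifier rate of Assumption~\ref{assump:modelclass} via the Lipschitz control of $t \mapsto t/(1-t)$ together with the change of measure in Assumption~\ref{assump:bounded_measure} (this is exactly the paper's Lemma~\ref{lemma:bound_expected_loss}), and then optimize $\epsilon \asymp B^{-\kappa/(4(\kappa+d))}$. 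Your inline mean-value argument is the same as the paper's Taylor-remainder step (and your statement that boundedness of the odds forces $1-p$ and $1-\hat p$ to be bounded \emph{below} is in fact the cleaner way to phrase the bound on the derivative), so the two proofs coincide in every essential respect.
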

}

\revEJS{
\begin{Corollary}
\label{coroll:power}
Under Assumptions \ref{assump:bounded_odds}-\ref{assump:bounded_measure}, there exists $K'>0$
such that, for any $\theta \in \Theta$,
$$\int \P_{\mathcal{D},\T|\theta}(\phi_{\hat \tau_B;\theta_0}(\D)=1) d\theta_0 \geq \int \P_{\mathcal{D},\T|\theta}(\phi_{\tau;\theta_0}(\D)=1)d\theta_0  -   K' B^{-\kappa / (4(\kappa+d))}.$$
\end{Corollary}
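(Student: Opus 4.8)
The plan is to derive this power bound as an immediate consequence of Theorem~\ref{thm:bound_different_tests_eps}, via the elementary fact that two $\{0,1\}$-valued tests can differ in their rejection probabilities by no more than their probability of disagreement. All of the approximation-theoretic work --- bounding the disagreement probability in terms of the classifier's convergence rate --- has already been carried out in the theorem, so the corollary introduces no new estimates.

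First I would fix $\theta,\theta_0 \in \Theta$ and observe that $\phi_{\hat\tau_B;\theta_0}(\D)$ and $\phi_{\tau;\theta_0}(\D)$ are both indicator random variables. For any two $\{0,1\}$-valued variables $A$ and $B$ we have $|A-B|=\I(A\neq B)$, so in particular $A-B\geq -\I(A\neq B)$ pointwise. Taking expectations under $\P_{\mathcal{D},\T|\theta}$ and writing each rejection probability as the expectation of the corresponding indicator gives, for every $\theta_0$,
$$\P_{\mathcal{D},\T|\theta}(\phi_{\hat\tau_B;\theta_0}(\D)=1)-\P_{\mathcal{D},\T|\theta}(\phi_{\tau;\theta_0}(\D)=1)\geq -\P_{\mathcal{D},\T|\theta}(\phi_{\hat\tau_B;\theta_0}(\D)\neq \phi_{\tau;\theta_0}(\D)).$$

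Next I would integrate both sides against the proposal $\pi$ over $\theta_0$. By linearity the left-hand side becomes the difference of the two integrated power functions in the statement, while the right-hand side is precisely the negative of the quantity controlled by Theorem~\ref{thm:bound_different_tests_eps}. Substituting that bound produces the claimed inequality with the identical constant $K'$ and rate $B^{-\kappa/(4(\kappa+d))}$ (here reading the $d\theta_0$ of the statement as integration against $\pi$, to match the theorem).

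I do not expect a genuine obstacle: the argument is a one-line consequence of the theorem once the correct one-sided inequality is in place. The only points meriting attention are the \emph{direction} of that inequality --- we only need a lower bound on the power of the $\hat\tau_B$-based test, which is why a one-sided disagreement bound suffices --- and the harmless identification of $d\theta_0$ with $d\pi(\theta_0)$.
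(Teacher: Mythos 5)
Your proposal is correct and matches the paper's proof essentially line for line: the paper also starts from the pointwise inequality $\P_{\mathcal{D},\T|\theta}(\phi_{\hat\tau_B;\theta_0}(\D)=1)\geq \P_{\mathcal{D},\T|\theta}(\phi_{\tau;\theta_0}(\D)=1) - \P_{\mathcal{D},\T|\theta}(\phi_{\tau;\theta_0}(\D) \neq \phi_{\hat\tau_B;\theta_0}(\D))$, integrates over $\theta_0$, and invokes Theorem~\ref{thm:bound_different_tests_eps}. Your observation that $d\theta_0$ must be read as $d\pi(\theta_0)$ to align with the theorem is also a fair (and correct) reading of the paper's own notation.
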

}

Corollary \ref{coroll:power}  tells us that the \revEJS{average} power of the \BFF test is close to the \revEJS{average}  power of the exact Bayes factor test.  
 \revEJS{This result also} implies that \BFF converges to the most powerful test in the Neyman-Person setting, where the Bayes factor test is equivalent to the LRT.

\section{Handling Nuisance Parameters}
\label{sec:nuisance}
In most applications, \revJMLR{we} only \revJMLR{have} a small number of parameters  \revJMLR{that} are of primary interest. \revJMLR{The other parameters in the model are usually referred to as nuisance parameters.}  In this setting, \revJMLR{we decompose the parameter space as} $\Theta=\Phi \times \Psi$, where $\Phi$ contains the parameters of interest, and $\Psi$ contains nuisance parameters. \revJMLR{Our} goal is to construct a confidence set for $\phi \in \Phi$.  
To guarantee frequentist coverage by Neyman's inversion technique, \revJMLR{however,}
one needs to test null hypotheses of the form $H_{0, \phi_0}: \phi=\phi_0$ by comparing the test statistics to the cutoffs
$\hat{C}_{\phi_0} := \inf_{\psi \in \Psi} \widehat C_{(\phi_0,\psi)}$ (Section~\ref{sec:critical-value}). That is, one needs to control the type I error at each $\phi_0$ for  {\em all} possible values of the nuisance parameters.
Computing such infimum can be numerically unwieldy, especially if the number of nuisance parameters is large \citep{Boom2019nuisance, zhu2020nuisance}. \revJMLR{Below we propose approximate schemes for handling nuisance parameters:}

In \texttt{ACORE}, 
we use a hybrid resampling or ``likelihood profiling'' method \citep{chuang2000hybrid,feldman2000profiling,Sen2009NuisanceParameters} to circumvent unwieldy numerical calculations as well as to reduce computational cost. For each $\phi$ \revJMLR{(on a fine grid over $\Phi$)}, we first compute the
``profiled'' value $$\widehat{\psi}_{\phi}=\arg \max_{\psi \in\Psi} \prod_{i=1}^n \widehat{\mathbb{O}} \left(\x^{\rm obs}_i;(\phi,\psi) \right),$$ 
which (because of the odds estimation)  is an approximation of the maximum likelihood estimate of $\psi$  at the parameter value $\phi$ for observed data $D$.  By definition, the estimated \ACORE test statistic \revJMLR{for the hypothesis $H_{0, \phi_0}: \phi=\phi_0$} is  exactly given by   
$\widehat \Lambda(\mathcal{D}; \phi_0)=\widehat \Lambda(\mathcal{D}; (\phi_0, \widehat{\psi}_{\phi_0}))$. 
However, rather than comparing this statistic to $\hat{C}_{\phi_0} $, we use the hybrid cutoff 
\revJMLR{
\begin{equation}
     \hat C'_{\phi_0}:= \widehat{F}^{-1}_{\widehat \Lambda \left(\D;\phi_0 \right) \big| \left(\phi_0,\widehat \psi_{\phi_0} \right)}\left(\alpha \, \middle\vert \, \phi_0,\widehat \psi_{\phi_0} \right),  \label{eq:hybrid_cutoff}
 \end{equation}}
 where $\widehat{F}^{-1}$ is obtained  
   via a quantile regression as in Algorithm \ref{alg:estimate_cutoffs}, but using a training sample $\T'$ generated at {\em fixed} $\hat{\psi}_{\phi_0}$  \revJMLR{(that is, we run Algorithm \ref{alg:estimate_cutoffs} with the proposal distribution $\pi'((\phi,\psi)) \propto \pi(\phi)\times  \delta_{\hat{\psi}_{\phi}}(\psi),$
   where $\delta_{\hat{\psi}_{\phi}}(\psi)$ is a point mass distribution at $\hat{\psi}_{\phi}$)}. Alternatively,  one can compute \addEJS{the p-value
 \begin{equation}
 \widehat p(D;\phi_0):=\widehat \E \left[ \I\left( \hat{\Lambda} \left(\D;  \phi_0  \right) < \hat \Lambda \left(D; \phi_0  \right) \right)\, \middle\vert \, \phi_0,\widehat \psi_{\phi_0}  \right] \label{eq:hybrid_pvalue}
 \end{equation}
via probabilistic classification} as in Algorithm \ref{alg:estimate_pvalues}, but with $\T'$ simulated at fixed $\hat{\psi}_{\phi_0}$ \revJMLR{(that is, we run Algorithm \ref{alg:estimate_pvalues} with the proposal distribution $\pi'((\phi,\psi)) \propto \pi(\phi)\times  \delta_{\hat{\psi}_{\phi}}(\psi)$}. Hybrid methods do not always control $\alpha$, but they are often a good approximation that lead to robust results
\citep{Aad2012HiggsBoson,qian2016physics}. We refer to \texttt{ACORE} approaches based on Equation~\ref{eq:hybrid_cutoff} or Equation~\ref{eq:hybrid_pvalue} as  ``\texttt{h-ACORE}'' approaches.

In contrast to \texttt{ACORE}, 
the \texttt{BFF} test statistic averages (rather than maximizes) over nuisance parameters. 
Hence, instead of adopting a hybrid resampling scheme to handle nuisance parameters, we approximate p-values and critical values, in what we refer to as ``\texttt{h-BFF}'',  by using the marginal model of the data \revEJS{$\D$} at a parameter of interest $\phi$:
\revEJS{$$  \mathcal{ \widetilde L}(D;\phi) = \int_{\psi \in \Psi} \mathcal{L}(D;\theta)  \, d\pi(\psi) . $$}
We implement such a scheme by first  drawing the train sample  $\mathcal{T}^{'}$ from the entire parameter space $\Theta = \Phi \times \Psi$, and then applying quantile regression (or probabilistic classification) using $\phi$ only.

Algorithm~\ref{alg:conf_reg_nuisance} details our  construction of \ACORE and \BFF confidence sets  when calibrating critical values under the presence of nuisance parameter  (construction via p-value estimation is analogous).  In Section~\ref{sec:hep_example}, we demonstrate how our diagnostics branch can shed light on whether or not the final results have adequate frequentist coverage.

\section{Experiments}
\label{sec:examples}
\addEJS{We analyze the empirical performance of the LF2I framework under different problem settings: unknown null distribution of (known) test statistic (Section~\ref{sec:GMM}); nuisance parameters (Section~\ref{sec:hep_example}); intractable likelihood and high-dimensional data (Section~\ref{sec:muons}).

We use the cross-entropy loss (Eq.~\ref{eq::CE_loss}) when estimating the odds function in Equation~\ref{eq:odds_def} and the empirical coverage probability as in Section~\ref{sec:diagnostics} via probabilistic classification. Moreover, we use the pinball loss \citep{koenker2017handbook} when estimating critical values as in Section~\ref{sec:critical-value} via quantile regression.
}

\subsection{Gaussian Mixture Model: Unknown  Null Distribution}\label{sec:GMM}
A common practice in LFI is to first estimate the likelihood and then assume that the LR statistic is approximately $\chi^2$ distributed according to Wilks' theorem \citep{Drton2009LRSing}. However, in settings with small sample sizes or irregular statistical models, such approaches may lead to confidence sets with incorrect coverage; it is often difficult to identify exactly when that happens, and then know how to recalibrate the confidence sets. (See \cite{algeri2019searching} for a discussion of all conditions needed for Wilks' theorem to apply, which are often not realized in practice.)\\
\\
The Gaussian mixture model (GMM) is a classical example where the 
LR statistic is known \addEJS{but its null distribution is unknown in finite samples}. Indeed, the development of valid statistical methods for GMM is an active area of research \citep{redner1981gmm, mclachlan1987gmm, dacunha1997gmm, chen2009gmm, wasserman2020universal}. Here we consider a one-dimensional Normal mixture with unknown mean but known unit variance:
\begin{align*}
    X \sim 0.5 N(\theta, 1) + 0.5 N(-\theta, 1),
\end{align*}
where the parameter of interest $\theta \in \Theta =[0,5]$. \addTwo{In this example, the LRT statistic is not estimated but computed exactly. The goal is to} analyze three different approaches for estimating the critical value $C_{\theta_0}$ of a level-$\alpha$ LRT of the hypothesis test $H_{0,\theta_0}:\theta=\theta_0$, for different $\theta_0 \in \Theta$, \addTwo{in a setting where we have removed potential effects of estimation errors in the test statistic:}
\begin{figure}[t!]
    \centering
    \includegraphics[width=1\textwidth]{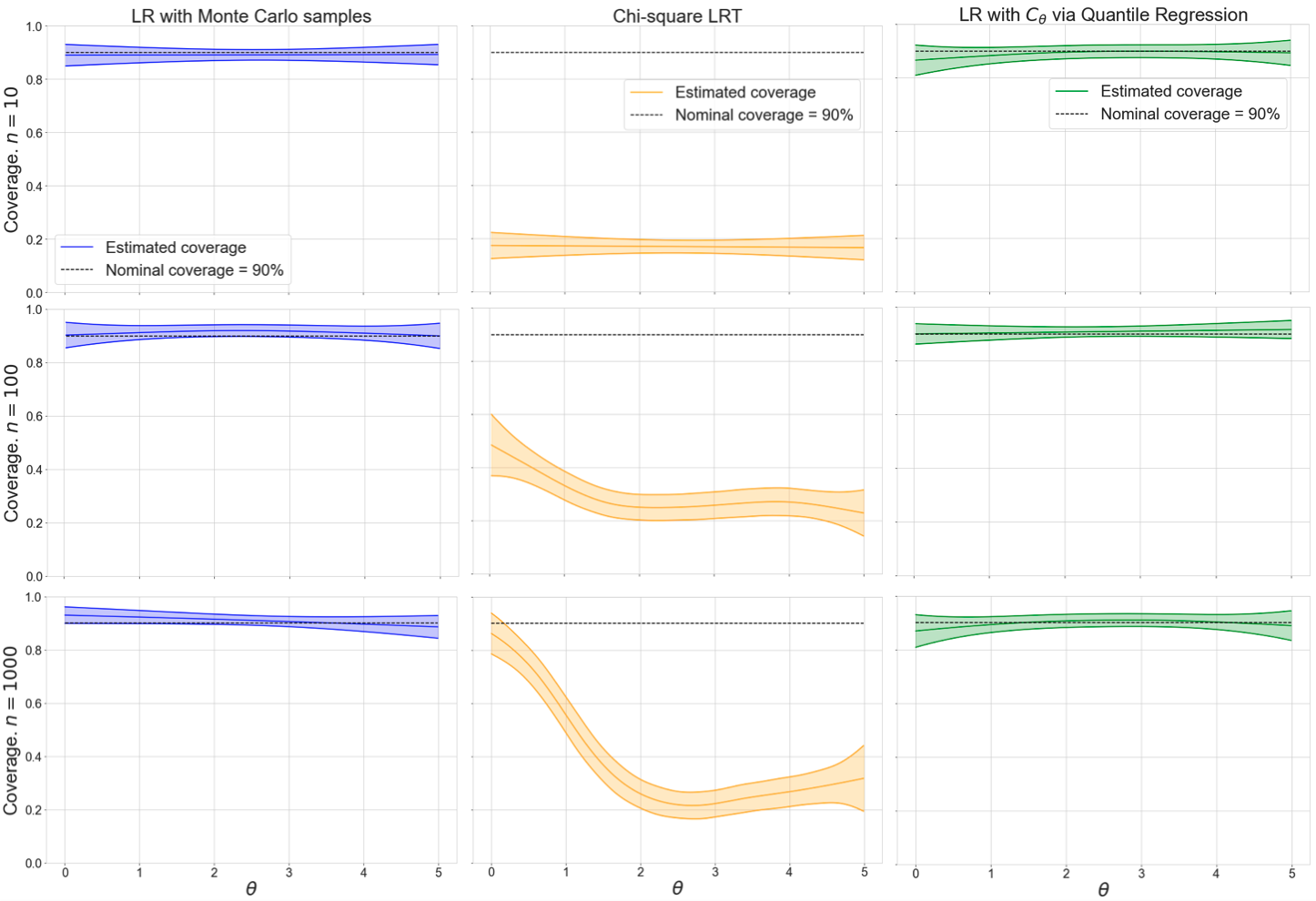}
    \caption{\small {\bf GMM with unknown null distribution}. Each panel shows the estimated coverage across the parameter space of 90\% confidence sets for $\theta$. Rows represent experiments with different observed sample sizes: $n=10,100, 1000$ (top, center, bottom). Columns represent three different approaches. {\bf Left:} ``LR with Monte Carlo samples'' achieves nominal coverage everywhere but is computationally expensive, especially in higher dimensions. {\bf Center:} ``Chi-square LRT'' clearly under-covers, i.e. confidence sets are not valid even for large $n$, other than at $\theta=0$ where the mixture collapses to one Gaussian. {\bf Right:} ``LR with $C_{\theta_0}$ via quantile regression'' returns finite-sample confidence sets with the nominal coverage of $90\%$ for all values of $\theta$, but using a total of 1000 simulations, instead of a MC sample of 1000 simulations at each grid point. 
    }
    \label{fig:GMM_coverage}
\end{figure}

\begin{itemize}
    \item
    ``LR with Monte Carlo samples'', where we
    \revJMLR{draw 1000 simulations at each point $\theta_0$ on a fine grid over $\Theta$ and take $C_{\theta_0}$ to be the $1-\alpha$ quantile of the distribution of the LR statistic, computed using the MC samples at each fixed $\theta_0$. This approach is often just referred to as MC hypothesis testing.}
     \item ``Chi-square LRT'', where we {\em assume} that $-2\text{LR}(\D;\theta_0) \sim \chi^2_1$, and hence take \revJMLR{$-2C_{\theta_0}$} to be the same as the upper $\alpha$ quantile of a $\chi^2_1$ distribution. 
    \item ``LR with $C_{\theta_0}$ via quantile regression'', where we  estimate $C_{\theta_0}$ via quantile regression (Algorithm~\ref{alg:estimate_cutoffs}) \revJMLR{based on a total of $B'=1000$ simulations of size $n$ sampled uniformly on $\Theta$.} 
\end{itemize}
\revJMLR{We then construct confidence sets by inverting the hypothesis tests, and finally assess their conditional coverage with the diagnostic branch of the LF2I framework (Algorithm~\ref{alg:estimate_coverage} with $B''=1000$).}

\revJMLR{Figure \ref{fig:GMM_coverage} shows LF2I diagnostics for the three different approaches when the observed sample size \addTwo{(i.e., the number of observations from each unknown $\theta$)} is $n=10, 100, 1000$. 
Confidence sets from ``Chi-square LRT'' are clearly not valid at any $n$, which shows that Wilks' theorem does not apply in this setting. The only exception arises when $n$ is large enough and $\theta$ approaches 0, in which case the mixture reduces to a \addTwo{{\em unimodal}} Gaussian whose LR statistic has a known limiting distribution (see bottom center panel of  Figure~\ref{fig:GMM_coverage}). On the other hand, ``LR with $C_{\theta_0}$ via quantile regression'' returns valid finite-sample confidence sets with conditional coverage equivalent to ``LR with Monte Carlo samples''. A key difference between the LF2I and MC methods is that the LF2I results are based on 1000 samples in total, whereas the MC results are based on 1000 MC samples at each $\theta_0$ on a grid. The latter approach quickly becomes intractable in higher parameter dimensions and larger scales.}\\

\noindent \addTwo{In Appendix~\ref{app:critical_vals}, we show that critical values are clearly non-constant across the parameter space, which also provides insight as to why assumptions of a pivotal test statistic (e.g., a $\chi^2$-distributed test statistic asymptotically, or calibration based on a single point in the parameter space \cite{warne2024generalised}) do not yield correct coverage.} \revJMLR{Supplementary Material~J gives details on the specific quantile regressor (for Algorithm~\ref{alg:estimate_cutoffs}) and probabilistic classifier (for Algorithm~\ref{alg:estimate_coverage}) used in Figure~\ref{fig:GMM_coverage}, and presents extensions of the above experiments to  confidence sets via p-value estimation and asymmetric mixtures.}

\subsection{\addEJS{Poisson Counting Experiment: Nuisance Parameters and Diagnostics}}
\label{sec:hep_example}

\begin{figure}[t!]
    \centering
    \includegraphics[width=1.0\textwidth]{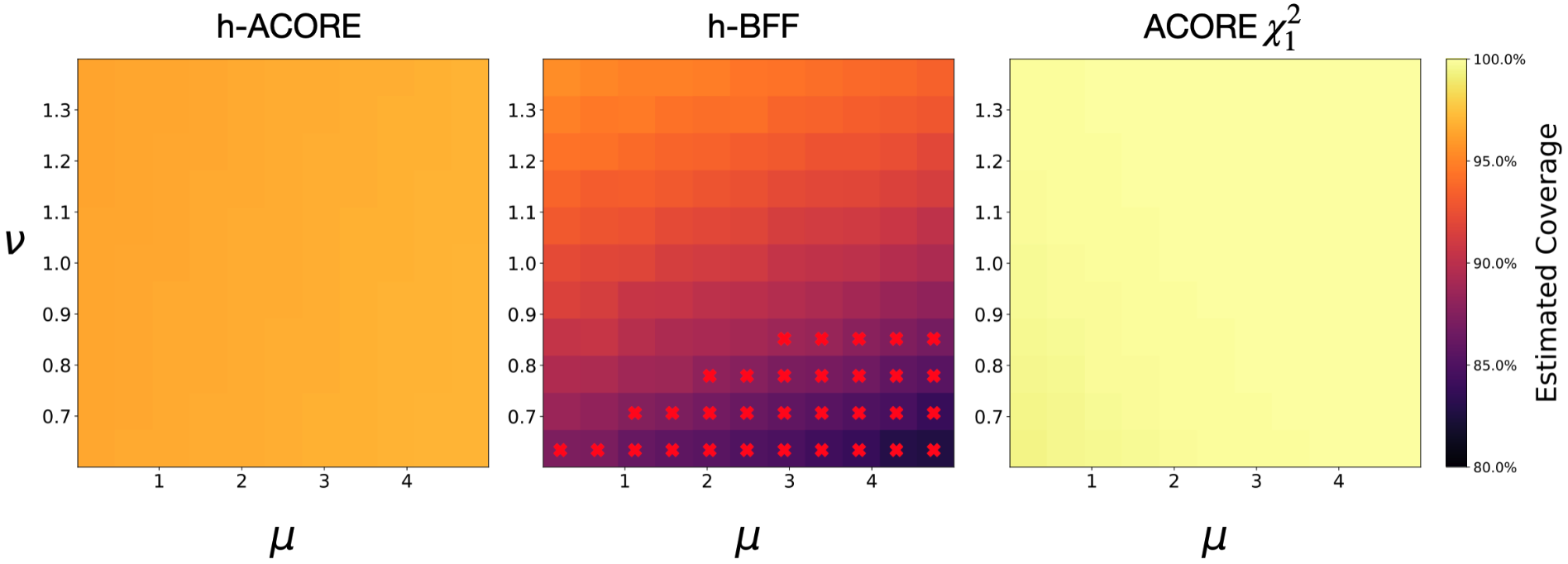}
    \caption{\small \addEJS{{\bf Poisson counting experiment with nuisance parameters.} The diagnostics branch provides guidance as to which LFI approach to use for the problem at hand by pinpointing regions of the parameter space $\Theta$ where inference is unreliable. The panels show empirical coverage as a function of both $\mu$, the parameter of interest, and $\nu$, the nuisance parameter. Nominal coverage is $90\%$. {\bf Left:} h-ACORE, which uses profiled likelihoods, is overly conservative in terms of actual coverage ($\approx 96\%$) across $\Theta$. {\bf Center:} h-BFF, which marginalizes over $\nu$, under-covers in several regions (red crosses). {\bf Right:} ACORE $\chi_1^2$, which uses cutoffs from the chi-square distribution, has almost no constraining power, yielding empirical coverage close to $100\%$ everywhere.}}
    \label{fig:onoff_coverage}
\end{figure}

Hybrid methods, which maximize or marginalize over nuisance parameters, do not always control the type I error of statistical tests. For small sample sizes, there is no theorem as to whether profiling or marginalization of nuisance parameters will give better frequentist coverage for the parameter of interest \citep[Section 12.5.1]{cousins2018lectures}. In addition, most practitioners consider a thorough check of frequentist coverage to be impractical \citep[Section 13]{cousins2018lectures}. In this example, we apply the hybrid schemes from Section~\ref{sec:nuisance} to a high-energy physics (HEP) counting experiment \citep{lyons2012counting, cowan2011formulaeInferenceHEP, cowan2012counting, cousins2008evaluation, heinrich2022learning} with nuisance parameters\revJMLR{, which is a simplified version of a real particle physics experiment where the true likelihood function is not known}. We illustrate how our diagnostics can guide the analyst and provide insight into which method to choose for the problem at hand.\\
\\
Consider a ``Poisson counting experiment'' where particle collision events are counted under the presence of both an uncertain background process and a (new) signal process. The goal is to estimate the signal strength.  To avoid identifiability issues, the background rate is estimated separately by counting the number of events in a control region where the signal is believed to be absent. Hence, the observable data $\mathbf{X} = (N_b, N_s)$ contain two measurements, where $N_b \sim {\rm Pois}(\nu\tau b)$ is the number of events in the control region, and $N_s \sim {\rm Pois}(\nu b + \mu s)$ is the number of events in the signal region. Our parameter of interest is the signal strength 
\addEJS{$\mu$, whereas the scaling factor for the background $\nu$ is a nuisance parameter. The hyper-parameters $s$ and $b$ indicate the nominally expected counts from signal and backgrounds, and $\tau$ describes the relationship in measurement time between the two processes. We treat the three hyper-parameters as known with values $s=15$, $b=70$, $\tau=1$, respectively. The hyper-parameters move the model away from the Gaussian limiting regime and make the relationship between data and parameters more complicated \cite{heinrich2022learning}.\\
\\
We compare the hybrid methods \texttt{h-ACORE} and \texttt{h-BFF} with \texttt{ACORE $\chi^2_1$} (which uses cutoffs from the chi-square distribution). We learn the odds using a QDA classifier with $B=100{,}000$ and estimate critical values for the hybrid methods via quantile gradient boosted trees with $B^{\prime}=10{,}000$. We evaluate the different methods on a separate set of size $B^{\prime\prime}=1{,}000$ by estimating coverage and measuring the length of confidence sets for each of the simulated samples.

\begin{figure}[h!]
  \begin{minipage}[c]{0.5\textwidth}
    \includegraphics[width=\textwidth]{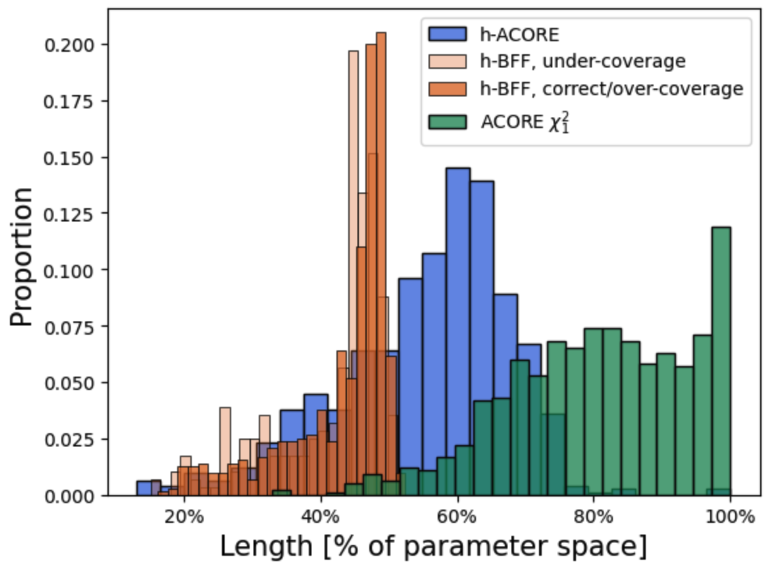}
  \end{minipage}\hfill
  \begin{minipage}[c]{0.45\textwidth}
    \caption{\small \addEJS{{\bf Constraining power.} Relative size of the confidence sets constructed in Section~\ref{sec:hep_example}. \addTwo{\texttt{ACORE} $\chi_1^2$ and \texttt{h-ACORE} yield the widest intervals (they are indeed overly conservative according to Figure~\ref{fig:onoff_coverage})}. \texttt{h-BFF} provides tighter confidence sets, but their size cannot be trusted when the method under-covers. LF2I diagnostics can identify the parameter regions where the approach is not valid (red crosses in Figure~\ref{fig:onoff_coverage}). \addTwo{The dark-orange histogram reports \texttt{h-BFF} results after removing those points.}}}
    \label{fig:onoff_length}
  \end{minipage}
\end{figure}

Figure~\ref{fig:onoff_coverage} shows the estimated coverage as a function of both $\mu$ and $\nu$. Confidence sets are considered to be valid when they achieve the nominal coverage level regardless of the true value of \textit{both} the parameter of interest and the nuisance parameters. Both \texttt{h-ACORE} and \texttt{ACORE $\chi_1^2$} are overly conservative across the whole parameter space, while \texttt{h-BFF} under-covers in regions of high signal strength and low background. These results are consistent with the length of the corresponding confidence sets shown in Figure~\ref{fig:onoff_length}: \texttt{h-ACORE} and \texttt{ACORE $\chi_1^2$} are overly conservative, with the former being almost uninformative for the majority of evaluation samples. On the other side, while \texttt{h-BFF} seems to provide tighter parameter constraints, their length can be trusted only in regions where the method has coverage at least equal to the nominal level. Our LF2I diagnostic branch can pinpoint the regions of the parameter space where inference is reliable or not.}

\subsection{\addEJS{Muon Energy Estimation: Intractable Likelihood and High-Dimensional Data}}\label{sec:muons}

\addEJS{We now showcase LF2I on a high-energy physics application with intractable likelihood and very high-dimensional data. The goal is to estimate the energy of muons using a high-granularity calorimeter in a particle collider experiment. Muons are subatomic particles that have proven to be excellent probes of new physical phenomena: their detection and measurement has enabled several crucial discoveries in the last few decades, including the discovery of the Higgs boson \cite{augustin1974discovery, herb1977observation, cdf1995observation, aad2012observation, chatrchyan2012observation}. Traditionally, the energy of a muon is determined from the curvature of its trajectory in a magnetic field, but curvature-based measurements have proven to be insufficiently precise at high energies. Recently, muon energy measurements based on their radiative losses in a dense, finely segmented calorimeter (Figure~\ref{fig:muons_results}, left) have been shown to be a feasible alternative \cite{kieseler2022calorimetric, dorigo2022deep}.\\
\\
\addEJS{In this application, the dimensionality of one data point $\x$ (a 3D image) is of the order of $\approx 50{,}000$ and the observed sample size is $n=1$ (as each unique data point is the output of one experiment with a specific parameter of interest $\theta$). In total, we have available $886{,}716$ 3D “image” inputs $\mathbf{x}$ with corresponding scalar muon energies $\theta$. The data are obtained by accurately mimicking particle showers with \textsc{GEANT4} \citep{agostinelli2003geant4}, a high-fidelity simulator that has been calibrated for decades and is trusted to incorporate all the dynamics of the Standard Model of particle physics.} The data are available at \cite{kieseler10preprocessed}.\\

\begin{figure}[t!]
    \centering
    \includegraphics[width=1.0\textwidth]{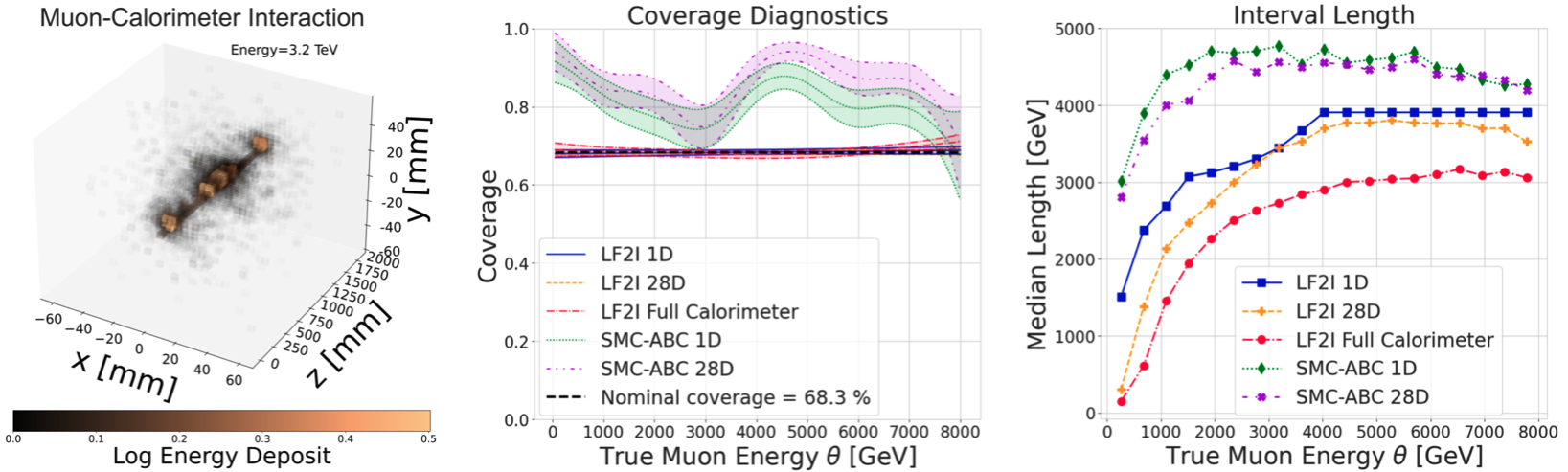}
    \caption{\small \addEJS{{\bf Muon energy estimation.} LF2I guarantees nominal coverage and yields smaller confidence intervals relative to SMC-ABC. {\bf Left:} Data point example of a muon with incoming energy $\theta \approx 3.2$ TeV entering a calorimeter with $32 \times 32 \times 50$ cells. {\bf Center:} LF2I (blue, orange, red in the right two panels) achieves coverage at the nominal level (68.3\%), whereas SMC-ABC (green and purple) is consistently over-covering across the parameter space. {\bf Right:} Median lengths of constructed intervals. While being extremely computationally intensive, SMC-ABC has also the least constraining power regardless of the data set used. SMC-ABC on the full calorimeter data is not reported as it was computationally infeasible to run.}}
    \label{fig:muons_results}
\end{figure}

\noindent The scientific goal of this experiment is to quantify whether a high-granularity calorimeter would better constrain the energy of a muon (that is, lead to smaller confidence sets) than, for example, a detector that only measures the total energy of the incoming particle. 
To answer this question, we consider nested versions of the same energy measurement, where the inputs to our algorithms are of increasing dimensionality: (i) a 1D input which is equal to the sum over all the 
cells of the calorimeter (for each muon with deposited energy $E > 0.1$ GeV); (ii) 28 custom features extracted from the spatial and energy information of the calorimeter cells (see \cite{kieseler2022calorimetric}); and (iii) the full calorimeter measurement, $\mathbf{x} \in \mathbb{R}^{51{,}200}$. We then construct LF2I confidence sets for each data point using \texttt{BFF}. On the full calorimeter data, we learn the odds function through a convolutional neural network classifier derived from the regressor proposed in \cite{kieseler2022calorimetric}, and estimate critical values via quantile gradient boosted trees. For the 1D and 28D data sets, we instead learn odds through a gradient boosting classifier. In both cases, we use approximately $83\%$ of the data to learn the odds function ($B=738{,}930$) and $14\%$ to estimate critical values ($B^\prime=123{,}155$). For comparison, we also include results from SMC-ABC \citep{sisson2007sequential}, a popular LFI algorithm from the Approximate Bayesian Computation literature. To provide a fair assessment of the results, SMC-ABC uses all the simulations that LF2I exploits separately (i.e., $B+B^\prime = 862{,}085$). The remaining data points ($B^{\prime\prime}=24{,}631$) are used for validation and diagnostics of both methods.\\
\\
Figure~\ref{fig:muons_results} (center) shows that LF2I with the \BFF test statistic achieves the nominal level of coverage ($68.3\%$) regardless of the data set used. This is consistent with Theorem~\ref{thm:valid_tests}: as long as the quantile regression is well estimated, LF2I confidence sets are guaranteed to be valid at the nominal $(1-\alpha)$ level regardless of how well the test statistic is estimated. On the other hand, SMC-ABC is overly conservative with credible intervals that strongly over-cover across the whole parameter space. 
As to constraining power (interval length), Figure~\ref{fig:muons_results} (right) shows that SMC-ABC credible intervals are significantly wider than LF2I confidence sets for both the 1D and 28D data sets (running SMC-ABC on the $51{,}200$-dimensional full calorimeter data was computationally infeasible, and we were not able to report the results). Finally, note how the amount of information in the data directly influences the size of LF2I confidence sets: going from the 1D data set to the full calorimeter leads to noticeably smaller confidence intervals, and hence higher constraining power.}

\addEJS{
\paragraph{Remark on validity and computational cost} SMC-ABC does not have the right coverage, because the goal of ABC is to construct Bayesian credible regions and not valid confidence sets; see, e.g., \cite{hermans2021averting} for other examples of SMC-ABC under- or over-covering. Furthermore, note that (i) LF2I is amortized: once training is done, confidence sets can be efficiently computed on an arbitrary number of observations without having to retrain the algorithms; and (ii) there is no need for a prior dimension reduction of the data (that is, we can directly input the three-dimensional image). Specifically, LF2I required approximately 10 and 5 CPU minutes on an AMD's EPYC 7763 machine to train the odds classifier and the quantile regressor respectively, and less than a second to obtain confidence intervals all at once for all observations (in this example,  unique $24{,}631$ ``test'' muons) regardless of their dimensionality. In contrast,  SMC-ABC required approximately 1 CPU hour for \textit{each} observation even for the lower-dimensional 1D and 28D data sets.}

\section{Conclusions and Discussion}
\label{sec:conclusion}
\paragraph{Validity} Our proposed  LF2I methodology leads to frequentist confidence sets and hypothesis tests with finite-sample guarantees (when there are no nuisance parameters). {\em Any} existing or new test statistic -- that is, not only estimates of the LR or BF statistics -- can be plugged into our framework to create tests that control type I error. The implicit assumption is that the null distribution of the test statistic varies smoothly in parameter space. If that condition holds, then we can efficiently leverage quantile regression methods to construct valid confidence sets by a Neyman inversion of simple hypothesis tests, without having to rely on asymptotic results. 
    
 \paragraph{Nuisance parameters and diagnostics} For small sample sizes, no theorem guarantees whether profiling or marginalizing nuisance parameters will provide better frequentist coverage for the parameter of interest \citep[Section 12.5.1]{cousins2018lectures}. It is generally believed that hybrid resampling methods return approximately valid confidence sets, but that a rigorous check of validity is infeasible when the true solution is not known. Our diagnostic branch presents practical tools for assessing empirical coverage across the entire parameter space (including nuisance parameters). After seeing the results, one can decide which method is most appropriate for the application at hand. \addEJS{For example, in the Poisson counting experiment of Section~\ref{sec:hep_example}, LF2I diagnostics revealed that \texttt{h-BFF} (which averages the estimated odds over nuisance parameters) returned smaller confidence intervals, but at the cost of under-covering in some regions of the parameter space.}
 
\paragraph{Power} Statistical power is the hardest property to achieve in practice in LFI. This is the area where we foresee that most statistical and computational advances will take place. \addEJS{As shown theoretically in Theorem~\ref{thm:bound_average} and empirically in Supplementary Material~K, the power (or size) of LF2I confidence sets depends not only on the theoretical properties of the (exact) test statistics, but is also influenced by how precisely we are able to estimate it. In the case of \ACORE and \texttt{BFF}, the latter can be divided in} (i) how well we are able to estimate the likelihood or odds function (a statistical estimation error), and (ii) how accurate are the integration or maximization procedures we use (a purely numerical error); \addEJS{see Supplementary Material~H for a more precise breakdown of the sources of error in LF2I confidence sets, particularly for \ACORE and \texttt{BFF}.} Machine learning offers exciting possibilities on both fronts. For example, with regards to (i), \cite{Brehmer2020MiningGold} offers compelling evidence that one can can dramatically improve estimates of the likelihood $p(\x|\theta)$ for $\theta \in \Theta$, or the likelihood ratio $p(\x|\theta_1,\theta_2)$ for $\theta_1,\theta_2 \in \Theta$, by a ``mining gold'' approach that extracts additional information from the simulator about the latent process. Future work could incorporate such an approach into the LF2I framework, with the calibration and diagnostic branches as separate modules.

\paragraph{Other test statistics} Our work presents also another new direction for LF2I: So far frequentist LFI methods have been estimating either likelihoods or likelihood ratios, and then often relying on asymptotic properties of the LR statistic. We note that there are settings where it may be easier to either estimate the posterior $p(\theta|\x)$ rather than the likelihood $p(\x|\theta)$\addEJS{, or alternatively to obtain point estimates for parameters directly via predictions algorithms.}
Because the LF2I framework is agnostic to which
algorithms we use to construct the test statistic itself, we can potentially leverage methods that estimate the conditional mean $\mathbb{E}[\theta|\x]$ and variance $\mathbb{V}[\theta|\x]$ to construct frequentist confidence sets and hypothesis tests for $\theta$ with finite-sample guarantees. \addEJS{For example, \cite{masserano2023simulator} uses $T=\frac{(\mathbb{E}[\theta|\x]-\theta_0)^2}{\mathbb{V}[\theta|\x]}$, which in some scenarios corresponds to the Wald statistic for testing $H_{0,\theta_0}: \theta = \theta_0$ against $H_{1,\theta_0}: \theta \neq \theta_0$ \cite{wald1943tests}, as an attractive alternative to get LF2I confidence sets from prediction algorithms and posterior estimators.}\\
\\
\noindent\addEJS{See Appendices A-F for proofs and details on the algorithms, and refer to the separate Supplementary Material file\footnote{Available at \href{https://lucamasserano.github.io/data/LF2I\_supplementary\_material.pdf}{https://lucamasserano.github.io/data/LF2I\_supplementary\_material.pdf}.} for additional experiments and results referenced in the main text.}

\subsection{Related Work}\label{sec: related_work}

\paragraph{Classical statistical inference in high-energy physics (HEP)} LF2I is inspired by pioneering work in HEP that adopted classical hypothesis tests and Neyman confidence sets for the discovery of new physics \citep{Feldman1998UnifyingApproach, cowan2011formulaeInferenceHEP, Aad2012HiggsBoson, chatrchyan2012observation, Cranmer2015PraticalStatsLHC}. Our work grew from the discussion in HEP regarding theory and practice, and open problems such as how to efficiently construct Neyman confidence sets for general settings \citep{cowan2011formulaeInferenceHEP}, how to assess coverage across the parameter space without costly Monte Carlo simulations \citep{cousins2018lectures}, and how to choose hybrid techniques in practice \citep{cousins2006treatment}. \addEJS{This paper proposes a general approach to solve the above-mentioned open problems with a modular framework that can be adapted to fit the data at hand.}

\paragraph{Universal inference} Recently, \cite{wasserman2020universal} proposed a ``universal'' inference test statistic for constructing valid confidence sets and hypothesis tests with finite-sample guarantees without regularity conditions. The assumptions are that the likelihood $\mathcal{L}(\mathcal{D}; \theta)$ is known and that one can compute the maximum likelihood estimator (MLE). \addEJS{Our LF2I framework does {\em not} require a tractable likelihood, but it assumes that we have regression methods that can estimate the chosen test statistic and its critical values. In tractable likelihood settings where both universal inference and LF2I apply, the LF2I approach leads to more powerful tests than universal inference (see, e.g., Figure~11 in Supplementary Material).}

\addEJS{\paragraph{Simulation-based calibration of Bayesian posterior distributions}  In Bayesian inference, the posterior distribution $\pi(\theta|\x)$ is fundamental for quantifying uncertainty about the parameter $\theta$ given the data $\x$. Recent methods have been developed to assess the quality of estimated posterior distributions; that is, assessing whether an estimate  $\widehat \pi(\theta|\x)$ is consistent with the posterior distribution $\pi(\theta|\x)$ implied by the assumed prior and likelihood \citep{dey2021re,zhao2021diagnostics, dey2022calibrated, linhart2023c2st, lemos2023sampling}. \textit{The calibration in LF2I is fundamentally different:} Even if posteriors are calibrated in the sense that $\widehat \pi(\theta|\x)=\pi(\theta|\x)$ for every $\x$ and $\theta$, confidence sets derived from it will not necessarily have the correct empirical coverage (according to Eq.\ref{eq:cond_coverage}). LF2I is agnostic to the choice of the test statistic (for instance, whether the test statistic is formed from likelihoods or posteriors \citep{masserano2023simulator}), and provides guarantees of how well we are able to constrain the true parameters of interest regardless of the choice of the prior or proposal distribution $\pi(\theta)$.}

\paragraph{Likelihood-free inference via machine learning}  Recent LFI methods have been using simulators output as training data to learn surrogate models for inference; see \cite{Cranmer2020Review} for a review. These techniques use synthetic data simulated across the parameter space to directly estimate key quantities, such as:
\begin{enumerate}
\item {\em posteriors} $p(\theta | \x)$ \citep{blum2010non, marin2016abcrf, Papamakarios2016SNP,  lueckamnn2017Posterior, greenberg2019posterior, chen2019gaussiancopula, izbicki2019abc, radev2020Bayesflow};
\item {\em likelihoods} $p(\x | \theta)$ \citep{Wood2010GaussSynth, meeds2014gps, wilkinson2014accelerating, gutmann2016bolfi, fasiolo2018bayesiansynth, lueckmann2019likelihood, papamakarios2019likelihood, picchini2020adaptivebayesiansynth, jarvenpaa2021bolfi}; or   
\item {\em density ratios}, such as the likelihood-to-marginal ratio $p(\x | \theta)/p(\x)$ \citep{izbicki2014high, thomas2021lfire, hermans2020likelihoodfree, durkan2020constrastive},\footnote{In 2014, Izbicki et al. approximate likelihoods for high-dimensional data (such as 2D images) via density ratios \citep[Equation 3]{izbicki2014high} and kernel methods, building on Izbicki's PhD thesis work on spectral series approaches to high-dimensional nonparametric inference. The kernel approximate likelihood approach was later superseded by neural SBI approaches.} the likelihood ratio $p(\x | \theta_1)/p(\x|\theta_2)$ for $\theta_1, \theta_2 \in \Theta$ \citep{Cranmer2015LikRatio, Brehmer2020MiningGold} or the profile-likelihood ratio \citep{heinrich2022learning}.\footnote{ \ACORE and \BFF are based on estimating the odds $\mathbb{O}(\X;\theta)$ at $\theta \in \Theta$ (Equation~\ref{eq:odds_def}); this is a ``likelihood-to-marginal ratio'' approach, which estimates a one-parameter function as in the original paper by~\cite{izbicki2014high}. The likelihood ratio $\mathbb{OR}(\X;\theta_0,\theta_1)$ at $\theta_0, \theta_1 \in \Theta$ (Equation~\ref{eq:oddsratio_def}) is then computed from the odds function, without the need for an extra estimation step.}
\end{enumerate}
\revEJS{Recently, there have also been works that directly predict parameters $\theta$ of intractable models using neural networks \cite{gerber2021fast, lenzi2021neural} (that is, they do not estimate posteriors, likelihoods or density ratios).} \revEJS{In addition, new methods such as normalizing flows \citep{Papamakarios2019FlowsReview} and other neural density estimators are revolutionizing LFI in terms of sample efficiency and capacity, and will continue to do so.}\\

\noindent Nonetheless, although the goal of LFI is inference on the unknown parameters $\theta$, it remains an open question whether a given LFI algorithm produces reliable measures of uncertainty\addTwo{, as current methods lack guarantees of local (instance-wise) validity and power for a finite number of observations. They also have no practical diagnostics to assess local coverage across the parameter space. Our framework can be used in combination with any LFI approach that relies on a test statistic (such as the LRT) to provide both local coverage and diagnostics. Finally, thanks to the modular structure of LF2I, the diagnostic branch can be used separately to evaluate whether other approaches (like ABC and posterior methods that return credible regions) have good frequentist coverage, and in cases where they do not, LF2I can  identify regions of the parameter space of over- or under-confidence.}

%\newpage
\begin{appendix}

\section{Estimating Odds}\label{app:methods}

\begin{algorithm}[b!]
    \small
    \caption{Generate a labeled sample of size $B$ for estimating odds}
    \label{alg:joint_y}
    \begin{flushleft}
        {\bf Input:} simulator $F_\theta$; reference distribution $G$; proposal distribution $\pi_{\Theta}$ over parameter space; number of simulations $B$; parameter $p$ of Bernoulli distribution
        {\bf Output:} labeled training sample $\T$
    \end{flushleft}
    \begin{algorithmic}[1]
        \State Set $\T \gets \emptyset$
        \For{$i$ in $\{1,...,B\}$}
        \State Draw parameter value $\theta_i \sim \pi_{\Theta}$
        \State Draw $Y_i \sim {\rm Ber}(p)$
        \If{$Y_i==1$}
            \State  Draw sample $\X_i \sim  F_{\theta_i}$ 
        \Else
            \State Draw sample $\X_i \sim G$
        \EndIf
        \State $\T \gets \T \cup \left(\theta_i,\X_i,Y_i\right)$
        \EndFor
        \State \textbf{return} $\T=\{\theta_i,\X_i,Y_i\}_{i=1}^B$  
    \end{algorithmic}
\end{algorithm}

Algorithm~\ref{alg:joint_y} shows how to create the training set $\T$ for estimating odds. Out of the total 
 \revJMLR{number of simulations}
$B$, a proportion $p$ is generated by the stochastic forward simulator $F_\theta$ at different parameter values $\theta$, while the rest is sampled from a reference distribution $G$. Note that $G$ can be any distribution that dominates $F_\theta$. If $G$ is the marginal distribution $F_\x$ and $n=1$, then computations for \texttt{BFF} are simplified because its denominator equals one. \addEJS{Algorithm~\ref{alg:sample_from_marginal} shows how to sample from the marginal distribution $F_\x$. In practice, if the data is pre-simulated, one can sample from the (empirical) marginal using permutations to break the relationship between $\theta$ and $\X$ for $\X \sim G=F_\x$.}

\begin{algorithm}[t!]
    \small
    \caption{Sample from the marginal distribution $G=F_{\X}$}
    \label{alg:sample_from_marginal}
    \begin{flushleft}
        {\bf Input:} simulator $F_\theta$; proposal distribution $\pi_{\Theta}$ over parameter space\\
        {\bf Output:} sample $\X_i$ from the marginal distribution $F_{\X}$
    \end{flushleft}
    \begin{algorithmic}[1]
        \State Draw parameter value $\theta_i \sim \pi_{\Theta}$
        \State Draw sample $\X_i \sim F_{\theta_i}$
        \State \textbf{return} $\X_i$
    \end{algorithmic}
\end{algorithm}

\section{Estimating p-values}\label{app:p_values}
Given observed data $D$ and a test statistic $\lambda$, we can compute p-values $p(D;\theta_0):={\mathbb P}_{\mathcal{D} | \theta _{0}} \left (
\lambda ({\mathcal D};\theta _{0})< \lambda (D;\theta _{0})\right )$ for each hypothesis $H_{0,\theta _{0}}:\theta =\theta _{0}$. Algorithm~\ref{alg:estimate_pvalues} describes how to estimate such p-values for all $\theta _{0} \in \Theta$ simultaneously.

\begin{algorithm}[h!]
    \small
    \caption{Estimate p-values $p(D;\theta_0)$ given observed data $D$ for a level-$\alpha$ test of  $H_{0, \theta_0}: \theta = \theta_0$ vs. $H_{1, \theta_0}: \theta \neq \theta_0$, for all $\theta_0 \in \Theta$ simultaneously.}
    \label{alg:estimate_pvalues}
    \begin{flushleft}
        {\bf Input:} observed data $D$; simulator $F_\theta$; number of simulations $B'$; $\pi_\Theta$ (fixed proposal distribution over the parameter space $\Theta$); test statistic $ \lambda$; probabilistic classifier
        {\bf Output:} estimated p-value  $\widehat{p}(D;\theta)$ for all $\theta=\theta_0 \in \Theta$
    \end{flushleft}
    \begin{algorithmic}[1]
        \State{ Set $\T' \gets \emptyset$}
        \For{i in $\{1,\ldots,B'\}$} 
        \State  Draw parameter $\theta_i \sim \pi_{\Theta}$
        \State Draw sample $\X_{i,1},\ldots,\X_{i,n}  \stackrel{iid}{\sim}  F_{\theta_i}$
        \State Compute test statistic $ \lambda_i \gets  \lambda((\X_{i,1},\ldots,\X_{i,n});\theta_i) $
        \State Compute indicator $Z_i \gets \I\left(\lambda_i <  \lambda(D;\theta_i) \right) $
        \State  $\T' \gets \T' \cup  \{(\theta_i,Z_{i})\}$
        \EndFor
        \State  Use $\T'$ to learn the \revJMLR{p-value} function $\widehat{p}(D;\theta)$ using $Z$ as the label for each $\theta$\\
        \textbf{return} $\widehat{p}(D;\theta_0)$
    \end{algorithmic}
\end{algorithm}

\section{Constructing Confidence Sets}\label{app:confidence_sets}

Algorithm~\ref{alg:conf_reg} details the  construction of LF2I confidence sets with \ACORE and \BFF as defined in Section~\ref{sec:LF2I} (the algorithm based on p-value estimation is analogous). Algorithm~\ref{alg:conf_reg_nuisance} details the construction of the (hybrid) \ACORE and \BFF confidence sets defined in Section~\ref{sec:nuisance} for the general setting with nuisance parameters. Note that the first chunk on estimating the odds and the last chunk with Neyman inversion are the same for \ACORE and \texttt{BFF}. Furthermore, the test statistics are the same whether or not there are nuisance parameters. 

\begin{algorithm}[t!]
    \small
    \caption{Construct $(1-\alpha)$ confidence set for $\theta$ (no nuisance parameters)}
    \label{alg:conf_reg}
    \begin{flushleft}
        {\bf Input:} simulator $F_\theta$; proposal distribution $\pi$ over $\Theta$; parameter $p$ of Bernoulli; number of simulations $B$ (test statistic); number of simulations $B'$ (critical values); probabilistic classifier; observations $D=\{\x_i^{\text{obs}}\}_{i=1}^n$; level $\alpha \in (0,1)$; size of evaluation grid over parameter space $n_{\rm grid}$; test statistic $\lambda$ (\ACORE or \texttt{BFF})\\
        {\bf Output:} $\theta$ evaluation points in confidence set $\widehat{R}(D)$
    \end{flushleft}
    \begin{algorithmic}[1]
        \State \codecomment{Estimate odds}
	\State Generate labeled sample $\T$ according to Algorithm~\ref{alg:joint_y}
	\State Apply probabilistic classifier to $\T$ to learn $\widehat{\P}(Y=1|\theta,\X),$ for all \revJMLR{$\theta \in \Theta$} and $\X \in \mathcal{X}$	
        \State Let the estimated odds $\widehat{\mathbb{O}}(\X;\theta) \gets \frac{\widehat{\P}(Y=1|\theta,\X)}{\widehat{\P}(Y=0|\theta,\X)}$ \\
        \State \codecomment{Compute \revJMLR{cut-offs for \ACORE or \BFF}}
        \If{$\lambda$ == \texttt{ACORE}}
            \State Let $\lambda(\D; \theta) \gets \widehat{\Lambda}(\D; \theta)$ be the \ACORE statistic (Equation \ref{eq:ACORE_statistic}) with estimated odds
        \ElsIf{\texttt{test\_stat} == \texttt{BFF}}
            \State Let $\lambda(\D; \theta) \gets \widehat{\tau}(\D; \theta)$ be the \BFF statistic (Equation \ref{eq:BFF_statistic}) with estimated odds
        \EndIf
        \State{Learn critical values $\hat C_{\theta}$ according to Algorithm~\ref{alg:estimate_cutoffs}}\\
        
        \State \codecomment{Confidence sets for $\theta$ via Neyman inversion}
        \State Initialize confidence set $\widehat{R}(D) \gets \emptyset$
        \State Let ${\rm L}_\Theta$ be a lattice over $\Theta$ with $n_{\rm grid}$ elements
        \For{$\theta_0 \in {\rm L}_\Theta$}
    	\If{ $\lambda(D; \theta_0) \geq \widehat{C}_{\theta_0}$ }
                \State $\widehat{R}(D) \gets  \widehat{R}(D)  \cup \{ \theta_0 \}$
            \EndIf
        \EndFor
	\State \textbf{return} confidence set  $\widehat{R}(D)$
    \end{algorithmic}
\end{algorithm}

\begin{algorithm}[t!]
    \small
    \caption{Construct confidence set for $\phi$ with (approximate) coverage $1-\alpha$ under the presence of nuisance parameters}
    \label{alg:conf_reg_nuisance}
    \begin{flushleft}
        {\bf Input:} simulator $F_\theta$; proposal distribution $\pi$ over $\Theta = \Phi\times \Psi$; parameter $p$ of Bernoulli; number of simulations $B$ (test statistic); number of simulations $B'$ (critical values); probabilistic classifier; observations $D=\{\x_i^{\text{obs}}\}_{i=1}^n$; level $\alpha \in (0,1)$; size of evaluation grid over parameter space, $n_{\rm grid}$; test statistic $\lambda$ (\ACORE or \texttt{BFF})\\
        {\bf Output:} $\phi$ evaluation points in confidence set $\widehat{R}(D)$
    \end{flushleft}
    \begin{algorithmic}[1]
        \State \codecomment{Estimate odds}
	\State Generate labeled sample $\T$ according to Algorithm~\ref{alg:joint_y}
        \State Apply probabilistic classifier to $\T$ to learn $\widehat{\P}(Y=1|\theta,\X),$ $\forall \text{ }\theta = (\phi, \psi ) \in \Theta, \X \in \mathcal{X}$	
        \State Let the estimated odds $\widehat{\mathbb{O}}(\X;\theta) \gets \frac{\widehat{\P}(Y=1|\theta,\X)}{\widehat{\P}(Y=0|\theta,\X)}$\\
     
        \State \codecomment{Compute (hybrid) critical values for \texttt{h-ACORE} or \texttt{h-BFF}}
        \If{$\lambda$ == \texttt{ACORE}}
            \State Let $\widehat\psi_\phi \gets \arg\max_{\psi \in \Psi}  \prod_{i=1}^n \widehat{\mathbb{O}}(\x_i^{\rm obs};(\phi,\psi))$ for every $\phi$
            \State Let $\lambda(\D; \phi) \gets \widehat{\Lambda}\left(\D; (\phi, \widehat\psi_\phi)\right)$ be \ACORE (Equation \ref{eq:ACORE_statistic}) with estimated odds
            \State Generate $\mathcal{T}^{\prime}$ as in Algorithm \ref{alg:estimate_cutoffs}
         using the proposal $\pi'((\phi,\psi)) \propto \pi(\phi)\times  \delta_{\hat{\psi}_{\phi}}(\psi)$
            \State Learn $\hat C_{\phi}= \widehat{F}^{-1}_{\lambda \left(\D;\phi \right) \big| \left(\phi,\widehat \psi_{\phi} \right)}(\alpha)$ for every $\phi$ as in Algorithm~\ref{alg:estimate_cutoffs} using $\T^\prime$
        \ElsIf{$\lambda$ == \texttt{BFF}}
            \State Let $\pi_\Psi(\psi)$ be the restriction of proposal distribution $\pi$ over $\Psi$
            \State Let $\lambda(\D; \phi) \gets \widehat{\tau}(\D; \phi)$
            be the \BFF statistic (Equation \ref{eq:BFF_statistic}) with estimated odds
            \State Learn $\hat C_{\phi}= \widehat{F}^{-1}_{\lambda \left(\D;\phi \right) \big| \left(\phi\right)}(\alpha)$ for every $\phi$ (no $\psi$) as in Algorithm~\ref{alg:estimate_cutoffs}
        \EndIf\\
        
        \State \codecomment{Confidence sets for $\phi$ via Neyman inversion}
        \State Initialize confidence set $\widehat{R}(D) \gets \emptyset$
        \State Let ${\rm L}_\Phi$ be a lattice over $\Phi$ with $n_{\rm grid}$ elements
        \For{$\phi_0 \in {\rm L}_\Phi$} 
    	\If{ $\lambda(D; \phi_0) \geq \widehat{C}_{\phi_0}$ }
                \State $\widehat{R}(D) \gets  \widehat{R}(D)  \cup \{ \phi_0 \}$
            \EndIf
        \EndFor
	\State \textbf{return} confidence set  $\widehat{R}(D)$
    \end{algorithmic}
\end{algorithm}

\section{Theoretical Guarantees of Power for \ACORE with Calibrated Critical Values}\label{app:acore}
  
Next, we show\revJMLR{, for finite $\Theta$,} that as long as the probabilistic classifier is consistent and the critical values are well estimated (which holds for large $B'$ according to Theorem~\ref{thm:convergenceCutoffs}), the power of the \texttt{ACORE} test converges to the power of the LRT as $B$ grows.  
 
\begin{thm}
\label{thm:convergenceLRT}
For each $C\in \mathbb{R}$, let $\widehat{\phi}_{B,C}(\mathcal{D})$ be the test based on
the \ACORE statistic $\hat \Lambda_B$  \revJMLR{with critical value $C$}\footnote{That is, $\widehat{\phi}_{B,C}(\mathcal{D})=1  \iff \hat \Lambda_B(\D; \Theta_0) < C$.} \revJMLR{for number of simulations $B$ in Algorithm~\ref{alg:joint_y}.} Moreover, 
let 
$\phi_{C}(\mathcal{D})$ be the likelihood ratio test with  critical value
$C$. 
If, for every
$\theta \in \Theta$, the probabilistic classifier is such that
$$\widehat{\P}(Y=1|\theta,\X) \xrightarrow[B \longrightarrow\infty]{\enskip P \enskip} \P(Y=1|\theta,\X),$$
where $|\Theta|<\infty$, and $\widehat{C}_B$ 
is chosen  such that
$\widehat{C}_B
\xrightarrow[B \longrightarrow\infty]{\enskip D \enskip}C$ for a given $C \in \mathbb{R}$,
then, for every $\theta \in \Theta$,
$$\P_{\mathcal{D},\T|\theta}\left(\widehat{\phi}_{B,\widehat{C}_B}(\mathcal{D})=1 \right) 
\xrightarrow[B \longrightarrow\infty]{} \P_{\mathcal{D}|\theta}\left(\phi_{C}(\mathcal{D})=1\right).
$$
\end{thm}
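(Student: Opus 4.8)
The plan is to show that the estimated \ACORE statistic converges in probability to the exact likelihood‑ratio statistic, that the random cutoff $\widehat{C}_B$ converges to $C$, and then to combine these into convergence of the \emph{rejection indicator}, from which convergence of the power follows by bounded convergence.

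First I would establish that, jointly over $\D \sim F_\theta$ and the training sample $\T$,
$$\widehat{\Lambda}_B(\D;\Theta_0) \xrightarrow[B\to\infty]{\enskip\P\enskip} \mathrm{LR}(\D;\Theta_0).$$
By hypothesis $\widehat{\P}(Y=1\mid\theta',\X)\xrightarrow{\P}\P(Y=1\mid\theta',\X)$ for every $\theta'\in\Theta$; since each $\X_i$ (with $\D\sim F_\theta$) is independent of $\T$, a conditioning/bounded‑convergence argument lifts this pointwise‑in‑$\X$ statement to convergence in probability of $\widehat{\mathbb{O}}(\X_i;\theta')=\widehat{\P}(Y=1\mid\theta',\X_i)/\widehat{\P}(Y=0\mid\theta',\X_i)$ towards $\mathbb{O}(\X_i;\theta')$, for each $i\le n$ and $\theta'\in\Theta$ (the map $p\mapsto p/(1-p)$ being continuous where the limiting probability lies in $(0,1)$). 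Because $|\Theta|<\infty$ and $n$ is fixed, the finite vector $\bigl(\widehat{\mathbb{O}}(\X_i;\theta')\bigr)_{i\le n,\,\theta'\in\Theta}$ converges in probability to its true counterpart, and $\widehat{\Lambda}_B$ is a continuous function of this vector at the limit point (all true odds being positive and finite). The continuous mapping theorem then gives convergence of $\widehat{\Lambda}_B$ to $\log\frac{\sup_{\theta\in\Theta_0}\prod_i\mathbb{O}(\X_i;\theta)}{\sup_{\theta\in\Theta}\prod_i\mathbb{O}(\X_i;\theta)}$, which equals $\mathrm{LR}(\D;\Theta_0)$ by the Fisher‑consistency identity of \cite{dalmasso2020ACORE} (the reference‑density factors cancel in the ratio).

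Next I would use that convergence in distribution to the constant $C$ is equivalent to convergence in probability, so $\widehat{C}_B\xrightarrow{\P}C$. Writing the rejection indicators as $g_B:=\I(\widehat{\Lambda}_B-\widehat{C}_B<0)$ and $g:=\I(\mathrm{LR}-C<0)$, the two displays above give $\widehat{\Lambda}_B-\widehat{C}_B\xrightarrow{\P}\mathrm{LR}-C$. For any $\delta>0$, on the asymptotically certain event $\{|(\widehat{\Lambda}_B-\widehat{C}_B)-(\mathrm{LR}-C)|<\delta\}$ the indicators $g_B$ and $g$ can differ only when $|\mathrm{LR}-C|\le\delta$, so
$$\P(g_B\neq g)\le \P\bigl(|(\widehat{\Lambda}_B-\widehat{C}_B)-(\mathrm{LR}-C)|\ge\delta\bigr)+\P\bigl(|\mathrm{LR}-C|\le\delta\bigr).$$
Letting $B\to\infty$ and then $\delta\downarrow 0$, the right‑hand side tends to $\P(\mathrm{LR}=C)$, which is zero provided $\mathrm{LR}(\D;\Theta_0)$ has no atom at $C$ (e.g.\ when $F_\theta$ is continuous, so $\mathrm{LR}$ is an absolutely continuous random variable). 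Hence $g_B\xrightarrow{\P}g$, and since the indicators are bounded, bounded convergence yields $\P_{\D,\T|\theta}(\widehat{\phi}_{B,\widehat{C}_B}(\D)=1)=\E[g_B]\to\E[g]=\P_{\D|\theta}(\phi_C(\D)=1)$, as claimed.

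The main obstacle is precisely this last step — passing from convergence in probability of the rejection indicator to convergence of its expectation (the power). It is here that the distribution of $\mathrm{LR}(\D;\Theta_0)$ must place no mass at the threshold $C$; without such a no‑atom condition the strict‑inequality event at the boundary need not stabilize, and the limit could instead involve $\P(\mathrm{LR}\le C)$. A secondary technical point is the lifting of the classifier's pointwise‑in‑$\X$ consistency to joint convergence over the random data $\D$, which is routine but relies essentially on the finiteness of $\Theta$ and fixed $n$, so that only finitely many continuous‑mapping operations are needed.
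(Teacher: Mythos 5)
Your proof is correct and follows the same overall skeleton as the paper's: establish convergence in probability of $\widehat{\Lambda}_B$ to the exact statistic via the continuous mapping theorem (exploiting $|\Theta|<\infty$ and fixed $n$ so that the sup/inf are finite maxima/minima), identify the limit with $\mathrm{LR}(\D;\Theta_0)$ via Fisher consistency, handle the random cutoff, and pass to the limit of the rejection probability. Where you diverge is the finishing step. The paper converts convergence in probability to convergence in distribution, applies Slutsky's theorem to $\widehat{\Lambda}_B - \widehat{C}_B$, and then reads off the limit of $\P_{\D,\T|\theta}\left(\widehat{\Lambda}_B - \widehat{C}_B \leq 0\right)$ as the probability that the limiting variable is $\leq 0$. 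You instead note that $\widehat{C}_B \xrightarrow{\P} C$ (convergence in distribution to a constant), and run a direct indicator-comparison bound
$\P(g_B \neq g) \leq \P\bigl(|(\widehat{\Lambda}_B-\widehat{C}_B)-(\mathrm{LR}-C)| \geq \delta\bigr) + \P\bigl(|\mathrm{LR}-C| \leq \delta\bigr)$, followed by bounded convergence. The two routes are essentially equivalent in strength, but yours has a real merit: it makes explicit the no-atom condition $\P_{\D|\theta}\left(\mathrm{LR}(\D;\Theta_0)=C\right)=0$ that is genuinely needed. The paper's last step --- evaluating the limiting distribution function at $0$ --- is valid only if $0$ is a continuity point of the CDF of $\mathrm{LR}(\D;\Theta_0)-C$, which is exactly the same condition; neither the theorem statement nor the paper's proof states it (the paper also silently switches between the strict inequality in the definition of $\widehat{\phi}_{B,C}$ and the non-strict one in the proof, which again only matters when there is an atom at the threshold). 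So your "main obstacle" paragraph is not a defect of your argument but a correct identification of an implicit hypothesis in the published result; your treatment of lifting the classifier's pointwise consistency to the random $\X_i$'s is likewise slightly more careful than the paper's one-line appeal to "properties of convergence in probability."
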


\begin{proof}
Because $\widehat{\P}(Y=1|\theta,\X) \xrightarrow[B \longrightarrow\infty]{\enskip P \enskip} \P(Y=1|\theta,\X)$,  it follows directly from the properties of convergence in probability that
for every $\theta_0,\theta_1 \in \Theta$
 \begin{align*}
    \sum_{i=1}^n  \log & \left( \widehat{\Or}(\X_i^{\text{obs}};\theta_0,\theta_1)\right) %\\ & 
    \xrightarrow[B \longrightarrow\infty]{\enskip P \enskip}
 \sum_{i=1}^n  \log \left( \Or(\X_i^{\text{obs}};\theta_0,\theta_1)\right). 
 \end{align*}
 The continuous mapping theorem implies that
 $$\hat \Lambda_B(\D; \Theta_0) \xrightarrow[B \longrightarrow\infty]{\enskip P \enskip}
 \sup_{\theta_0 \in \Theta_0}\inf_{\theta_1 \in \Theta} \sum_{i=1}^n  \log \left( \Or(\X_i^{\text{obs}};\theta_0,\theta_1)\right),$$
and therefore
$\hat \Lambda_B(\D; \Theta_0)$ converges in distribution to 
$\sup_{\theta_0 \in \Theta_0}\inf_{\theta_1 \in \Theta}\sum_{i=1}^n \linebreak   \log \left( \Or(\X_i^{\text{obs}};\theta_0,\theta_1)\right)$. 
Now, from Slutsky's theorem,
\begin{align*}
\hat \Lambda_B(\D; \Theta_0)-\widehat{C}_B
\xrightarrow[B \longrightarrow\infty]{\enskip  \enskip}\sup_{\theta_0 \in \Theta_0}\inf_{\theta_1 \in \Theta}
\sum_{i=1}^n  \log \left( \Or(\X_i^{\text{obs}};\theta_0,\theta_1)\right)-C. 
\end{align*} 
It follows that
\begin{align*}
 &\P_{\mathcal{D},\T|\theta}\left(\widehat{\phi}_{B,\widehat{C}_B}(\mathcal{D})=1 \right)=
 \P_{\mathcal{D},\T|\theta}\left(\hat \Lambda_B(\D; \Theta_0)-\widehat{C}_B\leq 0 \right) \\
 &\xrightarrow[B \longrightarrow\infty]{}
 \P_{\mathcal{D}|\theta}\Big(\sup_{\theta_0 \in \Theta_0}\inf_{\theta_1 \in \Theta} \sum_{i=1}^n  \log \left( \Or(\X_i^{\text{obs}};\theta_0,\theta_1)\right)-C \leq 0 \Big)\\
 &=\P_{\mathcal{D}|\theta}\left(\phi_{C}(\mathcal{D})=1 \right),
\end{align*}
where the last equality follows from Proposition~\ref{prop::consistency}.
\end{proof}
 
\section{Analysis of Critical Values for Experiments~\ref{sec:GMM} and \ref{sec:hep_example}}\label{app:critical_vals}

\begin{figure}[t!]
    \centering
    \includegraphics[width=0.9\textwidth]{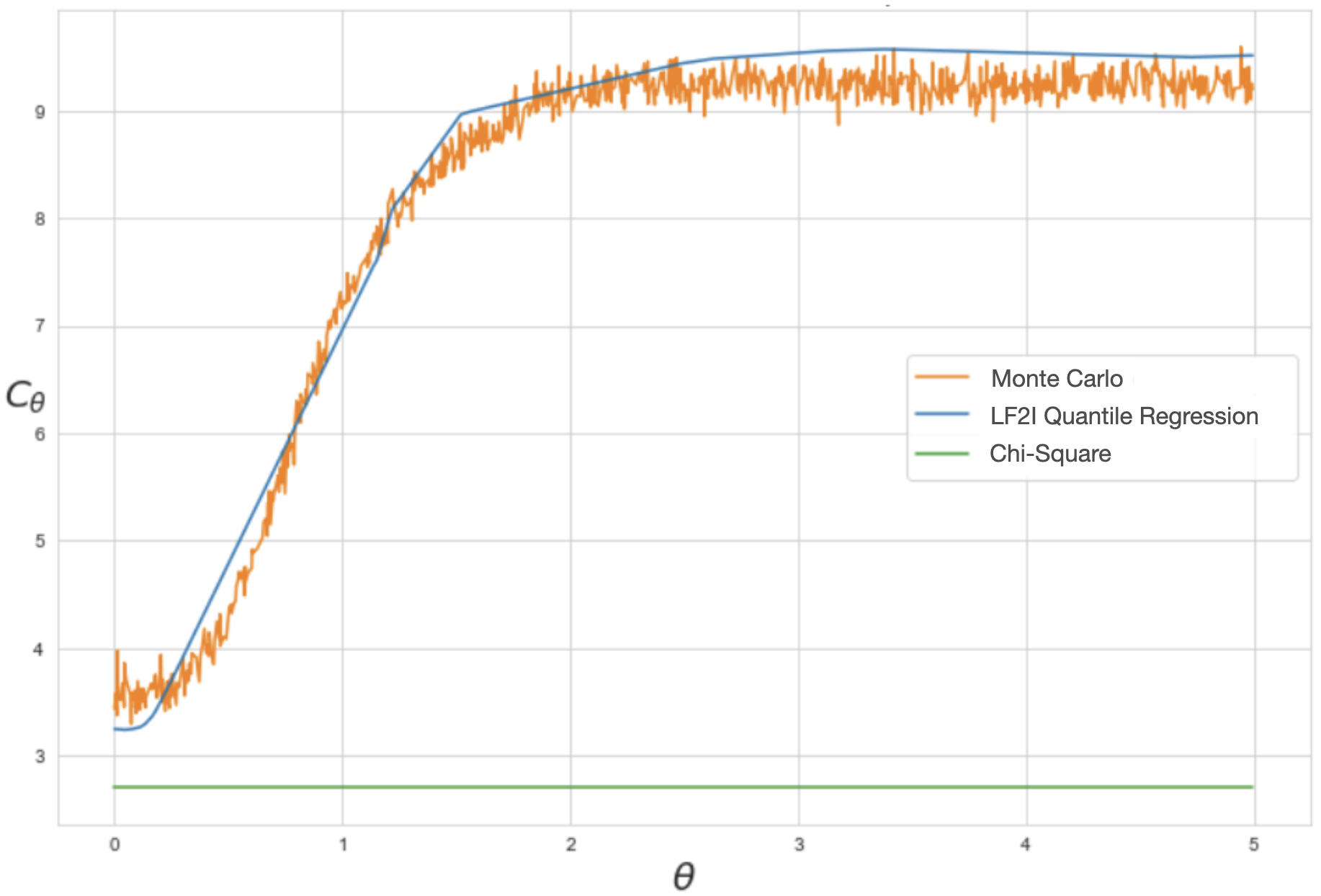}
    \caption{\small \addTwo{{\bf Comparison of critical values} obtained via  \textcolor{orange}{Monte Carlo}, the \textcolor{ForestGreen}{Chi-Square} asymptotic assumption of Wilks' Theorem, and  \textcolor{NavyBlue}{LF2I Quantile Regression}, for the GMM example of Section~\ref{sec:GMM}. }}\vspace{-5mm}
    \label{fig:critical_61}
\end{figure}

\begin{figure}[b!]
    \centering
    \includegraphics[width=1\textwidth]{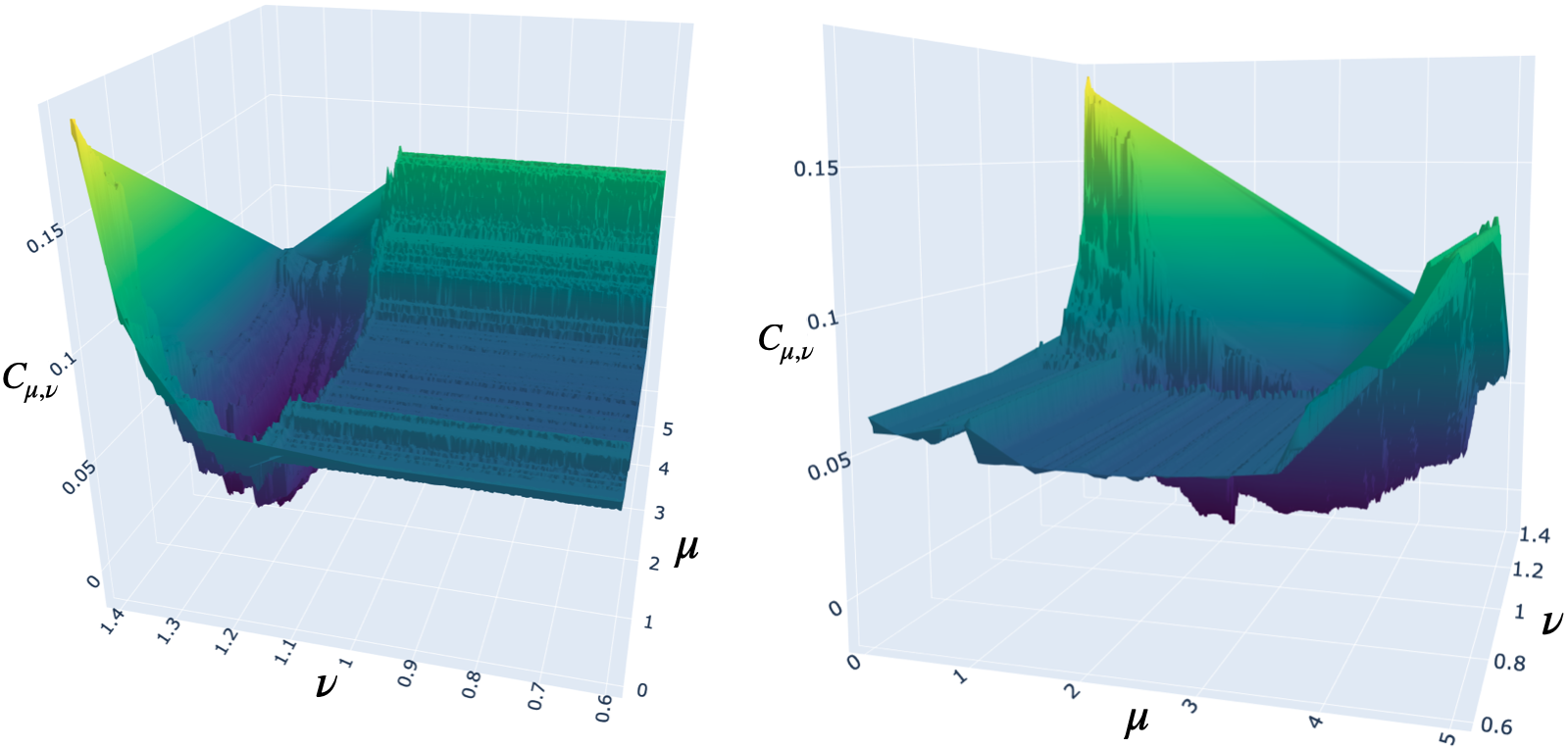}
    \caption{\small \addTwo{{\bf Critical values of $\texttt{h-ACORE}$ estimated via quantile regression} as a function of the parameter of interest $\mu$ and the nuisance parameter $\nu$, for the example of Section~\ref{sec:hep_example}. The figures show the same 2D surface from two different angles.}}
    \label{fig:critical62}
\end{figure}

\addTwo{
In this section we visualize how critical values vary across the parameter space $\Theta$ for the experiments of Sections~\ref{sec:GMM} and~\ref{sec:hep_example}. Figure~\ref{fig:critical_61} compares critical values for the exact LRT of the Gaussian Mixture Model (GMM) example, where the distribution of the test statistic is unknown, using three different methods: \\\\
\textit{i)} The first approach is to compute cutoffs via Monte Carlo (MC) simulations at fixed values of $\theta$. These critical values can be considered the “ground truth”, since for this one-dimensional example we were able to use a high-resolution grid and large batches at each grid point. Unfortunately, MC quickly becomes infeasible if the dimensionality of the parameter space increases. In addition, a scientist cannot adopt MC samples in practical settings, where one only has access to a pre-determined data set and not to the simulator itself. \\\\
\textit{ii)} The second approach is to assume that the cutoff is (asymptotically) constant across the parameter space. Here we have computed cutoffs assuming that Wilks' theorem holds and that the limiting distribution is a $\chi^2$-distribution, which is not the case. Indeed, the bottom central panel of Figure~\ref{fig:GMM_coverage} shows that the $\chi^2$-approximation achieves correct coverage only when $\theta = 0$ (i.e., when the GMM collapses to one Gaussian).\\\\
\textit{iii)} The third approach is to compute the critical values of the (known) test statistic via quantile regression (QR). With a very small calibration set ($0.1\%$ of the total simulations used for the MC approach), QR is able to approximate the quantile surface and achieve nominal coverage for all values of $\theta$ (see Figure~\ref{fig:GMM_coverage}).\\\\

\noindent Figure~\ref{fig:critical62} shows similar results  for the HEP example of Section~\ref{sec:hep_example}; here we visualize the  the critical values of $\texttt{h-ACORE}$ (estimated via LF2I) as a function of the parameter of interest $\mu$ and the nuisance parameter $\nu$. Again, we see evidence that the quantile surface is far from being constant, and that the test statistic is not pivotal. Hence, there is a need for a quantile regression that adapts to the varying distribution of the test statistic.
}

\section{Additional Proofs}
\label{app:theory}

\begin{proof}[Proof of Proposition \ref{prop::consistency}]
\revJMLR{\revEJS{Because $\nu$ dominates $F_\theta$, $G$ also dominates $F_\theta$.}
Let  $f(\x|\theta)$ be the density of $F_\theta$ with respect to $G$. } 
By Bayes rule, 
\revJMLR{$${\mathbb{O}}(\x;\theta):=\frac{\P(Y=1|\theta,\x)}{\P(Y=0|\theta,\x)}=\frac{f(\x|\theta)p}{(1-p)}.$$}
If $\ \widehat{\P}(Y=1|\theta,\x)=\P(Y=1|\theta,\x)$, then $\widehat{\mathbb{O}}(\x;\theta_0) = \mathbb{O}(\x;\theta_0)$. Therefore, 
\revJMLR{\begin{align*}
\widehat{\tau}(\D;\Theta_0) :=& \frac{\int_{\Theta_0}  \prod_{i=1}^n  \hat{\mathbb{O}}(\X_i^{\text{obs}};\theta)d\pi_0(\theta)}{ \int_{\Theta_1}  \prod_{i=1}^n  \hat{\mathbb{O}}(\X_i^{\text{obs}};\theta)  d\pi_1(\theta)}\\ 
=& \frac{\int_{\Theta_0}  \prod_{i=1}^n   \mathbb{O}(\X_i^{\text{obs}};\theta)d\pi_0(\theta)}{ \int_{\Theta_1}  \prod_{i=1}^n  \mathbb{O}(\X_i^{\text{obs}};\theta)  d\pi_1(\theta)} \\
=&   \frac{\int_{\Theta_0}  \prod_{i=1}^n  \frac{f(\X_i^{\text{obs}} | \theta)p}{(1-p)}d\pi_0(\theta)}{ \int_{\Theta_1}  \prod_{i=1}^n   \frac{f(\X_i^{\text{obs}} | \theta)p}{(1-p)}   d\pi_1(\theta)}\\
=&   \frac{\int_{\Theta_0}  \prod_{i=1}^n  f(\X_i^{\text{obs}} | \theta)d\pi_0(\theta)}{ \int_{\Theta_1}  \prod_{i=1}^n   f(\X_i^{\text{obs}} | \theta) d\pi_1(\theta)}
\end{align*}}
\revEJS{Moreover, the chain rule implies that $f(\x|\theta)=p(\x|\theta)h(\x)$, where $h(\x):=\frac{d\nu}{dG}(\x)$. It follows that
\begin{align*}
   \widehat{\tau}(\D;\Theta_0) =& \frac{\int_{\Theta_0}  \prod_{i=1}^n  f(\X_i^{\text{obs}} | \theta)d\pi_0(\theta)}{ \int_{\Theta_1}  \prod_{i=1}^n   f(\X_i^{\text{obs}} | \theta) d\pi_1(\theta)} \\ 
   &=\frac{\int_{\Theta_0}  \prod_{i=1}^n  p(\X_i^{\text{obs}} | \theta)h(\X_i^{\text{obs}})d\pi_0(\theta)}{ \int_{\Theta_1}  \prod_{i=1}^n   p(\X_i^{\text{obs}} | \theta) h(\X_i^{\text{obs}})d\pi_1(\theta)} \\ 
    &=\frac{\int_{\Theta_0}  \prod_{i=1}^n  p(\X_i^{\text{obs}} | \theta)d\pi_0(\theta)}{ \int_{\Theta_1}  \prod_{i=1}^n   p(\X_i^{\text{obs}} | \theta) d\pi_1(\theta)} \\
   &=\frac{\int_{\Theta_0}\mathcal{L}(\D;\theta)d\pi_0(\theta)}{\int_{\Theta_1}\mathcal{L}(\D;\theta) d\pi_1(\theta)}\\
&= \text{BF}(\D;\Theta_0).
\end{align*} }
\end{proof}

\revEJS{\begin{proof}[Proof of Theorem \ref{thm:valid_tests}] By definition, for every fixed $c_{B'}$,  
$\P_{\mathcal{D}|\theta_0,C_{B'}}(\lambda(\mathcal{D};\theta_0) \leq c_{B'}) = F(c_{B'}|\theta_0)$. It follows that the random variable $\P_{\mathcal{D}|\theta_0,C_{B'}}(\lambda(\mathcal{D};\theta_0) \leq C_{B'}) = F(C_{B'}|\theta_0)$. Moreover,
 by construction,
$\alpha=\widehat F_{B'}(C_{B'}|\theta_0)$. It follows that
\begin{align*}
     |\P_{\mathcal{D}|\theta_0,C_{B'}}(\lambda(\mathcal{D};\theta_0) \leq C_{B'}) - \alpha| &=  | F(C_{B'}|\theta_0) - \alpha|    \\
     &= |F(C_{B'}|\theta_0)  - \widehat F_{B'}(C_{B'}|\theta_0)|\\
     &\leq \sup_{\lambda \in \mathbb{R}}|  F(\lambda|\theta_0)-\widehat F_{B'}(\lambda|\theta_0)|\xrightarrow[B' \longrightarrow\infty]{\enskip P \enskip} 0.
\end{align*}
The result follows from the fact that convergence in probability to a constant implies almost sure convergence.
\end{proof}}

\revJMLR{
\begin{proof}[Proof of Theorem \ref{thm:valid_tests_rate}]
The proof  follows from applying the convergence rate to the last equation in the proof of Theorem \ref{thm:valid_tests}.
\end{proof}
}

\revJMLR{ \begin{Assumption}[Uniform consistency \revEJS{in $\theta$ and $\lambda$}]
\label{assum:quantile_consistent}
Let $\hat F_{B'}(\cdot|\theta)$ be the estimated cumulative distribution function of the test statistic $\lambda(\mathcal{D};\Theta_0)$ conditional on $\theta$ based on a sample $\mathcal{T}^{'}$ with size $B'$ implied by the quantile regression, and
let $ F(\cdot|\theta)$ be its true \addTwo{distribution given $\theta$}.
Assume that the quantile regression estimator is such that
$$\sup_{\theta \in \Theta_0, \lambda \in \mathbb{R}}|\hat F_{B'}(\lambda|\theta)-  F(\lambda|\theta)|\xrightarrow[B' \longrightarrow\infty]{\enskip P \enskip} 0.$$
This assumption holds, for instance, for quantile regression forests \citep{meinshausen2006qrforest} under additional assumptions  (see Proposition~\ref{prop:uniform_convergence}).
\end{Assumption}}

\revJMLR{
\begin{prop}
\label{prop:uniform_convergence}
If, for every $\theta \in \Theta_0$, 
the quantile regression estimator is such that
\begin{align}
    \label{eq:point_conv}
\sup_{\lambda \in \mathbb{R}}|\hat F_{B'}(\lambda|\theta)-  F(\lambda|\theta)|\xrightarrow[B' \longrightarrow\infty]{\enskip P \enskip} 0
\end{align}
and either
\begin{itemize}
 \item $|\Theta|<\infty$ or,
    \item $\Theta$ is a compact subset of $\mathbb{R}^d$, and the function $g_{B'}(\theta) = \sup_{t \in \mathbb{R}} | \hat F_{B'}(t|\theta) - F(t|\theta)|$ is almost surely
continuous in $\theta$ and strictly decreasing in $B'$,
\end{itemize}
\end{prop}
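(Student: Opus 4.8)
The plan is to prove the two cases separately, in both reducing the claim to the behaviour of the random ``error'' function
\[
g_{B'}(\theta):=\sup_{\lambda\in\mathbb{R}}|\hat F_{B'}(\lambda|\theta)-F(\lambda|\theta)|,
\]
for which Equation~\ref{eq:point_conv} gives $g_{B'}(\theta)\xrightarrow[B'\to\infty]{\P}0$ for each fixed $\theta\in\Theta_0$, and whose uniform control $\sup_{\theta\in\Theta_0}g_{B'}(\theta)\xrightarrow[B'\to\infty]{\P}0$ is exactly the conclusion asserted in Assumption~\ref{assum:quantile_consistent}.

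\textbf{Finite case.} When $|\Theta|<\infty$, I would simply note that $\sup_{\theta\in\Theta_0}g_{B'}(\theta)=\max_{\theta\in\Theta_0}g_{B'}(\theta)$ is a maximum over finitely many random variables, and apply a union bound: for every $\epsilon>0$,
\[
\P\!\left(\max_{\theta\in\Theta_0}g_{B'}(\theta)>\epsilon\right)\le\sum_{\theta\in\Theta_0}\P\!\left(g_{B'}(\theta)>\epsilon\right)\xrightarrow[B'\to\infty]{}0,
\]
since each summand vanishes by Equation~\ref{eq:point_conv} and the sum is finite. This settles the finite case with no further assumptions.

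\textbf{Compact case.} For compact $\Theta\subset\mathbb{R}^d$ the strategy is a Dini-type argument on the sample paths of $g_{B'}$. First I would fix a sample path in the almost-sure event on which $\theta\mapsto g_{B'}(\theta)$ is continuous and $B'\mapsto g_{B'}(\theta)$ is strictly decreasing; on such a path the bounded decreasing sequence $g_{B'}(\theta)$ has a pointwise limit $g_\infty(\theta)\ge0$. Comparing the almost-sure limit $g_\infty(\theta)$ with the in-probability limit $0$ from Equation~\ref{eq:point_conv} and using uniqueness of limits, I obtain $g_\infty(\theta)=0$ almost surely for each fixed $\theta$. I would then invoke Dini's theorem: a sequence of continuous functions on a compact set decreasing monotonically to a continuous limit ($\equiv 0$) converges uniformly, so that $\sup_{\theta\in\Theta}g_{B'}(\theta)\to0$ along that path; since the path lies in a probability-one event, this upgrades to convergence in probability, which is the desired conclusion.

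\textbf{Main obstacle.} The delicate point --- and the step I expect to require the most care --- is the passage from ``for each fixed $\theta$, $g_\infty(\theta)=0$ almost surely'' to ``almost surely, $g_\infty(\theta)=0$ for every $\theta$ simultaneously'', which is what a pathwise application of Dini actually needs. Each fixed-$\theta$ statement carries its own null exceptional set, and a union of uncountably many null sets need not be null; moreover a monotone decreasing limit of continuous functions is only upper semicontinuous in general, so knowing $g_\infty\equiv 0$ on a countable dense subset does not by itself force $g_\infty\equiv 0$ everywhere (a ``moving spike'' located at a random $\theta$ can keep $\sup_\theta g_\infty(\theta)$ bounded away from $0$). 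This is precisely where the two hypotheses --- almost-sure continuity in $\theta$ and strict monotonicity in $B'$ --- must be exploited in tandem to capture the relevant exceptional behaviour on a single probability-one event and to rule out such escaping mass. I would therefore expect the rigorous verification of this common-event pointwise convergence (possibly through an equicontinuity or tightness consideration on the family $\{g_{B'}\}$) to be the technical heart of the argument, with the two flagged cases otherwise following routinely.
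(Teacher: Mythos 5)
Your proposal follows exactly the same two-case route as the paper: the finite case is handled by the identical argument (a union bound over the finitely many parameter values, combined with Equation~\ref{eq:point_conv}), and the compact case by a pathwise application of Dini's theorem using continuity in $\theta$ and monotonicity in $B'$. The difference is that the paper's entire treatment of the compact case is the single phrase ``Similarly, by Dini's theorem,'' whereas you stop short and name the step that Dini actually requires: a \emph{single} probability-one event on which $g_{B'}(\theta)\to 0$ for every $\theta$ simultaneously, with a continuous limit function. Your finite case is complete and correct.

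The obstacle you flag in the compact case is genuine, and your hope that almost-sure continuity and strict monotonicity ``exploited in tandem'' can close it is misplaced: they cannot, because the proposition as stated is false, and the hole sits equally under the paper's own one-line proof. Concretely, take $\Theta_0=\Theta=[0,1]$, $F(\cdot|\theta)=\Phi$ (the standard normal CDF), let $U$ be a uniform random variable measurable with respect to the training data (its role is the random spike location), and let $\hat F_{B'}(t|\theta)=\Phi\left(t-a_{B'}(\theta)\right)$, where the shift $a_{B'}(\theta)\geq 0$ is chosen — using that $a\mapsto\sup_t|\Phi(t-a)-\Phi(t)|=2\Phi(a/2)-1$ is a continuous, strictly increasing bijection onto $[0,1)$ — so that $g_{B'}(\theta)=\frac{1}{4}\max\left(1-B'|\theta-U|,0\right)+\frac{1}{4B'}$. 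Then $g_{B'}$ is continuous in $\theta$ and strictly decreasing in $B'$ for every realization, and for each fixed $\theta$ we have $\P(U=\theta)=0$, hence $g_{B'}(\theta)\to 0$ almost surely, so Equation~\ref{eq:point_conv} holds; yet $\sup_{\theta\in[0,1]}g_{B'}(\theta)=g_{B'}(U)\geq\frac{1}{4}$ for all $B'$, so the conclusion of Assumption~\ref{assum:quantile_consistent} fails. This is precisely your ``moving spike'': the fixed-$\theta$ null sets conspire at a random location, and the upper-semicontinuous pathwise limit carries the escaping mass. The constructive takeaway is that the Dini argument — yours and the paper's — becomes valid once the hypothesis is strengthened to almost-sure pointwise convergence $g_{B'}(\theta)\to 0$ for all $\theta$ simultaneously (equivalently, almost-sure continuity of the pathwise limit, which then equals zero by your uniqueness-of-limits step on a countable dense set); under that hypothesis your argument goes through verbatim, so the missing idea is a strengthened assumption rather than a cleverer proof.
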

then Assumption~\ref{assum:quantile_consistent} holds.} 
\vspace{4mm}

\revJMLR{
\begin{proof}
If $|\Theta|<\infty$,
 the union bound and Equation \ref{eq:point_conv} imply that
 \begin{align}
 \label{eq:sup0}
 \sup_{\theta \in \Theta_0} \sup_{\lambda \in \mathbb{R}}|\hat F_{B'}(\lambda|\theta)- F(\lambda|\theta)|\xrightarrow[B' \longrightarrow\infty]{\enskip P \enskip} 0.    
 \end{align}
  Similarly, by Dini's theorem, Equation~\ref{eq:sup0} also holds if $\Theta$ is a compact subset of $\mathbb{R}^d$, and the function $g_{B'}(\theta)$ is
continuous in $\theta$ and strictly decreasing in $B'$.
\end{proof}
}

\revJMLR{
\begin{thm}
 \label{thm:convergenceCutoffs}
Let 
$C_{B'} \in \mathbb{R}$ be the 
critical value 
of the test based on
a \revEJS{absolutely} continuous statistic  $\lambda(\mathcal{D};\Theta_0)$ chosen according to Algorithm~\ref{alg:estimate_cutoffs}
for a fixed $\alpha \in (0,1)$. If the quantile
estimator satisfies Assumption~\ref{assum:quantile_consistent},
then 
$$ C_{B'}  \xrightarrow[B' \longrightarrow\infty]{\enskip P \enskip} C^*,$$
where $C^*$ is such that 
$$\sup_{\theta \in \Theta_0}\P_{\mathcal{D}|\theta}(\lambda(\mathcal{D};\Theta_0) \leq C^*) = \alpha.$$
\end{thm}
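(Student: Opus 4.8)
The plan is to reduce this composite-null statement to a single quantile-convergence argument by passing to the \emph{envelope} distribution function. Recall that the cutoff produced by Algorithm~\ref{alg:estimate_cutoffs} for a composite null is $C_{B'} = \inf_{\theta \in \Theta_0} \widehat{C}_\theta$, where $\widehat{C}_\theta = \widehat{F}_{B'}^{-1}(\alpha \mid \theta)$. I would define the envelopes $G(c) := \sup_{\theta \in \Theta_0} F(c\mid\theta)$ and $\widehat{G}_{B'}(c) := \sup_{\theta \in \Theta_0} \widehat{F}_{B'}(c\mid\theta)$. The first observation is an order-duality: since each $F(\cdot\mid\theta)$ is nondecreasing, $c \geq \inf_\theta \widehat{C}_\theta$ holds precisely when $\widehat{F}_{B'}(c\mid\theta) \geq \alpha$ for some $\theta$, i.e.\ when $\widehat{G}_{B'}(c) \geq \alpha$; writing $\widehat{G}_{B'}^{-1}(\alpha)$ for the corresponding generalized inverse this gives $C_{B'} = \widehat{G}_{B'}^{-1}(\alpha)$, and identically $\inf_\theta C_\theta = G^{-1}(\alpha)$ for the true quantiles. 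By the defining property of $C^*$, namely $\sup_{\theta\in\Theta_0}F(C^*\mid\theta)=\alpha$, we have $G(C^*) = \alpha$, so $C^* = G^{-1}(\alpha)$. The whole theorem therefore becomes the single statement $\widehat{G}_{B'}^{-1}(\alpha) \xrightarrow{\P} G^{-1}(\alpha)$.

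Next I would transfer the uniform consistency from $\widehat{F}_{B'}$ to the envelope. Using $|\sup_\theta a_\theta - \sup_\theta b_\theta| \leq \sup_\theta |a_\theta - b_\theta|$,
$$\sup_{c\in\mathbb{R}} |\widehat{G}_{B'}(c) - G(c)| \;\leq\; \sup_{\theta\in\Theta_0,\,c\in\mathbb{R}} |\widehat{F}_{B'}(c\mid\theta) - F(c\mid\theta)| \xrightarrow[B'\to\infty]{\P} 0$$
by Assumption~\ref{assum:quantile_consistent}. With uniform convergence of $\widehat{G}_{B'}$ to $G$ in hand, the final step is the classical sandwich argument for quantiles: fixing $\epsilon > 0$, provided $G(C^*-\epsilon) < \alpha < G(C^*+\epsilon)$, one sets $\delta := \min\{\alpha - G(C^*-\epsilon),\, G(C^*+\epsilon) - \alpha\} > 0$, and on the event $\{\sup_c|\widehat{G}_{B'}-G|<\delta\}$ (whose probability tends to one) the monotonicity of $\widehat{G}_{B'}$ forces $C^* - \epsilon \leq \widehat{G}_{B'}^{-1}(\alpha) \leq C^* + \epsilon$. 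Since $\epsilon$ is arbitrary, this yields $C_{B'} \xrightarrow{\P} C^*$.

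I expect the main obstacle to be justifying the strict inequalities $G(C^*-\epsilon) < \alpha < G(C^* + \epsilon)$ that make the quantile step go through, i.e.\ the continuity and local strict monotonicity of the envelope $G$ at $C^*$ (equivalently, uniqueness of the solution of $G(C^*)=\alpha$). The hypothesis that $\lambda(\D;\Theta_0)$ is absolutely continuous makes each $F(\cdot\mid\theta)$ continuous, but a supremum of continuous functions is in general only lower semicontinuous, so some additional care is needed to rule out a flat spot of $G$ at level $\alpha$ — for instance invoking a strictly positive density near the relevant quantile, or compactness of $\Theta_0$ together with joint continuity of $(c,\theta)\mapsto F(c\mid\theta)$ to obtain continuity of $G$. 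I would also double-check the boundary behaviour of the generalized inverses used in the duality step, since the identity $C_{B'} = \widehat{G}_{B'}^{-1}(\alpha)$ relies on the monotonicity and right-continuity conventions for the conditional CDFs, and the union-of-level-sets identity $\inf_\theta C_\theta = G^{-1}(\alpha)$ can differ from its naive form only on a negligible boundary set.
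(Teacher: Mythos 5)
Your proposal takes a genuinely different route from the paper's. The paper's proof first converts Assumption~\ref{assum:quantile_consistent} into uniform-in-$\theta$ convergence of the conditional \emph{quantiles}, $\sup_{\theta\in\Theta_0}|\widehat F^{-1}_{B'}(\alpha|\theta)-F^{-1}(\alpha|\theta)|\to 0$ in probability, and then applies the inequality $|\mathrm{ext}_\theta\, a_\theta-\mathrm{ext}_\theta\, b_\theta|\le\sup_\theta|a_\theta-b_\theta|$ to the extremum of the quantiles (the paper writes this extremum as a supremum, even though Section~\ref{sec:critical-value} defines the composite cutoff as $\inf_{\theta\in\Theta_0}\widehat C_\theta$; your reading with the infimum is the one consistent with the algorithm, and the inequality holds either way). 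You instead take the extremum at the level of distribution functions, forming the envelope $G=\sup_{\theta\in\Theta_0}F(\cdot|\theta)$, and invert once. This has a real advantage: the only regularity your sandwich step needs is that $G$ has no flat piece at height $\alpha$ near $C^*$, which is exactly the condition under which the $C^*$ in the theorem statement is unique, i.e.\ under which the theorem is well posed. The paper's first step, by contrast, silently requires that each $F(\cdot|\theta)$ be strictly increasing near its own $\alpha$-quantile, uniformly over $\Theta_0$: uniform convergence of CDFs does not imply convergence of quantiles at flat spots, and neither Assumption~\ref{assum:quantile_consistent} nor absolute continuity of $\lambda$ rules these out. On this point your write-up is more honest than the paper's own proof.

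There is, however, a genuine gap in your duality step, and it is worse than your closing remark suggests. Only one direction of the identity $C_{B'}=\widehat G_{B'}^{-1}(\alpha)$ is automatic: since $\{c:\exists\theta\in\Theta_0,\ \widehat F_{B'}(c|\theta)\ge\alpha\}\subseteq\{c:\widehat G_{B'}(c)\ge\alpha\}$, you always have $C_{B'}=\inf_\theta\widehat C_\theta\ \ge\ \widehat G_{B'}^{-1}(\alpha)$. But $\widehat G_{B'}(c)\ge\alpha$ does not produce any single $\theta$ with $\widehat F_{B'}(c|\theta)\ge\alpha$ when the supremum over $\theta$ is not attained, and then the two quantities can differ by a fixed amount, not a boundary-negligible one. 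Concretely, take $\Theta_0=(0,1)$ and let $\widehat F_{B'}(\cdot|\theta)$ put mass $\alpha(1-\theta)$ at $0$ and the remaining mass at $1/\theta$. Then $\widehat C_\theta=1/\theta$ for every $\theta$ (because $\alpha(1-\theta)<\alpha$), so $C_{B'}=\inf_\theta\widehat C_\theta=1$; yet $\widehat G_{B'}(c)=\sup_\theta\alpha(1-\theta)=\alpha$ for all $c\in[0,1)$, so $\widehat G_{B'}^{-1}(\alpha)=0$. Since your sandwich argument controls $\widehat G_{B'}^{-1}(\alpha)$ and $C_{B'}$ only \emph{dominates} that quantity, the argument as written delivers the lower half of the conclusion ($C_{B'}\ge C^*-\epsilon$ with probability tending to one) but not the upper half.

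The fix is local and preserves your structure: avoid the estimated envelope inverse entirely and sandwich $C_{B'}$ against population quantities using strict inequalities. On the event $\sup_{\theta,c}|\widehat F_{B'}(c|\theta)-F(c|\theta)|<\delta$, one has $\inf_\theta F^{-1}(\alpha-\delta|\theta)\le C_{B'}\le\inf_\theta F^{-1}(\alpha+\delta|\theta)$. The lower bound exceeds $C^*-\epsilon$ because $G(C^*-\epsilon)<\alpha-\delta$ for $\delta$ small (your anti-flatness condition below $C^*$). For the upper bound, anti-flatness above $C^*$ gives $G(C^*+\epsilon)>\alpha+\delta$ for $\delta$ small, and a \emph{strict} inequality against a supremum is attained by some $\theta^\dagger$: $F(C^*+\epsilon|\theta^\dagger)>\alpha+\delta$, hence $F^{-1}(\alpha+\delta|\theta^\dagger)\le C^*+\epsilon$ and so $C_{B'}\le C^*+\epsilon$. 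This closes the gap without compactness or attainment assumptions; the non-attainment pathology only bites at equality, which the strict-inequality formulation sidesteps. Alternatively, attainment can be guaranteed directly ($\Theta_0$ finite, or $\Theta_0$ compact with $\widehat F_{B'}(c|\cdot)$ upper semicontinuous, in the spirit of Proposition~\ref{prop:uniform_convergence}), but the strict-inequality repair is both simpler and more general.
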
}
 \vspace{4mm}
 
\revJMLR{
\begin{proof}
Assumption~\ref{assum:quantile_consistent} implies that 
 $$\sup_{\theta \in \Theta_0} |\hat F^{-1}_{B'}(\alpha|\theta)- F^{-1}(\alpha|\theta)|\xrightarrow[B' \longrightarrow\infty]{\enskip P \enskip} 0.$$
 The result then follows from the fact that
 \begin{align*}
 0 \leq    |C_{B'}-C^*|&= |\sup_{\theta \in \Theta_0} \hat F^{-1}_{B'}(\alpha|\theta)-\sup_{\theta \in \Theta_0}  F^{-1}(\alpha|\theta)| \\
     &\leq \sup_{\theta \in \Theta_0} |\hat F^{-1}_{B'}(\alpha|\theta)- F^{-1}(\alpha|\theta)|,
 \end{align*}
 and thus 
 $$|C_{B'}-C^*| \xrightarrow[B' \longrightarrow\infty]{\enskip P \enskip} 0.$$
 \end{proof}
  }
  
\revJMLR{
\begin{Lemma}
\label{lemma:convergence_dist}
Let $g_1,g_2,\ldots$ be a sequence of random functions such that $g_i: \mathcal{Z} \longrightarrow \mathbb{R}$, and let $Z$ be a random quantity defined over $\mathcal{Z}$, independent of the random functions. Assume that $g(Z)$ is \revEJS{absolutely} continuous \revEJS{with respect to the Lebesgue measure}.
If, for every $z \in \mathcal{Z}$,
$$g_m(z)\xrightarrow[m \longrightarrow\infty]{\enskip \text{a.s.} \enskip} g(z),$$ 
then 
$$g_m(Z)\xrightarrow[m \longrightarrow\infty]{\enskip \mathcal{L} \enskip} g(Z).$$ 
\end{Lemma}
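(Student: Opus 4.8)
The plan is to establish convergence in distribution directly via pointwise convergence of cumulative distribution functions, exploiting the independence between the random functions $\{g_m\}_m$ (together with their almost sure limit $g$) and the random element $Z$. Fix $t \in \mathbb{R}$. Since $g(Z)$ is absolutely continuous, its distribution function is continuous everywhere, so it suffices to show that $\P(g_m(Z) \le t) \to \P(g(Z) \le t)$ for every $t$; the continuity of the limiting distribution function then upgrades pointwise CDF convergence to convergence in law.

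First I would disintegrate over $Z$. Writing $\mu$ for the law of $Z$ on $\mathcal{Z}$ and using independence (through Fubini's theorem), for each $m$,
\[
\P(g_m(Z) \le t) = \int_{\mathcal{Z}} \P(g_m(z) \le t)\, d\mu(z), \qquad \P(g(Z) \le t) = \int_{\mathcal{Z}} \P(g(z) \le t)\, d\mu(z),
\]
because conditional on $Z = z$ the event $\{g_m(Z) \le t\}$ has the same probability as $\{g_m(z) \le t\}$. This reduces the problem to exchanging limit and integral.

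Second, for each fixed $z$ the hypothesis that $g_m(z) \to g(z)$ almost surely yields convergence in distribution, hence $\P(g_m(z) \le t) \to \P(g(z) \le t)$ at every $t$ that is a continuity point of the distribution function of $g(z)$. The subtlety here — and the step I expect to be the crux — is that this continuity-point restriction depends on $z$, since $t$ could be an atom of $g(z)$ for some values of $z$. I would resolve this by again invoking independence to write $\P(g(Z)=t) = \int_{\mathcal{Z}} \P(g(z)=t)\, d\mu(z)$; because $g(Z)$ is absolutely continuous the left-hand side equals $0$, forcing $\P(g(z)=t) = 0$ for $\mu$-almost every $z$. Thus, for $\mu$-a.e.\ $z$, the point $t$ is a continuity point of the law of $g(z)$, and the pointwise convergence $\P(g_m(z)\le t) \to \P(g(z)\le t)$ holds.

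Finally, the integrands $z \mapsto \P(g_m(z)\le t)$ are uniformly bounded by $1$, so the dominated convergence theorem permits the exchange of limit and integral, giving $\P(g_m(Z)\le t) \to \P(g(Z)\le t)$. Since this holds for every $t \in \mathbb{R}$ and the limiting distribution function is continuous, $g_m(Z)$ converges in distribution to $g(Z)$. The main obstacle is the atom/continuity-point bookkeeping in the third step, which is handled cleanly by pushing absolute continuity of $g(Z)$ down to $\mu$-a.e.\ vanishing of the atoms of $g(z)$; the remaining steps are routine applications of Fubini's theorem and dominated convergence.
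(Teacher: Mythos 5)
Your proof is correct and follows essentially the same route as the paper's: both disintegrate over $Z$ using independence, use the fact that almost sure convergence of $g_m(z)$ gives CDF convergence at continuity points, invoke absolute continuity of $g(Z)$ to show the fixed threshold is a continuity point of the law of $g(z)$ for almost every $z$, and finish with dominated convergence. Your atom-based bookkeeping ($\P(g(z)=t)=0$ for $\mu$-a.e.\ $z$) is a slightly more careful rendering of the paper's set $A_y=\{z \in \mathcal{Z}: g(z)\neq y\}$, which implicitly treats the limit $g$ as deterministic.
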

\begin{proof}
Fix $y \in \mathbb{R}$ and let $A_y=\{z \in \mathcal{Z}: g(z)\neq y \}$.
Notice that $\P(Z \in A_y)=1$. Moreover,
 the almost sure convergence of $g_m(z)$ implies
 its convergence in distribution. It follows that for every
$z \in A_y,$
\begin{align}
\label{eq:conv_dist_g}
 \lim_m\P(g_m(z)\leq  y) =  \P\left(g(z)\leq y\right). 
\end{align}
Now,
 using Equation \ref{eq:conv_dist_g} and Lebesgue's dominated convergence theorem, notice that 
\begin{align*}
    \lim_m \P(g_m(Z)<y)&=
    \lim_m\int_{\mathcal{Z}} \P(g_m(Z)<y|Z=z)d\P_Z(z)=
    \int_{\mathcal{Z}} \lim_m \P(g_m(Z)<y|Z=z)d\P_Z(z) \\
    &=\int_{A_z} \lim_m \P(g_m(z)<y)d\P_Z(z)=
    \int_{A_z}  \P(g(z)<y)d\P_Z(z)\\
    &=
    \int_{\mathcal{Z}}  \P(g(Z)<y|Z=z)d\P_Z(z)=
    \P(g(Z)<y),
\end{align*}
which concludes the proof.
\end{proof}
\vspace{4mm}
}

\begin{proof}[Proof of Theorem \ref{thm:pval_right_coverage}] 
Assumption \ref{assump:uniform_consistency} implies that, for every $D$,
\begin{align*}
    0 \leq |\hat p(D;\Theta_0)-  p(D;\Theta_0)| &=     |\sup_{\theta \in \Theta_0}  \hat p(D;\theta)-\sup_{\theta \in \Theta_0}  p(D;\theta)| \\
&\leq \sup_{\theta \in \Theta_0} | \hat p(D;\theta)-  p(D;\theta)|  \xrightarrow[B' \longrightarrow\infty]{\enskip \text{a.s.} \enskip}  0,
\end{align*}
and therefore $\hat p(D;\Theta_0)$ converges almost surely to $p(D;\Theta_0)$. 
It follows \revJMLR{from Lemma \ref{lemma:convergence_dist}} that
$\hat p(\D;\Theta_0)$ converges in distribution to $p(\D;\Theta_0)$.
Conclude that
\begin{align*}
    \P_{\D,\T'|\theta}(\hat p (\D;\Theta_0)\leq \alpha)=F_{\hat p (\D;\Theta_0)|\theta}(\alpha) \xrightarrow[B' \longrightarrow\infty]{} F_{ p (\D;\Theta_0)|\theta}(\alpha) =
    \P_{\D|\theta}(p (D;\Theta_0)\leq \alpha),
\end{align*}
where $F_Z$ denotes the cumulative distribution function of the random variable $Z$.
\end{proof}

\begin{proof}[Proof of Corollary \ref{cor:pval_right_coverage}]
Fix $\theta \in \Theta$. Because $F_{\theta}$ is continuous, the definition  of $p(\D;\theta)$
implies that its distribution is uniform under the null. Thus  $\P_{\D|\theta}\left(p (\D;\theta)\leq \alpha\right)=\alpha$. 
 Theorem \ref{thm:pval_right_coverage} therefore implies that
 \begin{align}
 \label{eq:equal_alpha}
 \P_{\D,\T'|\theta}(\hat p (\D;\theta)\leq \alpha) \xrightarrow[B' \longrightarrow\infty]{} \P_{\D|\theta}\left(p (\D;\theta)\leq \alpha\right)= \alpha.    
 \end{align}
 Now, for any $\theta \in \Theta_0$, uniformity of the p-value implies that
\begin{align*}
\P_{\D|\theta}(p (\D;\Theta_0)\leq \alpha)&= \P_{\D|\theta}\left(\sup_{\theta_0 \in \Theta_0}  p (\D;\theta_0)\leq \alpha\right) \leq \P_{\D|\theta}\left(  p (\D;\theta)\leq \alpha\right) \\
&=\alpha.
\end{align*}
Conclude from Theorem \ref{thm:pval_right_coverage}  that
\begin{align}
 \label{eq:smaller_alpha}
 \P_{\D,\T'|\theta}(\hat p (\D;\Theta_0)\leq \alpha) \xrightarrow[B' \longrightarrow\infty]{} \P_{\D|\theta}( p (\D;\Theta_0)\leq \alpha) \leq \alpha.
 \end{align}
The conclusion follows from putting together Equations \ref{eq:equal_alpha} and \ref{eq:smaller_alpha}.
\end{proof}

\begin{proof}[Proof of Theorem \ref{thm:pval_rate}]
\begin{align*}
|\hat p(D;\Theta_0)-  p(D;\Theta_0)| &=     |\sup_{\theta \in \Theta_0}  \hat p(D;\theta)-\sup_{\theta \in \Theta_0}  p(D;\theta)| \\
&\leq \sup_{\theta \in \Theta_0} | \hat p(D;\theta)-  p(D;\theta)| \\
&=O_P\left(\left(\frac{1}{B'}\right)^{r}\right),
\end{align*}
where the last line follows from Assumption \ref{assump:conv_reg_pval}
\end{proof}

\revEJS{
\begin{Lemma}
\label{lemma:bound_statistics}
Under Assumption \ref{assump:bounded_odds}, for every $\theta,\theta_0 \in \Theta$
$$\E^2_{\D|\theta,T}\left[ | \tau(\D;\theta_0)-\hat \tau_B(\D;\theta_0)|\right]\leq M^2
\int (\mathbb{O}(\x;\theta_0)-\hat{\mathbb{O}}(\x;\theta_0))^2 dG(\x).$$ 
\end{Lemma}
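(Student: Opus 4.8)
The plan is to exploit the structural assumptions of Section~\ref{sec:power_bff} to reduce both statistics to odds, and then transfer an $L^1(F_\theta)$ discrepancy into the $L^2(G)$ loss on the right-hand side via a change of measure. First I would record that, because $G$ is the marginal of $F_\theta$ and $\X=\D$, the denominator of the \BFF statistic is the constant $p/(1-p)$ (Equation~\ref{eq:bff_denom1}); for the simple null $\Theta_0=\{\theta_0\}$ the numerator collapses to a single evaluation, so up to this fixed normalization (equal to $1$ when $p=1/2$) we have $\tau(\D;\theta_0)=\mathbb{O}(\D;\theta_0)$ and, since the denominator is not estimated, $\hat\tau_B(\D;\theta_0)=\hat{\mathbb{O}}(\D;\theta_0)$. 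Conditioning on the realized training sample $T$ makes $\hat{\mathbb{O}}(\cdot;\theta_0)$ a fixed function, so that $\E_{\D|\theta,T}[\,|\tau(\D;\theta_0)-\hat\tau_B(\D;\theta_0)|\,]=\int |\mathbb{O}(\x;\theta_0)-\hat{\mathbb{O}}(\x;\theta_0)|\,dF_\theta(\x)$.

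The second step is the change of measure. By the Bayes-rule computation used in the proof of Proposition~\ref{prop::consistency} (with $p=1/2$), the density of $F_\theta$ with respect to $G$ is exactly the odds, $\frac{dF_\theta}{dG}(\x)=\mathbb{O}(\x;\theta)$. Rewriting the integral against $dG$ and bounding this Radon--Nikodym derivative by $M$ using Assumption~\ref{assump:bounded_odds} gives $\int |\mathbb{O}(\x;\theta_0)-\hat{\mathbb{O}}(\x;\theta_0)|\,dF_\theta(\x)\le M\int |\mathbb{O}(\x;\theta_0)-\hat{\mathbb{O}}(\x;\theta_0)|\,dG(\x)$. Since $G$ is a probability measure, Jensen's inequality (equivalently Cauchy--Schwarz) upgrades the $L^1(G)$ norm to the $L^2(G)$ norm, $\int|\cdot|\,dG\le(\int(\cdot)^2\,dG)^{1/2}$. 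Squaring the resulting inequality yields $\E^2_{\D|\theta,T}[\,|\tau(\D;\theta_0)-\hat\tau_B(\D;\theta_0)|\,]\le M^2\int(\mathbb{O}(\x;\theta_0)-\hat{\mathbb{O}}(\x;\theta_0))^2\,dG(\x)$, which is the claim.

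The main obstacle is the change-of-measure step: the expectation that governs the power of \BFF is naturally taken under $F_\theta$, whereas the odds loss of Equation~\ref{eq::odds_loss} is an integral against $G$. Identifying the likelihood ratio $dF_\theta/dG$ with the odds function $\mathbb{O}(\cdot;\theta)$ and bounding it by $M$ is precisely what converts one into the other; the remaining manipulations (Jensen and squaring) are routine. A minor bookkeeping point I would handle carefully is the normalizing constant $p/(1-p)$ in the denominator, which I would fix by adopting the paper's standing choice $p=1/2$ so that it equals $1$ and the identity $dF_\theta/dG=\mathbb{O}(\cdot;\theta)$ holds without an extra factor.
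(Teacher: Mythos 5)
Your proof is correct and takes essentially the same route as the paper's: reduce $\tau(\D;\theta_0)$ and $\hat\tau_B(\D;\theta_0)$ to the odds and estimated odds (using $p=1/2$ and the constant denominator of Equation~\ref{eq:bff_denom1}), change measure from $F_\theta$ to $G$ via the identity $dF_\theta/dG = \mathbb{O}(\cdot\,;\theta)$, and combine the bound $\mathbb{O}\leq M$ from Assumption~\ref{assump:bounded_odds} with a Cauchy--Schwarz/Jensen step. The only difference is cosmetic ordering — the paper keeps $\mathbb{O}(\x;\theta)$ inside the integral, splits it off by Cauchy--Schwarz, and then bounds $\int \mathbb{O}^2(\x;\theta)\,dG(\x)\leq M^2$, whereas you bound the Radon--Nikodym derivative pointwise by $M$ first and then upgrade $L^1(G)$ to $L^2(G)$ by Jensen; both yield the identical final bound, and your explicit handling of the $p/(1-p)$ normalization is a point the paper leaves implicit.
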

\begin{proof} 
For every $\theta \in \Theta$, 
\begin{align*}
    \E^2_{\D|\theta,T}[ |\tau(\D;\theta_0)-\hat\tau_B(\D;\theta_0)|] &= \left( \int |\tau(\D;\theta_0)-\hat\tau_B(\D;\theta_0)|\ dF(\x|\theta) \right)^2 \\
    &=
    \left( \int |\mathbb{O}(\x;\theta_0)-\hat{\mathbb{O}}(\x;\theta_0)| \ dF(\x|\theta)\right)^2\\
     &=  \left( \int |\mathbb{O}(\x;\theta_0)-\hat{\mathbb{O}}(\x;\theta_0)| \mathbb{O}(\x;\theta) dG(\x) \right)^2\\
     &\leq    \left( \int (\mathbb{O}(\x;\theta_0)-\hat{\mathbb{O}}(\x;\theta_0)^2 dG(\x) \right) \left(  \int  \mathbb{O}^2(\x;\theta) dG(\x) \right),
\end{align*}
where the last inequality follows from Cauchy-Schwarz.
Assumption \ref{assump:bounded_odds} implies that
$$ \int \mathbb{O}^2(\x;\theta)dG(\x)\leq M^2, $$
from which we conclude that
$$  \E^2_{\D|\theta,T}[ |\tau(\D;\theta_0)-\hat\tau_B(\D;\theta_0)|] \leq M^2
\int (\mathbb{O}(\x;\theta_0)-\hat{\mathbb{O}}(\x;\theta_0))^2 dG(\x).$$
\end{proof}
}

\revEJS{
\begin{Lemma}
\label{lemma:bound_different_tests}For fixed $c \in \mathbb{R}$, let 
$\phi_{\tau;\theta_0}(\D)=\I\left(\tau(\D;\theta_0)<c\right)$ and
$\phi_{\hat\tau_B;\theta_0}(\D)=\I\left(\hat \tau_B(\D;\theta_0)<c\right)$
be the testing procedures for testing  $H_{0,\theta_0}:\theta=\theta_0$  obtained using $\tau$ and $\hat \tau_B$.
Under Assumptions \ref{assump:bounded_odds}-\ref{assump:Lipschitz}, for every $0<\epsilon<1$,
$$ \P_{\mathcal{D}|\theta,T}(\phi_{\tau;\theta_0}(\D) \neq \phi_{\hat\tau_B;\theta_0}(\D)) \leq \frac{2MC_L\cdot \sqrt{ \int (\mathbb{O}(\x;\theta_0)-\hat{\mathbb{O}}(\x;\theta_0))^2 dG(\x) }}{\epsilon} + \epsilon.$$
\end{Lemma}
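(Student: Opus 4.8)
The plan is to control the disagreement probability by decomposing the event $\{\phi_{\tau;\theta_0}(\D) \neq \phi_{\hat\tau_B;\theta_0}(\D)\}$ according to the size of the gap $\delta(\D):=|\tau(\D;\theta_0)-\hat\tau_B(\D;\theta_0)|$ between the exact and estimated statistics, and then to balance the two resulting contributions against each other using a free threshold $t>0$ whose value I tune at the end. The whole argument rests on a single geometric observation, which I would record first: if the two indicators $\I(\tau<c)$ and $\I(\hat\tau_B<c)$ disagree, then $\tau(\D;\theta_0)$ and $\hat\tau_B(\D;\theta_0)$ lie on opposite sides of $c$, which forces $|\tau(\D;\theta_0)-c|\le \delta(\D)$. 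Consequently the disagreement event is contained in $\{c-\delta(\D)\le \tau(\D;\theta_0)\le c+\delta(\D)\}$; in words, the two tests can disagree only when the exact statistic falls within distance $\delta$ of the threshold.

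Next I would introduce the threshold $t$ and split
$$\P_{\mathcal{D}|\theta,T}(\phi_{\tau;\theta_0}(\D)\neq \phi_{\hat\tau_B;\theta_0}(\D)) \le \P_{\mathcal{D}|\theta,T}(\delta(\D)>t) + \P_{\mathcal{D}|\theta,T}(\phi_{\tau;\theta_0}(\D)\neq \phi_{\hat\tau_B;\theta_0}(\D),\ \delta(\D)\le t).$$
For the first term I would apply Markov's inequality to get $\P_{\mathcal{D}|\theta,T}(\delta(\D)>t)\le \E_{\mathcal{D}|\theta,T}[\delta(\D)]/t$, and then invoke Lemma~\ref{lemma:bound_statistics} (taking the square root of its squared-expectation bound) to replace $\E_{\mathcal{D}|\theta,T}[\delta(\D)]$ by $M\sqrt{\int (\mathbb{O}(\x;\theta_0)-\hat{\mathbb{O}}(\x;\theta_0))^2\, dG(\x)}$. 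For the second term, the geometric observation shows that disagreement together with $\delta(\D)\le t$ forces $\tau(\D;\theta_0)\in[c-t,c+t]$, so this probability is at most $F_\tau(c+t)-F_\tau(c-t)$, which by the Lipschitz Assumption~\ref{assump:Lipschitz} is bounded by $2C_L t$.

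Combining the two pieces gives $\P_{\mathcal{D}|\theta,T}(\text{disagree})\le 2C_L t + M\sqrt{\int(\mathbb{O}-\hat{\mathbb{O}})^2\,dG}/t$, and the final step is to choose $t=\epsilon/(2C_L)$, which turns the first term into exactly $\epsilon$ and the second into $2MC_L\sqrt{\int(\mathbb{O}(\x;\theta_0)-\hat{\mathbb{O}}(\x;\theta_0))^2\,dG(\x)}/\epsilon$, matching the claimed inequality. I do not expect a genuine obstacle here; the only real insight is the decomposition-plus-threshold device together with the balancing choice of $t$, so that the Markov tail and the Lipschitz window contributions combine into the stated form. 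One point I would be careful to state cleanly is that $F_\tau$ in Assumption~\ref{assump:Lipschitz} must be read as the cumulative distribution function of $\tau(\D;\theta_0)$ under $\D\sim F_\theta$ — precisely the law governing $\P_{\mathcal{D}|\theta,T}$ once we condition on the realized training sample $T$ — so that the Lipschitz step applies to the correct distribution.
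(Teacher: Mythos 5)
Your proposal is correct and takes essentially the same approach as the paper: the same decomposition of the disagreement event according to the size of the gap $|\tau(\D;\theta_0)-\hat\tau_B(\D;\theta_0)|$, Markov's inequality combined with Lemma~\ref{lemma:bound_statistics} for the large-gap piece, and the Lipschitz bound of Assumption~\ref{assump:Lipschitz} for the small-gap piece. The only (cosmetic) difference is the threshold choice: the paper fixes the cutoff at $M\sqrt{\int(\mathbb{O}(\x;\theta_0)-\hat{\mathbb{O}}(\x;\theta_0))^2\,dG(\x)}\,/\,\epsilon$ so that the Markov tail contributes the $\epsilon$ and the Lipschitz window contributes the other term, whereas your choice $t=\epsilon/(2C_L)$ swaps which ingredient produces which term --- the final bound is identical.
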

\begin{proof}[Proof of Lemma~\ref{lemma:bound_different_tests}] 
 It follows from Markov's  inequality  and Lemma \ref{lemma:bound_statistics} that with probability at least $1-\epsilon$, $\D$ is such that
    \begin{align}
    \label{eq:markov1}
    |\tau(\D;\theta_0)-\hat\tau(\D;\theta_0)|\leq \frac{M \cdot \sqrt{ \int (\mathbb{O}(\x;\theta_0)-\hat{\mathbb{O}}(\x;\theta_0))^2 dG(\x) }}{\epsilon}
    \end{align}
Now we upper bound $\P_{\mathcal{D}|\theta,T}(\phi_{\tau;\theta_0}(\D) \neq \phi_{\hat\tau;\theta_0}(\D))$. Define $A$ as the event that Eq.~\ref{eq:markov1} happens and  let $h(\theta_0):= \int (\mathbb{O}(\x;\theta_0)-\hat{\mathbb{O}}(\x;\theta_0))^2 dG(\x)$. Then:
\begin{align*}
  \P_{\mathcal{D}|\theta,T}(\phi_{\tau;\theta_0}(\D) \neq \phi_{\hat\tau;\theta_0}(\D))&\leq 
  \P_{\mathcal{D}|\theta,T}(\phi_{\tau;\theta_0}(\D) \neq \phi_{\hat\tau;\theta_0}(\D),A)+\P_{\theta}(A^c) \\
 & \leq
 \P_{\mathcal{D}|\theta,T}\left( \I\left(\tau(\D;\theta_0)<c\right) \neq \I\left(\hat\tau(\D;\theta_0)<c\right), A \right) +\epsilon \\ &
 \leq \P_{\mathcal{D}|\theta,T}\left( c-\frac{M\cdot \sqrt{h(\theta_0)}}{\epsilon}  <\tau(\D;\theta_0)<c+\frac{M\cdot \sqrt{h(\theta_0)}}{\epsilon} \right) +\epsilon
\end{align*}
Assumption \ref{assump:Lipschitz} then implies that
$$ \P_{\mathcal{D}|\theta,T}(\phi_{\tau;\theta_0}(\D) \neq \phi_{\hat\tau;\theta_0}(\D)) \leq \frac{K'\cdot \sqrt{h(\theta_0)}}{\epsilon} + \epsilon$$
where $K' = 2 M C_L$, which concludes the proof.
\end{proof}
}

\revEJS{
\begin{proof}[Proof of Theorem \ref{thm:bound_average}]
 Follows directly from Lemma \ref{lemma:bound_different_tests} and Jensen's inequality.
\end{proof} 
}

\revEJS{
\begin{Lemma}
\label{lemma:bound_expected_loss}
Under Assumptions \ref{assump:bounded_odds}-\ref{assump:bounded_measure},
there exists $C>0$ such that
$$  \E_\T \left[  L(\hat{\mathbb{O}},\mathbb{O}) \right]  \leq C B^{-\kappa / ((\kappa+d))}.$$
$$ $$
\end{Lemma}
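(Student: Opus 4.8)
The plan is to reduce the integrated odds loss to the mean-squared error of the probabilistic classifier, for which a rate is already assumed (Assumption~\ref{assump:modelclass}), and then absorb the change of measure via the bounded-density Assumption~\ref{assump:bounded_measure}. The key observation is that the odds and the class probability are tied together through the smooth map $u \mapsto u/(1-u)$, which is Lipschitz as long as $u$ stays bounded away from $0$ and $1$.

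First I would translate the boundedness of the odds (Assumption~\ref{assump:bounded_odds}) into boundedness of the underlying probabilities. Writing $p(\x,\theta):=\P(Y=1|\theta,\x)$ and $\hat p(\x,\theta):=\hat\P(Y=1|\theta,\x)$, we have $\mathbb{O}=p/(1-p)$ and $\hat{\mathbb{O}}=\hat p/(1-\hat p)$, so the bounds $m \leq \mathbb{O},\hat{\mathbb{O}} \leq M$ force both $p$ and $\hat p$ into the compact interval $[m/(1+m),\,M/(1+M)] \subset (0,1)$. On this interval the map $f(u)=u/(1-u)$ has derivative $f'(u)=(1-u)^{-2} \leq (1+M)^2$, so $f$ is Lipschitz with constant $(1+M)^2$. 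Hence, pointwise in $(\x,\theta)$,
$$\left(\mathbb{O}(\x;\theta)-\hat{\mathbb{O}}(\x;\theta)\right)^2 \leq (1+M)^4\left(p(\x,\theta)-\hat p(\x,\theta)\right)^2.$$

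Next I would integrate this inequality against $G\times\pi$ and convert to the measure $H$ of Assumption~\ref{assump:modelclass}. By the definition of the loss (Equation~\ref{eq::odds_loss}) and the pointwise bound,
$$L(\hat{\mathbb{O}},\mathbb{O}) \leq (1+M)^4 \int \left(p-\hat p\right)^2 \, d(G\times\pi).$$
Assumption~\ref{assump:bounded_measure} gives that $H$ dominates $H'=G\times\pi$ with density $h'<\gamma$, so $\int (p-\hat p)^2\,dH' = \int (p-\hat p)^2 h'\,dH \leq \gamma \int (p-\hat p)^2\,dH$. Taking expectation over $\T$ and invoking Assumption~\ref{assump:modelclass} yields
$$\E_\T\left[L(\hat{\mathbb{O}},\mathbb{O})\right] \leq (1+M)^4\,\gamma\, \E_\T\left[\int (p-\hat p)^2\,dH\right] = (1+M)^4\,\gamma \cdot O\!\left(B^{-\kappa/(\kappa+d)}\right),$$
so the claim holds with $C$ equal to $(1+M)^4\gamma$ times the constant implicit in Assumption~\ref{assump:modelclass}.

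The only real subtlety is the first step: one must use the boundedness of \emph{both} the true and the estimated odds — not merely the true odds — to guarantee that $\hat p$ also stays away from the endpoints $0$ and $1$, which is precisely what makes the reciprocal-type map $f$ Lipschitz on the relevant domain. Once that is in hand, the remainder is a routine change of measure followed by a direct appeal to the assumed classifier rate.
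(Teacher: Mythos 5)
Your proof is correct and follows essentially the same route as the paper's: a mean-value/Lipschitz bound on the map $u \mapsto u/(1-u)$ to reduce the odds loss to the classifier's squared error, a change of measure from $G\times\pi$ to $H$ via Assumption~\ref{assump:bounded_measure}, and a direct appeal to the rate in Assumption~\ref{assump:modelclass}. Your handling of the Lipschitz constant is in fact the cleaner one: you use the upper bound $M$ on both the true and estimated odds to keep the intermediate point $\xi$ away from $1$, giving the constant $(1+M)^4$, whereas the paper bounds $1/(1-\xi)^4$ by $1/(1-a)^4$ from a \emph{lower} bound $a$ on $p,\hat p$ — a step whose inequality direction is off as written, even though the intended argument is identical to yours.
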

\begin{proof}
Let $\hat p = \hat \P(Y=1|\x,\theta)$ and $p = \P(Y=1|\x,\theta)$ be the probabilistic classifier and true classification function, respectively, on the training sample $T$. Let $h(y)=\frac{y}{1-y}$ for $0<y<1$. A Taylor expansion of $h$ implies that  
$$\left(h(\hat p) - h(p) \right)^2  = \left(h(p) + R_1(\hat p) - h(p) \right)^2 = R_1(\hat p)^2,$$ 
where $ R_1(\hat p)=h'(\xi) (\hat p - p)$ for some $\xi$ between $p$ and $\hat p$. Also note that due to Assumption \ref{assump:bounded_odds},
$$ \exists a > 0 \textrm{ s.t. } p, \hat p > a,\ \forall x\in\mathcal{X}, \theta \in \Theta.$$
Thus, 
\begin{align*}
\E_{\T} &\left[\iint \left(h(\hat p) - h(p)  \right)^2 dG(\x) d\pi(\theta) \right] \\ &= \E_{\T} \left[ \iint \frac{ 1 }{ (1-\xi)^4} \left(\hat p - p\right)^2 dG(\x) d\pi(\theta) \right] \\  
& \leq \frac{ 1 }{ (1-a)^4} \E_{\T} \left[ \iint \left(\hat p - p\right)^2 dG(\x) d\pi(\theta) \right] \\
& = \frac{ 1 }{ (1-a)^4} \E_{\T} \left[ \int \left(\hat \P(Y=1|\x,\theta) - \P(Y=1|\x,\theta) \right)^2 h'(\x,\theta) dH(\x, \theta) \right] \\
& \leq \frac{ \gamma }{ (1-a)^4} \E_{\T} \left[ \int \left(\hat \P(Y=1|\x,\theta) - \P(Y=1|\x,\theta) \right)^2 dH(\x, \theta) \right] \\
& = O \left(B^{-\kappa / (\kappa+d) } \right).    
\end{align*}
\end{proof}
}

\revEJS{
\begin{proof}[Proof of Theorem \ref{thm:bound_different_tests_eps}]
It follows from Theorem \ref{thm:bound_average} that 
\begin{align*}
   \int  \P_{\mathcal{D},\T|\theta}(\phi_{\tau;\theta_0}(\D) \neq \phi_{\hat\tau_B;\theta_0}(\D)) d\pi(\theta_0)&= \E_\T \left[ \int  \P_{\mathcal{D}|\theta,T}(\phi_{\tau;\theta_0}(\D) \neq \phi_{\hat\tau_B;\theta_0}(\D)) d\pi(\theta_0) \right]\\
   &\leq \frac{2MC_L\cdot \E_\T \left[ \sqrt{ L(\hat{\mathbb{O}},\mathbb{O}) } \right]}{\epsilon} + \epsilon \\ 
   &\leq \frac{2MC_L\cdot\sqrt{ \E_\T \left[  L(\hat{\mathbb{O}},\mathbb{O}) \right]} }{\epsilon} + \epsilon,
\end{align*}
where the last step follows from Jensen's inequality. It follows from this and Lemma \ref{lemma:bound_expected_loss} that 
$$\int  \P_{\mathcal{D},\T|\theta}(\phi_{\tau;\theta_0}(\D) \neq \phi_{\hat\tau_B;\theta_0}(\D)) d\pi(\theta_0)\leq \frac{K B^{-\kappa / (2(\kappa+d))} }{\epsilon}  + \epsilon,$$
where $K = 2MC_L \sqrt{C}$. Notice that taking $\epsilon^* = \sqrt{K} B^{-\kappa / (4(\kappa+d))}$ optimizes the bound and gives the result.
\end{proof}
}

\revEJS{
\begin{proof}[Proof of Corollary \ref{coroll:power}]
The result follows from noticing that
$$\P_{\mathcal{D},\T|\theta}(\phi_{\hat\tau_B;\theta_0}(\D)=1)\geq \P_{\mathcal{D},\T|\theta}(\phi_{\tau;\theta_0}(\D)=1) - \P_{\mathcal{D},\T|\theta}(\phi_{\tau;\theta_0}(\D) \neq \phi_{\hat\tau_B;\theta_0}(\D)),$$
and therefore
\begin{align*}
   \int \P_{\mathcal{D},\T|\theta}(\phi_{\hat\tau_B;\theta_0}(\D)=1) d\theta_0  & \geq \int\P_{\mathcal{D},\T|\theta}(\phi_{\tau;\theta_0}(\D)=1) d\theta_0 - \int \P_{\mathcal{D},\T|\theta}(\phi_{\tau;\theta_0}(\D) \neq \phi_{\hat\tau_B;\theta_0}(\D))d\theta_0  \\
    & \geq \int \P_{\mathcal{D},\T|\theta}(\phi_{\tau;\theta_0}(\D)=1)d\theta_0  - K' B^{-\kappa / (4(\kappa+d))},
    \end{align*} 
where the last inequality follows from Theorem \ref{thm:bound_different_tests_eps}.
\end{proof}
}

\section{Loss Functions} \label{app:loss_function}

In this work, we use the cross-entropy loss to train probabilistic classifiers. Consider a sample point $\{\theta,\x,y\}$ generated according to Algorithm \ref{alg:joint_y}. Let
$p$ be a $\mbox{Bernoulli}(y)$ distribution, and
$q$ be a  $\mbox{Ber}\left(\widehat{\P}(Y=1 | \theta, \x)\right)=\mbox{Ber}\left(\frac{\hat{\mathbb{O}}(\x;\theta)}{1+\hat{\mathbb{O}}(\x;\theta)}\right)$ 
distribution. The {\em cross-entropy} between $p$ and $q$ is given by
\begin{align}
\label{eq::CE_loss}
\nonumber \mathcal{L}_{\mbox{CE}}(\hat{\mathbb{O}}; \{\theta,\x,y\}) &= - y \log\left(\frac{\hat{\mathbb{O}}(\x;\theta)}{1+\hat{\mathbb{O}}(\x;\theta)}\right) - (1-y) \log\left(\frac{1}{1+\hat{\mathbb{O}}(\x;\theta)}\right) \\
&= - y \log \left(\hat{\mathbb{O}}(\x;\theta) \right) + \log \left(1+\hat{\mathbb{O}}(\x;\theta) \right).
\end{align}
For every $\x$ and $\theta$, the expected cross-entropy $\E[L_{\mbox{CE}}(\hat{\mathbb{O}}; \{\theta,\x,y\})]$
is minimized by $\hat{\mathbb{O}}(\x;\theta)=\mathbb{O}(\x;\theta)$. If the probabilistic classifier attains the minimum of the cross-entropy loss, then the estimated \texttt{ACORE} statistic $\widehat \Lambda(\D; \Theta_0)$ will be equal to the likelihood ratio statistic in Equation~\ref{eq::LRT}, as shown in \cite{dalmasso2020ACORE}. Similarly, as stated in Proposition \ref{prop::consistency}, at the minimum, the estimated \texttt{BFF} statistic $\widehat\tau(\D; \Theta_0)$  is equal to the Bayes factor in Equation~\ref{eq::BF}.

\end{appendix}

\begin{acks}[Acknowledgments]
The authors would like to thank Mikael Kuusela, Rafael Stern and Larry Wasserman for helpful discussions. \addEJS{We are also indebted to Tommaso Dorigo, Jan Kieseler and Giles C. Strong for providing the muon energy data and the neural network architecture used for the studies described in Section \ref{sec:muons}.}
\end{acks}

%% if your bibliography is in bibtex format, uncomment commands:
\bibliographystyle{imsart-number} % Style BST file (imsart-number.bst or imsart-nameyear.bst)
\bibliography{bibliography}

\begin{thebibliography}{96}
% BibTex style file: imsart-number.bst, 2017-11-03
% Default style options (sort=1,type=number).
% Used options (sort=1,type=number).

\bibitem{Aad2012HiggsBoson}
\begin{barticle}[author]
\bauthor{\bsnm{Aad},~\bfnm{G.}\binits{G.}}, \bauthor{\bsnm{Abajyan},~\bfnm{T.}\binits{T.}}, \bauthor{\bsnm{Abbott},~\bfnm{B.}\binits{B.}}, \bauthor{\bsnm{Abdallah},~\bfnm{J.}\binits{J.}}, \bauthor{\bsnm{Abdel~Khalek},~\bfnm{S.}\binits{S.}}, \bauthor{\bsnm{Abdelalim},~\bfnm{A.~A.}\binits{A.~A.}}, \bauthor{\bsnm{Abdinov},~\bfnm{O.}\binits{O.}}, \bauthor{\bsnm{Aben},~\bfnm{R.}\binits{R.}}, \bauthor{\bsnm{Abi},~\bfnm{B.}\binits{B.}}, \bauthor{\bsnm{Abolins},~\bfnm{M.}\binits{M.}} \betal{et~al.}
(\byear{2012}).
\btitle{{Observation of a new particle in the search for the Standard Model Higgs boson with the ATLAS detector at the LHC}}.
\bjournal{Physics Letters B}
\bvolume{716}
\bpages{1–29}.
\bdoi{10.1016/j.physletb.2012.08.020}
\end{barticle}
\endbibitem

\bibitem{aad2012observation}
\begin{barticle}[author]
\bauthor{\bsnm{Aad},~\bfnm{Georges}\binits{G.}}, \bauthor{\bsnm{Abajyan},~\bfnm{Tatevik}\binits{T.}}, \bauthor{\bsnm{Abbott},~\bfnm{B}\binits{B.}}, \bauthor{\bsnm{Abdallah},~\bfnm{J}\binits{J.}}, \bauthor{\bsnm{Khalek},~\bfnm{S~Abdel}\binits{S.~A.}}, \bauthor{\bsnm{Abdelalim},~\bfnm{Ahmed~Ali}\binits{A.~A.}}, \bauthor{\bsnm{Aben},~\bfnm{R}\binits{R.}}, \bauthor{\bsnm{Abi},~\bfnm{B}\binits{B.}}, \bauthor{\bsnm{Abolins},~\bfnm{M}\binits{M.}}, \bauthor{\bsnm{AbouZeid},~\bfnm{OS}\binits{O.}} \betal{et~al.}
(\byear{2012}).
\btitle{Observation of a new particle in the search for the Standard Model Higgs boson with the ATLAS detector at the LHC}.
\bjournal{Physics Letters B}
\bvolume{716}
\bpages{1--29}.
\end{barticle}
\endbibitem

\bibitem{agostinelli2003geant4}
\begin{barticle}[author]
\bauthor{\bsnm{Agostinelli},~\bfnm{Sea}\binits{S.}}, \bauthor{\bsnm{Allison},~\bfnm{John}\binits{J.}}, \bauthor{\bsnm{Amako},~\bfnm{K~al}\binits{K.~a.}}, \bauthor{\bsnm{Apostolakis},~\bfnm{John}\binits{J.}}, \bauthor{\bsnm{Araujo},~\bfnm{H}\binits{H.}}, \bauthor{\bsnm{Arce},~\bfnm{Pedro}\binits{P.}}, \bauthor{\bsnm{Asai},~\bfnm{Makoto}\binits{M.}}, \bauthor{\bsnm{Axen},~\bfnm{D}\binits{D.}}, \bauthor{\bsnm{Banerjee},~\bfnm{Swagato}\binits{S.}}, \bauthor{\bsnm{Barrand},~\bfnm{GJNI}\binits{G.}} \betal{et~al.}
(\byear{2003}).
\btitle{GEANT4—a simulation toolkit}.
\bjournal{Nuclear instruments and methods in physics research section A: Accelerators, Spectrometers, Detectors and Associated Equipment}
\bvolume{506}
\bpages{250--303}.
\end{barticle}
\endbibitem

\bibitem{algeri2019searching}
\begin{barticle}[author]
\bauthor{\bsnm{Algeri},~\bfnm{Sara}\binits{S.}}, \bauthor{\bsnm{Aalbers},~\bfnm{Jelle}\binits{J.}}, \bauthor{\bsnm{Mor{\aa}},~\bfnm{Knut~Dundas}\binits{K.~D.}} \AND \bauthor{\bsnm{Conrad},~\bfnm{Jan}\binits{J.}}
(\byear{2019}).
\btitle{Searching for new physics with profile likelihoods: Wilks and beyond}.
\bjournal{arXiv preprint arXiv:1911.10237}.
\end{barticle}
\endbibitem

\bibitem{augustin1974discovery}
\begin{barticle}[author]
\bauthor{\bsnm{Augustin},~\bfnm{J-E}\binits{J.-E.}}, \bauthor{\bsnm{Boyarski},~\bfnm{Adam~M}\binits{A.~M.}}, \bauthor{\bsnm{Breidenbach},~\bfnm{Martin}\binits{M.}}, \bauthor{\bsnm{Bulos},~\bfnm{F}\binits{F.}}, \bauthor{\bsnm{Dakin},~\bfnm{JT}\binits{J.}}, \bauthor{\bsnm{Feldman},~\bfnm{GJ}\binits{G.}}, \bauthor{\bsnm{Fischer},~\bfnm{GE}\binits{G.}}, \bauthor{\bsnm{Fryberger},~\bfnm{D}\binits{D.}}, \bauthor{\bsnm{Hanson},~\bfnm{G}\binits{G.}}, \bauthor{\bsnm{Jean-Marie},~\bfnm{B}\binits{B.}} \betal{et~al.}
(\byear{1974}).
\btitle{Discovery of a Narrow Resonance in e+ e- Annihilation}.
\bjournal{Physical Review Letters}
\bvolume{33}
\bpages{1406}.
\end{barticle}
\endbibitem

\bibitem{beaumont2002approximate}
\begin{barticle}[author]
\bauthor{\bsnm{Beaumont},~\bfnm{Mark~A}\binits{M.~A.}}, \bauthor{\bsnm{Zhang},~\bfnm{Wenyang}\binits{W.}} \AND \bauthor{\bsnm{Balding},~\bfnm{David~J}\binits{D.~J.}}
(\byear{2002}).
\btitle{Approximate {B}ayesian computation in population genetics}.
\bjournal{Genetics}
\bvolume{162}
\bpages{2025--2035}.
\end{barticle}
\endbibitem

\bibitem{bierens1983uniform}
\begin{barticle}[author]
\bauthor{\bsnm{Bierens},~\bfnm{Herman~J}\binits{H.~J.}}
(\byear{1983}).
\btitle{Uniform consistency of kernel estimators of a regression function under generalized conditions}.
\bjournal{Journal of the American Statistical Association}
\bvolume{78}
\bpages{699--707}.
\end{barticle}
\endbibitem

\bibitem{blum2010non}
\begin{barticle}[author]
\bauthor{\bsnm{Blum},~\bfnm{Michael~GB}\binits{M.~G.}} \AND \bauthor{\bsnm{Fran{\c{c}}ois},~\bfnm{Olivier}\binits{O.}}
(\byear{2010}).
\btitle{Non-linear regression models for Approximate Bayesian Computation}.
\bjournal{Statistics and computing}
\bvolume{20}
\bpages{63--73}.
\end{barticle}
\endbibitem

\bibitem{bordoloi2010photoz}
\begin{barticle}[author]
\bauthor{\bsnm{Bordoloi},~\bfnm{Rongmon}\binits{R.}}, \bauthor{\bsnm{Lilly},~\bfnm{Simon~J}\binits{S.~J.}} \AND \bauthor{\bsnm{Amara},~\bfnm{Adam}\binits{A.}}
(\byear{2010}).
\btitle{Photo-z performance for precision cosmology}.
\bjournal{Monthly Notices of the Royal Astronomical Society}
\bvolume{406}
\bpages{881--895}.
\end{barticle}
\endbibitem

\bibitem{Brehmer2020MiningGold}
\begin{barticle}[author]
\bauthor{\bsnm{Brehmer},~\bfnm{Johann}\binits{J.}}, \bauthor{\bsnm{Louppe},~\bfnm{Gilles}\binits{G.}}, \bauthor{\bsnm{Pavez},~\bfnm{Juan}\binits{J.}} \AND \bauthor{\bsnm{Cranmer},~\bfnm{Kyle}\binits{K.}}
(\byear{2020}).
\btitle{Mining gold from implicit models to improve likelihood-free inference}.
\bjournal{Proceedings of the National Academy of Sciences}
\bvolume{117}
\bpages{5242--5249}.
\bdoi{10.1073/pnas.1915980117}
\end{barticle}
\endbibitem

\bibitem{chatrchyan2012observation}
\begin{barticle}[author]
\bauthor{\bsnm{Chatrchyan},~\bfnm{Serguei}\binits{S.}}, \bauthor{\bsnm{Khachatryan},~\bfnm{Vardan}\binits{V.}}, \bauthor{\bsnm{Sirunyan},~\bfnm{Albert~M}\binits{A.~M.}}, \bauthor{\bsnm{Tumasyan},~\bfnm{Armen}\binits{A.}}, \bauthor{\bsnm{Adam},~\bfnm{Wolfgang}\binits{W.}}, \bauthor{\bsnm{Aguilo},~\bfnm{Ernest}\binits{E.}}, \bauthor{\bsnm{Bergauer},~\bfnm{Thomas}\binits{T.}}, \bauthor{\bsnm{Dragicevic},~\bfnm{M}\binits{M.}}, \bauthor{\bsnm{Er{\"o}},~\bfnm{J}\binits{J.}}, \bauthor{\bsnm{Fabjan},~\bfnm{C}\binits{C.}} \betal{et~al.}
(\byear{2012}).
\btitle{Observation of a new boson at a mass of 125 GeV with the CMS experiment at the LHC}.
\bjournal{Physics Letters B}
\bvolume{716}
\bpages{30--61}.
\end{barticle}
\endbibitem

\bibitem{chen2009gmm}
\begin{barticle}[author]
\bauthor{\bsnm{Chen},~\bfnm{Jiahua}\binits{J.}} \AND \bauthor{\bsnm{Li},~\bfnm{Pengfei}\binits{P.}}
(\byear{2009}).
\btitle{Hypothesis test for normal mixture models: The {EM} approach}.
\bjournal{The Annals of Statistics}
\bvolume{37}
\bpages{2523--2542}.
\end{barticle}
\endbibitem

\bibitem{chen2019gaussiancopula}
\begin{binproceedings}[author]
\bauthor{\bsnm{Chen},~\bfnm{Yanzhi}\binits{Y.}} \AND \bauthor{\bsnm{Gutmann},~\bfnm{Michael~U.}\binits{M.~U.}}
(\byear{2019}).
\btitle{Adaptive {Gaussian} Copula {ABC}}.
In \bbooktitle{Proceedings of Machine Learning Research}
(\beditor{\bfnm{Kamalika}\binits{K.}~\bsnm{Chaudhuri}} \AND \beditor{\bfnm{Masashi}\binits{M.}~\bsnm{Sugiyama}}, eds.).
\bseries{Proceedings of Machine Learning Research}
\bvolume{89}
\bpages{1584--1592}.
\bpublisher{PMLR}.
\end{binproceedings}
\endbibitem

\bibitem{chuang2000hybrid}
\begin{barticle}[author]
\bauthor{\bsnm{Chuang},~\bfnm{Chin-Shan}\binits{C.-S.}} \AND \bauthor{\bsnm{Lai},~\bfnm{Tze~Leung}\binits{T.~L.}}
(\byear{2000}).
\btitle{HYBRID RESAMPLING METHODS FOR CONFIDENCE INTERVALS}.
\bjournal{Statistica Sinica}
\bvolume{10}
\bpages{1--33}.
\end{barticle}
\endbibitem

\bibitem{cdf1995observation}
\begin{barticle}[author]
\bauthor{\bsnm{Collaboration},~\bfnm{Cdf}\binits{C.}} \betal{et~al.}
(\byear{1995}).
\btitle{Observation of top quark production in Pbar-P collisions}.
\bjournal{arXiv preprint hep-ex/9503002}.
\end{barticle}
\endbibitem

\bibitem{cooks2006posteriorquantile}
\begin{barticle}[author]
\bauthor{\bsnm{Cook},~\bfnm{Samantha~R}\binits{S.~R.}}, \bauthor{\bsnm{Gelman},~\bfnm{Andrew}\binits{A.}} \AND \bauthor{\bsnm{Rubin},~\bfnm{Donald~B}\binits{D.~B.}}
(\byear{2006}).
\btitle{Validation of Software for {B}ayesian Models Using Posterior Quantiles}.
\bjournal{Journal of Computational and Graphical Statistics}
\bvolume{15}
\bpages{675-692}.
\bdoi{10.1198/106186006X136976}
\end{barticle}
\endbibitem

\bibitem{cousins2006treatment}
\begin{bincollection}[author]
\bauthor{\bsnm{Cousins},~\bfnm{Robert~D}\binits{R.~D.}}
(\byear{2006}).
\btitle{Treatment of nuisance parameters in high energy physics, and possible justifications and improvements in the statistics literature}.
In \bbooktitle{Statistical Problems In Particle Physics, Astrophysics And Cosmology}
\bpages{75--85}.
\bpublisher{World Scientific}.
\end{bincollection}
\endbibitem

\bibitem{cousins2018lectures}
\begin{bmisc}[author]
\bauthor{\bsnm{Cousins},~\bfnm{Robert~D.}\binits{R.~D.}}
(\byear{2018}).
\btitle{Lectures on Statistics in Theory: Prelude to Statistics in Practice}.
\end{bmisc}
\endbibitem

\bibitem{cousins2008evaluation}
\begin{barticle}[author]
\bauthor{\bsnm{Cousins},~\bfnm{Robert~D}\binits{R.~D.}}, \bauthor{\bsnm{Linnemann},~\bfnm{James~T}\binits{J.~T.}} \AND \bauthor{\bsnm{Tucker},~\bfnm{Jordan}\binits{J.}}
(\byear{2008}).
\btitle{Evaluation of three methods for calculating statistical significance when incorporating a systematic uncertainty into a test of the background-only hypothesis for a Poisson process}.
\bjournal{Nuclear Instruments and Methods in Physics Research Section A: Accelerators, Spectrometers, Detectors and Associated Equipment}
\bvolume{595}
\bpages{480--501}.
\end{barticle}
\endbibitem

\bibitem{cowan2012counting}
\begin{barticle}[author]
\bauthor{\bsnm{Cowan},~\bfnm{Glen}\binits{G.}}
(\byear{2012}).
\btitle{Discovery sensitivity for a counting experiment with back- ground uncertainty}.
\bjournal{Technical Report}.
\end{barticle}
\endbibitem

\bibitem{cowan2011formulaeInferenceHEP}
\begin{barticle}[author]
\bauthor{\bsnm{Cowan},~\bfnm{Glen}\binits{G.}}, \bauthor{\bsnm{Cranmer},~\bfnm{Kyle}\binits{K.}}, \bauthor{\bsnm{Gross},~\bfnm{Eilam}\binits{E.}} \AND \bauthor{\bsnm{Vitells},~\bfnm{Ofer}\binits{O.}}
(\byear{2011}).
\btitle{Asymptotic formulae for likelihood-based tests of new physics}.
\bjournal{The European Physical Journal C}
\bvolume{71}.
\bdoi{10.1140/epjc/s10052-011-1554-0}
\end{barticle}
\endbibitem

\bibitem{Cranmer2015PraticalStatsLHC}
\begin{barticle}[author]
\bauthor{\bsnm{{Cranmer}},~\bfnm{Kyle}\binits{K.}}
(\byear{2015}).
\btitle{{Practical Statistics for the LHC}}.
\bjournal{arXiv e-prints}
\bpages{arXiv:1503.07622}.
\end{barticle}
\endbibitem

\bibitem{Cranmer2020Review}
\begin{barticle}[author]
\bauthor{\bsnm{Cranmer},~\bfnm{Kyle}\binits{K.}}, \bauthor{\bsnm{Brehmer},~\bfnm{Johann}\binits{J.}} \AND \bauthor{\bsnm{Louppe},~\bfnm{Gilles}\binits{G.}}
(\byear{2020}).
\btitle{The frontier of simulation-based inference}.
\bjournal{Proceedings of the National Academy of Sciences}
\bvolume{117}
\bpages{30055--30062}.
\bdoi{10.1073/pnas.1912789117}
\end{barticle}
\endbibitem

\bibitem{Cranmer2015LikRatio}
\begin{barticle}[author]
\bauthor{\bsnm{Cranmer},~\bfnm{Kyle}\binits{K.}}, \bauthor{\bsnm{Pavez},~\bfnm{Juan}\binits{J.}} \AND \bauthor{\bsnm{Louppe},~\bfnm{Gilles}\binits{G.}}
(\byear{2015}).
\btitle{Approximating Likelihood Ratios with Calibrated Discriminative Classifiers}.
\bjournal{arXiv preprint arXiv:1506.02169}.
\end{barticle}
\endbibitem

\bibitem{dacunha1997gmm}
\begin{barticle}[author]
\bauthor{\bsnm{Dacunha-Castelle},~\bfnm{Didier}\binits{D.}} \AND \bauthor{\bsnm{Gassiat},~\bfnm{Elisabeth}\binits{E.}}
(\byear{1997}).
\btitle{Testing in locally conic models, and application to mixture models}.
\bjournal{ESAIM: Probability and Statistics}
\bvolume{1}
\bpages{285--317}.
\end{barticle}
\endbibitem

\bibitem{dalmasso2020ACORE}
\begin{binproceedings}[author]
\bauthor{\bsnm{Dalmasso},~\bfnm{Niccolo}\binits{N.}}, \bauthor{\bsnm{Izbicki},~\bfnm{Rafael}\binits{R.}} \AND \bauthor{\bsnm{Lee},~\bfnm{Ann}\binits{A.}}
(\byear{2020}).
\btitle{Confidence Sets and Hypothesis Testing in a Likelihood-Free Inference Setting}.
In \bbooktitle{Proceedings of the 37th International Conference on Machine Learning}
(\beditor{\bfnm{Hal~Daumé}\binits{H.~D.}~\bsnm{III}} \AND \beditor{\bfnm{Aarti}\binits{A.}~\bsnm{Singh}}, eds.).
\bseries{Proceedings of Machine Learning Research}
\bvolume{119}
\bpages{2323--2334}.
\bpublisher{PMLR}, \baddress{Virtual}.
\end{binproceedings}
\endbibitem

\bibitem{dey2021re}
\begin{barticle}[author]
\bauthor{\bsnm{Dey},~\bfnm{Biprateep}\binits{B.}}, \bauthor{\bsnm{Newman},~\bfnm{Jeffrey~A}\binits{J.~A.}}, \bauthor{\bsnm{Andrews},~\bfnm{Brett~H}\binits{B.~H.}}, \bauthor{\bsnm{Izbicki},~\bfnm{Rafael}\binits{R.}}, \bauthor{\bsnm{Lee},~\bfnm{Ann~B}\binits{A.~B.}}, \bauthor{\bsnm{Zhao},~\bfnm{David}\binits{D.}}, \bauthor{\bsnm{Rau},~\bfnm{Markus~Michael}\binits{M.~M.}} \AND \bauthor{\bsnm{Malz},~\bfnm{Alex~I}\binits{A.~I.}}
(\byear{2021}).
\btitle{Re-calibrating Photometric Redshift Probability Distributions Using Feature-space Regression}.
\bjournal{arXiv preprint arXiv:2110.15209}.
\end{barticle}
\endbibitem

\bibitem{dey2022calibrated}
\begin{barticle}[author]
\bauthor{\bsnm{Dey},~\bfnm{Biprateep}\binits{B.}}, \bauthor{\bsnm{Zhao},~\bfnm{David}\binits{D.}}, \bauthor{\bsnm{Newman},~\bfnm{Jeffrey~A}\binits{J.~A.}}, \bauthor{\bsnm{Andrews},~\bfnm{Brett~H}\binits{B.~H.}}, \bauthor{\bsnm{Izbicki},~\bfnm{Rafael}\binits{R.}} \AND \bauthor{\bsnm{Lee},~\bfnm{Ann~B}\binits{A.~B.}}
(\byear{2022}).
\btitle{Calibrated predictive distributions via diagnostics for conditional coverage}.
\bjournal{arXiv preprint arXiv:2205.14568}.
\end{barticle}
\endbibitem

\bibitem{donoho1994asymptotic}
\begin{barticle}[author]
\bauthor{\bsnm{Donoho},~\bfnm{David~L}\binits{D.~L.}}
(\byear{1994}).
\btitle{Asymptotic minimax risk for sup-norm loss: solution via optimal recovery}.
\bjournal{Probability Theory and Related Fields}
\bvolume{99}
\bpages{145--170}.
\end{barticle}
\endbibitem

\bibitem{dorigo2022deep}
\begin{barticle}[author]
\bauthor{\bsnm{Dorigo},~\bfnm{Tommaso}\binits{T.}}, \bauthor{\bsnm{Guglielmini},~\bfnm{Sofia}\binits{S.}}, \bauthor{\bsnm{Kieseler},~\bfnm{Jan}\binits{J.}}, \bauthor{\bsnm{Layer},~\bfnm{Lukas}\binits{L.}} \AND \bauthor{\bsnm{Strong},~\bfnm{Giles~C}\binits{G.~C.}}
(\byear{2022}).
\btitle{Deep Regression of Muon Energy with a K-Nearest Neighbor Algorithm}.
\bjournal{arXiv preprint arXiv:2203.02841}.
\end{barticle}
\endbibitem

\bibitem{Drton2009LRSing}
\begin{barticle}[author]
\bauthor{\bsnm{Drton},~\bfnm{Mathias}\binits{M.}}
(\byear{2009}).
\btitle{Likelihood ratio tests and singularities}.
\bjournal{The Annals of Statistics}
\bvolume{37}
\bpages{979–1012}.
\bdoi{10.1214/07-aos571}
\end{barticle}
\endbibitem

\bibitem{durkan2020constrastive}
\begin{binproceedings}[author]
\bauthor{\bsnm{Durkan},~\bfnm{Conor}\binits{C.}}, \bauthor{\bsnm{Murray},~\bfnm{Iain}\binits{I.}} \AND \bauthor{\bsnm{Papamakarios},~\bfnm{George}\binits{G.}}
(\byear{2020}).
\btitle{On Contrastive Learning for Likelihood-free Inference}.
In \bbooktitle{Proceedings of the 37th International Conference on Machine Learning}
(\beditor{\bfnm{Hal~Daumé}\binits{H.~D.}~\bsnm{III}} \AND \beditor{\bfnm{Aarti}\binits{A.}~\bsnm{Singh}}, eds.).
\bseries{Proceedings of Machine Learning Research}
\bvolume{119}
\bpages{2771--2781}.
\bpublisher{PMLR}.
\end{binproceedings}
\endbibitem

\bibitem{fasiolo2018bayesiansynth}
\begin{barticle}[author]
\bauthor{\bsnm{Fasiolo},~\bfnm{Matteo}\binits{M.}}, \bauthor{\bsnm{Wood},~\bfnm{Simon~N.}\binits{S.~N.}}, \bauthor{\bsnm{Hartig},~\bfnm{Florian}\binits{F.}} \AND \bauthor{\bsnm{Bravington},~\bfnm{Mark~V.}\binits{M.~V.}}
(\byear{2018}).
\btitle{An extended empirical saddlepoint approximation for intractable likelihoods}.
\bjournal{Electron. J. Statist.}
\bvolume{12}
\bpages{1544--1578}.
\bdoi{10.1214/18-EJS1433}
\end{barticle}
\endbibitem

\bibitem{feldman2000profiling}
\begin{btechreport}[author]
\bauthor{\bsnm{Feldman},~\bfnm{G.}\binits{G.}}
(\byear{2000}).
\btitle{Multiple measurements and parameters in the unified approach}
\btype{Technical Report},
\bpublisher{Technical Report, Talk at the FermiLab Workshop on Confidence Limits}.
\end{btechreport}
\endbibitem

\bibitem{Feldman1998UnifyingApproach}
\begin{barticle}[author]
\bauthor{\bsnm{Feldman},~\bfnm{Gary~J.}\binits{G.~J.}} \AND \bauthor{\bsnm{Cousins},~\bfnm{Robert~D.}\binits{R.~D.}}
(\byear{1998}).
\btitle{Unified approach to the classical statistical analysis of small signals}.
\bjournal{Physical Review D}
\bvolume{57}
\bpages{3873–3889}.
\bdoi{10.1103/physrevd.57.3873}
\end{barticle}
\endbibitem

\bibitem{fisher1925}
\begin{bbook}[author]
\bauthor{\bsnm{Fisher},~\bfnm{R.~A.}\binits{R.~A.}}
(\byear{1925}).
\btitle{Statistical Methods for Research Workers},
\bedition{11th ed. rev.} ed.
\bpublisher{Oliver and Boyd: Edinburgh}.
\end{bbook}
\endbibitem

\bibitem{gerber2021fast}
\begin{barticle}[author]
\bauthor{\bsnm{Gerber},~\bfnm{Florian}\binits{F.}} \AND \bauthor{\bsnm{Nychka},~\bfnm{Douglas}\binits{D.}}
(\byear{2021}).
\btitle{Fast covariance parameter estimation of spatial Gaussian process models using neural networks}.
\bjournal{Stat}
\bvolume{10}
\bpages{e382}.
\end{barticle}
\endbibitem

\bibitem{girard2014uniform}
\begin{barticle}[author]
\bauthor{\bsnm{Girard},~\bfnm{St{\'e}phane}\binits{S.}}, \bauthor{\bsnm{Guillou},~\bfnm{Armelle}\binits{A.}} \AND \bauthor{\bsnm{Stupfler},~\bfnm{Gilles}\binits{G.}}
(\byear{2014}).
\btitle{Uniform strong consistency of a frontier estimator using kernel regression on high order moments}.
\bjournal{ESAIM: Probability and Statistics}
\bvolume{18}
\bpages{642--666}.
\end{barticle}
\endbibitem

\bibitem{greenberg2019posterior}
\begin{binproceedings}[author]
\bauthor{\bsnm{Greenberg},~\bfnm{David}\binits{D.}}, \bauthor{\bsnm{Nonnenmacher},~\bfnm{Marcel}\binits{M.}} \AND \bauthor{\bsnm{Macke},~\bfnm{Jakob}\binits{J.}}
(\byear{2019}).
\btitle{Automatic Posterior Transformation for Likelihood-Free Inference}.
In \bbooktitle{Proceedings of the 36th International Conference on Machine Learning}
(\beditor{\bfnm{Kamalika}\binits{K.}~\bsnm{Chaudhuri}} \AND \beditor{\bfnm{Ruslan}\binits{R.}~\bsnm{Salakhutdinov}}, eds.).
\bseries{Proceedings of Machine Learning Research}
\bvolume{97}
\bpages{2404--2414}.
\bpublisher{PMLR}, \baddress{Long Beach, California, USA}.
\end{binproceedings}
\endbibitem

\bibitem{gutmann2016bolfi}
\begin{barticle}[author]
\bauthor{\bsnm{Gutmann},~\bfnm{Michael~U.}\binits{M.~U.}} \AND \bauthor{\bsnm{Corander},~\bfnm{Jukka}\binits{J.}}
(\byear{2016}).
\btitle{Bayesian Optimization for Likelihood-Free Inference of Simulator-Based Statistical Models}.
\bjournal{Journal of Machine Learning Research}
\bvolume{17}
\bpages{1-47}.
\end{barticle}
\endbibitem

\bibitem{hardle1984uniform}
\begin{barticle}[author]
\bauthor{\bsnm{Hardle},~\bfnm{W}\binits{W.}}, \bauthor{\bsnm{Luckhaus},~\bfnm{Stephan}\binits{S.}} \betal{et~al.}
(\byear{1984}).
\btitle{Uniform consistency of a class of regression function estimators}.
\bjournal{The Annals of Statistics}
\bvolume{12}
\bpages{612--623}.
\end{barticle}
\endbibitem

\bibitem{heinrich2022learning}
\begin{barticle}[author]
\bauthor{\bsnm{Heinrich},~\bfnm{Lukas}\binits{L.}}
(\byear{2022}).
\btitle{Learning Optimal Test Statistics in the Presence of Nuisance Parameters}.
\bjournal{arXiv preprint arXiv:2203.13079}.
\end{barticle}
\endbibitem

\bibitem{herb1977observation}
\begin{barticle}[author]
\bauthor{\bsnm{Herb},~\bfnm{SW}\binits{S.}}, \bauthor{\bsnm{Hom},~\bfnm{DC}\binits{D.}}, \bauthor{\bsnm{Lederman},~\bfnm{LM}\binits{L.}}, \bauthor{\bsnm{Sens},~\bfnm{JC}\binits{J.}}, \bauthor{\bsnm{Snyder},~\bfnm{HD}\binits{H.}}, \bauthor{\bsnm{Yoh},~\bfnm{JK}\binits{J.}}, \bauthor{\bsnm{Appel},~\bfnm{JA}\binits{J.}}, \bauthor{\bsnm{Brown},~\bfnm{BC}\binits{B.}}, \bauthor{\bsnm{Brown},~\bfnm{CN}\binits{C.}}, \bauthor{\bsnm{Innes},~\bfnm{WR}\binits{W.}} \betal{et~al.}
(\byear{1977}).
\btitle{Observation of a dimuon resonance at 9.5 GeV in 400-GeV proton-nucleus collisions}.
\bjournal{Physical Review Letters}
\bvolume{39}
\bpages{252}.
\end{barticle}
\endbibitem

\bibitem{hermans2020likelihoodfree}
\begin{barticle}[author]
\bauthor{\bsnm{Hermans},~\bfnm{Joeri}\binits{J.}}, \bauthor{\bsnm{Begy},~\bfnm{Volodimir}\binits{V.}} \AND \bauthor{\bsnm{Louppe},~\bfnm{Gilles}\binits{G.}}
(\byear{2020}).
\btitle{Likelihood-free {MCMC} with Amortized Approximate Ratio Estimators}.
\bjournal{arXiv preprint arXiv:1903.04057}.
\end{barticle}
\endbibitem

\bibitem{hermans2021averting}
\begin{barticle}[author]
\bauthor{\bsnm{Hermans},~\bfnm{Joeri}\binits{J.}}, \bauthor{\bsnm{Delaunoy},~\bfnm{Arnaud}\binits{A.}}, \bauthor{\bsnm{Rozet},~\bfnm{Fran{\c{c}}ois}\binits{F.}}, \bauthor{\bsnm{Wehenkel},~\bfnm{Antoine}\binits{A.}} \AND \bauthor{\bsnm{Louppe},~\bfnm{Gilles}\binits{G.}}
(\byear{2021}).
\btitle{Averting a crisis in simulation-based inference}.
\bjournal{arXiv preprint arXiv:2110.06581}.
\end{barticle}
\endbibitem

\bibitem{ho2021approximate}
\begin{barticle}[author]
\bauthor{\bsnm{Ho},~\bfnm{Matthew}\binits{M.}}, \bauthor{\bsnm{Farahi},~\bfnm{Arya}\binits{A.}}, \bauthor{\bsnm{Rau},~\bfnm{Markus~Michael}\binits{M.~M.}} \AND \bauthor{\bsnm{Trac},~\bfnm{Hy}\binits{H.}}
(\byear{2021}).
\btitle{Approximate Bayesian Uncertainties on Deep Learning Dynamical Mass Estimates of Galaxy Clusters}.
\bjournal{The Astrophysical Journal}
\bvolume{908}
\bpages{204}.
\end{barticle}
\endbibitem

\bibitem{izbicki2014high}
\begin{binproceedings}[author]
\bauthor{\bsnm{Izbicki},~\bfnm{Rafael}\binits{R.}}, \bauthor{\bsnm{Lee},~\bfnm{Ann}\binits{A.}} \AND \bauthor{\bsnm{Schafer},~\bfnm{Chad}\binits{C.}}
(\byear{2014}).
\btitle{{High-Dimensional Density Ratio Estimation with Extensions to Approximate Likelihood Computation}}.
In \bbooktitle{Proceedings of the Seventeenth International Conference on Artificial Intelligence and Statistics}
(\beditor{\bfnm{Samuel}\binits{S.}~\bsnm{Kaski}} \AND \beditor{\bfnm{Jukka}\binits{J.}~\bsnm{Corander}}, eds.).
\bseries{Proceedings of Machine Learning Research}
\bvolume{33}
\bpages{420--429}.
\bpublisher{PMLR}, \baddress{Reykjavik, Iceland}.
\end{binproceedings}
\endbibitem

\bibitem{izbicki2019abc}
\begin{barticle}[author]
\bauthor{\bsnm{Izbicki},~\bfnm{Rafael}\binits{R.}}, \bauthor{\bsnm{Lee},~\bfnm{Ann~B}\binits{A.~B.}} \AND \bauthor{\bsnm{Pospisil},~\bfnm{Taylor}\binits{T.}}
(\byear{2019}).
\btitle{{ABC--CDE}: Toward {A}pproximate {B}ayesian {C}omputation With Complex High-Dimensional Data and Limited Simulations}.
\bjournal{Journal of Computational and Graphical Statistics}
\bpages{1--20}.
\bdoi{10.1080/10618600.2018.1546594}
\end{barticle}
\endbibitem

\bibitem{jeffreys_1935}
\begin{barticle}[author]
\bauthor{\bsnm{Jeffreys},~\bfnm{Harold}\binits{H.}}
(\byear{1935}).
\btitle{Some Tests of Significance, Treated by the Theory of Probability}.
\bjournal{Mathematical Proceedings of the Cambridge Philosophical Society}
\bvolume{31}
\bpages{203–222}.
\bdoi{10.1017/S030500410001330X}
\end{barticle}
\endbibitem

\bibitem{jeffreys_1961}
\begin{bbook}[author]
\bauthor{\bsnm{Jeffreys},~\bfnm{Harold}\binits{H.}}
(\byear{1961}).
\btitle{Theory of probability},
\bedition{3rd ed.} ed.
\bpublisher{Clarendon Press Oxford}.
\end{bbook}
\endbibitem

\bibitem{jarvenpaa2021bolfi}
\begin{barticle}[author]
\bauthor{\bsnm{Järvenpää},~\bfnm{Marko}\binits{M.}}, \bauthor{\bsnm{Gutmann},~\bfnm{Michael~U.}\binits{M.~U.}}, \bauthor{\bsnm{Vehtari},~\bfnm{Aki}\binits{A.}} \AND \bauthor{\bsnm{Marttinen},~\bfnm{Pekka}\binits{P.}}
(\byear{2021}).
\btitle{Parallel {Gaussian} Process Surrogate {Bayesian} Inference with Noisy Likelihood Evaluations}.
\bjournal{Bayesian Anal.}
\bvolume{16}
\bpages{147--178}.
\bdoi{10.1214/20-BA1200}
\end{barticle}
\endbibitem

\bibitem{kieseler10preprocessed}
\begin{barticle}[author]
\bauthor{\bsnm{Kieseler},~\bfnm{Jan}\binits{J.}}, \bauthor{\bsnm{Strong},~\bfnm{Giles~Chatham}\binits{G.~C.}}, \bauthor{\bsnm{Chiandotto},~\bfnm{Filippo}\binits{F.}}, \bauthor{\bsnm{Dorigo},~\bfnm{Tommaso}\binits{T.}} \AND \bauthor{\bsnm{Layer},~\bfnm{Lukas}\binits{L.}}
\btitle{Preprocessed dataset for “Calorimetric measurement of multi-TeV muons via deep regression", August 2021}.
\bjournal{URL https://doi. org/10.5281/zenodo}
\bvolume{5163817}.
\end{barticle}
\endbibitem

\bibitem{kieseler2022calorimetric}
\begin{barticle}[author]
\bauthor{\bsnm{Kieseler},~\bfnm{Jan}\binits{J.}}, \bauthor{\bsnm{Strong},~\bfnm{Giles~C}\binits{G.~C.}}, \bauthor{\bsnm{Chiandotto},~\bfnm{Filippo}\binits{F.}}, \bauthor{\bsnm{Dorigo},~\bfnm{Tommaso}\binits{T.}} \AND \bauthor{\bsnm{Layer},~\bfnm{Lukas}\binits{L.}}
(\byear{2022}).
\btitle{Calorimetric Measurement of Multi-TeV Muons via Deep Regression}.
\bjournal{The European Physical Journal C}
\bvolume{82}
\bpages{1--26}.
\end{barticle}
\endbibitem

\bibitem{koenker2017handbook}
\begin{bbook}[author]
\bauthor{\bsnm{Koenker},~\bfnm{Roger}\binits{R.}}, \bauthor{\bsnm{Chernozhukov},~\bfnm{Victor}\binits{V.}}, \bauthor{\bsnm{He},~\bfnm{Xuming}\binits{X.}} \AND \bauthor{\bsnm{Peng},~\bfnm{Limin}\binits{L.}}
(\byear{2017}).
\btitle{Handbook of quantile regression}.
\bpublisher{CRC press}.
\end{bbook}
\endbibitem

\bibitem{lemos2023sampling}
\begin{barticle}[author]
\bauthor{\bsnm{Lemos},~\bfnm{Pablo}\binits{P.}}, \bauthor{\bsnm{Coogan},~\bfnm{Adam}\binits{A.}}, \bauthor{\bsnm{Hezaveh},~\bfnm{Yashar}\binits{Y.}} \AND \bauthor{\bsnm{Perreault-Levasseur},~\bfnm{Laurence}\binits{L.}}
(\byear{2023}).
\btitle{Sampling-based accuracy testing of posterior estimators for general inference}.
\bjournal{arXiv preprint arXiv:2302.03026}.
\end{barticle}
\endbibitem

\bibitem{lenzi2021neural}
\begin{barticle}[author]
\bauthor{\bsnm{Lenzi},~\bfnm{Amanda}\binits{A.}}, \bauthor{\bsnm{Bessac},~\bfnm{Julie}\binits{J.}}, \bauthor{\bsnm{Rudi},~\bfnm{Johann}\binits{J.}} \AND \bauthor{\bsnm{Stein},~\bfnm{Michael~L}\binits{M.~L.}}
(\byear{2021}).
\btitle{Neural networks for parameter estimation in intractable models}.
\bjournal{arXiv preprint arXiv:2107.14346}.
\end{barticle}
\endbibitem

\bibitem{liero1989strong}
\begin{barticle}[author]
\bauthor{\bsnm{Liero},~\bfnm{Hannelore}\binits{H.}}
(\byear{1989}).
\btitle{Strong uniform consistency of nonparametric regression function estimates}.
\bjournal{Probability theory and related fields}
\bvolume{82}
\bpages{587--614}.
\end{barticle}
\endbibitem

\bibitem{linhart2023c2st}
\begin{barticle}[author]
\bauthor{\bsnm{Linhart},~\bfnm{Julia}\binits{J.}}, \bauthor{\bsnm{Gramfort},~\bfnm{Alexandre}\binits{A.}} \AND \bauthor{\bsnm{Rodrigues},~\bfnm{Pedro~LC}\binits{P.~L.}}
(\byear{2023}).
\btitle{L-C2ST: Local Diagnostics for Posterior Approximations in Simulation-Based Inference}.
\bjournal{arXiv preprint arXiv:2306.03580}.
\end{barticle}
\endbibitem

\bibitem{lueckmann2019likelihood}
\begin{binproceedings}[author]
\bauthor{\bsnm{Lueckmann},~\bfnm{Jan-Matthis}\binits{J.-M.}}, \bauthor{\bsnm{Bassetto},~\bfnm{Giacomo}\binits{G.}}, \bauthor{\bsnm{Karaletsos},~\bfnm{Theofanis}\binits{T.}} \AND \bauthor{\bsnm{Macke},~\bfnm{Jakob~H}\binits{J.~H.}}
(\byear{2019}).
\btitle{Likelihood-free inference with emulator networks}.
In \bbooktitle{Symposium on Advances in Approximate Bayesian Inference}
\bpages{32--53}.
\end{binproceedings}
\endbibitem

\bibitem{lueckamnn2017Posterior}
\begin{bincollection}[author]
\bauthor{\bsnm{Lueckmann},~\bfnm{Jan-Matthis}\binits{J.-M.}}, \bauthor{\bsnm{Goncalves},~\bfnm{Pedro~J}\binits{P.~J.}}, \bauthor{\bsnm{Bassetto},~\bfnm{Giacomo}\binits{G.}}, \bauthor{\bsnm{\"{O}cal},~\bfnm{Kaan}\binits{K.}}, \bauthor{\bsnm{Nonnenmacher},~\bfnm{Marcel}\binits{M.}} \AND \bauthor{\bsnm{Macke},~\bfnm{Jakob~H}\binits{J.~H.}}
(\byear{2017}).
\btitle{Flexible statistical inference for mechanistic models of neural dynamics}.
In \bbooktitle{Advances in Neural Information Processing Systems 30}
(\beditor{\bfnm{I.}\binits{I.}~\bsnm{Guyon}}, \beditor{\bfnm{U.~V.}\binits{U.~V.}~\bsnm{Luxburg}}, \beditor{\bfnm{S.}\binits{S.}~\bsnm{Bengio}}, \beditor{\bfnm{H.}\binits{H.}~\bsnm{Wallach}}, \beditor{\bfnm{R.}\binits{R.}~\bsnm{Fergus}}, \beditor{\bfnm{S.}\binits{S.}~\bsnm{Vishwanathan}} \AND \beditor{\bfnm{R.}\binits{R.}~\bsnm{Garnett}}, eds.)
\bpages{1289--1299}.
\bpublisher{Curran Associates, Inc.}
\end{bincollection}
\endbibitem

\bibitem{lyons2012counting}
\begin{barticle}[author]
\bauthor{\bsnm{Lyons},~\bfnm{Louis}\binits{L.}}
(\byear{2008}).
\btitle{{Open statistical issues in Particle Physics}}.
\bjournal{The Annals of Applied Statistics}
\bvolume{2}
\bpages{887 -- 915}.
\bdoi{10.1214/08-AOAS163}
\end{barticle}
\endbibitem

\bibitem{mackinnon2009bootstrap}
\begin{barticle}[author]
\bauthor{\bsnm{MacKinnon},~\bfnm{James~G}\binits{J.~G.}}
(\byear{2009}).
\btitle{Bootstrap hypothesis testing}.
\bjournal{Handbook of computational econometrics}
\bvolume{183}
\bpages{213}.
\end{barticle}
\endbibitem

\bibitem{marin2012approximate}
\begin{barticle}[author]
\bauthor{\bsnm{Marin},~\bfnm{Jean-Michel}\binits{J.-M.}}, \bauthor{\bsnm{Pudlo},~\bfnm{Pierre}\binits{P.}}, \bauthor{\bsnm{Robert},~\bfnm{Christian}\binits{C.}} \AND \bauthor{\bsnm{Ryder},~\bfnm{Robin}\binits{R.}}
(\byear{2012}).
\btitle{Approximate {Bayesian} computational methods}.
\bjournal{Statistics and Computing}
\bvolume{22}
\bpages{1167-1180}.
\bdoi{10.1007/s11222-011-9288-2}
\end{barticle}
\endbibitem

\bibitem{marin2016abcrf}
\begin{barticle}[author]
\bauthor{\bsnm{Marin},~\bfnm{Jean-Michel}\binits{J.-M.}}, \bauthor{\bsnm{Raynal},~\bfnm{Louis}\binits{L.}}, \bauthor{\bsnm{Pudlo},~\bfnm{Pierre}\binits{P.}}, \bauthor{\bsnm{Ribatet},~\bfnm{Mathieu}\binits{M.}} \AND \bauthor{\bsnm{Robert},~\bfnm{Christian}\binits{C.}}
(\byear{2016}).
\btitle{{ABC} random forests for {Bayesian} parameter inference}.
\bjournal{Bioinformatics (Oxford, England)}
\bvolume{35}.
\bdoi{10.1093/bioinformatics/bty867}
\end{barticle}
\endbibitem

\bibitem{masserano2023simulator}
\begin{binproceedings}[author]
\bauthor{\bsnm{Masserano},~\bfnm{Luca}\binits{L.}}, \bauthor{\bsnm{Dorigo},~\bfnm{Tommaso}\binits{T.}}, \bauthor{\bsnm{Izbicki},~\bfnm{Rafael}\binits{R.}}, \bauthor{\bsnm{Kuusela},~\bfnm{Mikael}\binits{M.}} \AND \bauthor{\bsnm{Lee},~\bfnm{Ann}\binits{A.}}
(\byear{2023}).
\btitle{Simulator-Based Inference with WALDO: Confidence Regions by Leveraging Prediction Algorithms and Posterior Estimators for Inverse Problems}.
In \bbooktitle{International Conference on Artificial Intelligence and Statistics}
\bpages{2960--2974}.
\bpublisher{PMLR}.
\end{binproceedings}
\endbibitem

\bibitem{mclachlan1987gmm}
\begin{barticle}[author]
\bauthor{\bsnm{McLachlan},~\bfnm{Geoffrey~J.}\binits{G.~J.}}
(\byear{1987}).
\btitle{On bootstrapping the likelihood ratio test statistic for the number of components in a normal mixture}.
\bjournal{Journal of the Royal Statistical Society: Series C (Applied Statistics)}
\bvolume{36}
\bpages{318--324}.
\end{barticle}
\endbibitem

\bibitem{meeds2014gps}
\begin{barticle}[author]
\bauthor{\bsnm{Meeds},~\bfnm{Edward}\binits{E.}} \AND \bauthor{\bsnm{Welling},~\bfnm{Max}\binits{M.}}
(\byear{2014}).
\btitle{{GPS}-{ABC}: Gaussian process surrogate approximate {B}ayesian computation}.
\bjournal{arXiv preprint arXiv:1401.2838}.
\end{barticle}
\endbibitem

\bibitem{meinshausen2006qrforest}
\begin{barticle}[author]
\bauthor{\bsnm{Meinshausen},~\bfnm{Nicolai}\binits{N.}}
(\byear{2006}).
\btitle{Quantile Regression Forests}.
\bjournal{Journal of Machine Learning Research}
\bvolume{7}
\bpages{983-999}.
\end{barticle}
\endbibitem

\bibitem{neyman1935CI}
\begin{barticle}[author]
\bauthor{\bsnm{Neyman},~\bfnm{J.}\binits{J.}}
(\byear{1935}).
\btitle{On the Problem of Confidence Intervals}.
\bjournal{Ann. Math. Statist.}
\bvolume{6}
\bpages{111--116}.
\bdoi{10.1214/aoms/1177732585}
\end{barticle}
\endbibitem

\bibitem{neyman1937inversion}
\begin{barticle}[author]
\bauthor{\bsnm{Neyman},~\bfnm{J.}\binits{J.}}
(\byear{1937}).
\btitle{Outline of a Theory of Statistical Estimation Based on the Classical Theory of Probability}.
\bjournal{Philosophical Transactions of the Royal Society of London. Series A, Mathematical and Physical Sciences}
\bvolume{236}
\bpages{333--380}.
\end{barticle}
\endbibitem

\bibitem{neymanpearson1928LR}
\begin{barticle}[author]
\bauthor{\bsnm{Neyman},~\bfnm{J.}\binits{J.}} \AND \bauthor{\bsnm{Pearson},~\bfnm{E.~S.}\binits{E.~S.}}
(\byear{1928}).
\btitle{On the Use and Interpretation of Certain Test Criteria for Purposes of Statistical Inference: Part I}.
\bjournal{Biometrika}
\bvolume{20A}
\bpages{175--240}.
\end{barticle}
\endbibitem

\bibitem{Papamakarios2016SNP}
\begin{binproceedings}[author]
\bauthor{\bsnm{Papamakarios},~\bfnm{George}\binits{G.}} \AND \bauthor{\bsnm{Murray},~\bfnm{Iain}\binits{I.}}
(\byear{2016}).
\btitle{Fast $\epsilon$-free Inference of Simulation Models with {Bayesian} Conditional Density Estimation}.
In \bbooktitle{Advances in Neural Information Processing Systems}
(\beditor{\bfnm{D.}\binits{D.}~\bsnm{Lee}}, \beditor{\bfnm{M.}\binits{M.}~\bsnm{Sugiyama}}, \beditor{\bfnm{U.}\binits{U.}~\bsnm{Luxburg}}, \beditor{\bfnm{I.}\binits{I.}~\bsnm{Guyon}} \AND \beditor{\bfnm{R.}\binits{R.}~\bsnm{Garnett}}, eds.)
\bvolume{29}
\bpages{1028--1036}.
\bpublisher{Curran Associates, Inc.}
\end{binproceedings}
\endbibitem

\bibitem{Papamakarios2019FlowsReview}
\begin{barticle}[author]
\bauthor{\bsnm{Papamakarios},~\bfnm{George}\binits{G.}}, \bauthor{\bsnm{Nalisnick},~\bfnm{Eric}\binits{E.}}, \bauthor{\bsnm{Rezende},~\bfnm{Danilo~Jimenez}\binits{D.~J.}}, \bauthor{\bsnm{Mohamed},~\bfnm{Shakir}\binits{S.}} \AND \bauthor{\bsnm{Lakshminarayanan},~\bfnm{Balaji}\binits{B.}}
(\byear{2021}).
\btitle{{Normalizing Flows for Probabilistic Modeling and Inference}}.
\bjournal{Journal of Machine Learning Research}
\bvolume{22}
\bpages{1-64}.
\end{barticle}
\endbibitem

\bibitem{papamakarios2019likelihood}
\begin{binproceedings}[author]
\bauthor{\bsnm{Papamakarios},~\bfnm{George}\binits{G.}}, \bauthor{\bsnm{Sterratt},~\bfnm{David}\binits{D.}} \AND \bauthor{\bsnm{Murray},~\bfnm{Iain}\binits{I.}}
(\byear{2019}).
\btitle{Sequential Neural Likelihood: Fast Likelihood-free Inference with Autoregressive Flows}.
In \bbooktitle{The 22nd {I}nternational {C}onference on {A}rtificial {I}ntelligence and {S}tatistics}
\bpages{837--848}.
\end{binproceedings}
\endbibitem

\bibitem{picchini2020adaptivebayesiansynth}
\begin{barticle}[author]
\bauthor{\bsnm{Picchini},~\bfnm{Umberto}\binits{U.}}, \bauthor{\bsnm{Simola},~\bfnm{Umberto}\binits{U.}} \AND \bauthor{\bsnm{Corander},~\bfnm{Jukka}\binits{J.}}
(\byear{2020}).
\btitle{Adaptive {MCMC} for synthetic likelihoods and correlated synthetic likelihoods}.
\bjournal{arXiv preprint arXiv:2004.04558}.
\end{barticle}
\endbibitem

\bibitem{qian2016physics}
\begin{barticle}[author]
\bauthor{\bsnm{Qian},~\bfnm{X.}\binits{X.}}, \bauthor{\bsnm{Tan},~\bfnm{A.}\binits{A.}}, \bauthor{\bsnm{Ling},~\bfnm{J.~J.}\binits{J.~J.}}, \bauthor{\bsnm{Nakajima},~\bfnm{Y.}\binits{Y.}} \AND \bauthor{\bsnm{Zhang},~\bfnm{C.}\binits{C.}}
(\byear{2016}).
\btitle{The {Gaussian} {CL}$_s$ method for searches of new physics}.
\bjournal{Nuclear Instruments and Methods in Physics Research Section A: Accelerators, Spectrometers, Detectors and Associated Equipment}
\bvolume{827}
\bpages{63-78}.
\end{barticle}
\endbibitem

\bibitem{radev2020Bayesflow}
\begin{barticle}[author]
\bauthor{\bsnm{Radev},~\bfnm{Stefan~T.}\binits{S.~T.}}, \bauthor{\bsnm{Mertens},~\bfnm{Ulf~K.}\binits{U.~K.}}, \bauthor{\bsnm{Voss},~\bfnm{Andreas}\binits{A.}}, \bauthor{\bsnm{Ardizzone},~\bfnm{Lynton}\binits{L.}} \AND \bauthor{\bsnm{Köthe},~\bfnm{Ullrich}\binits{U.}}
(\byear{2020}).
\btitle{BayesFlow: Learning Complex Stochastic Models With Invertible Neural Networks}.
\bjournal{IEEE Transactions on Neural Networks and Learning Systems}
\bpages{1-15}.
\bdoi{10.1109/TNNLS.2020.3042395}
\end{barticle}
\endbibitem

\bibitem{redner1981gmm}
\begin{barticle}[author]
\bauthor{\bsnm{Redner},~\bfnm{Richard}\binits{R.}}
(\byear{1981}).
\btitle{Note on the Consistency of the Maximum Likelihood Estimate for Nonidentifiable Distributions}.
\bjournal{The Annals of Statistics}
\bvolume{9}
\bpages{225--228}.
\end{barticle}
\endbibitem

\bibitem{schmidt2020photoz}
\begin{barticle}[author]
\bauthor{\bsnm{Schmidt},~\bfnm{S~J}\binits{S.~J.}}, \bauthor{\bsnm{Malz},~\bfnm{A~I}\binits{A.~I.}}, \bauthor{\bsnm{Soo},~\bfnm{J~Y~H}\binits{J.~Y.~H.}}, \bauthor{\bsnm{Almosallam},~\bfnm{I~A}\binits{I.~A.}}, \bauthor{\bsnm{Brescia},~\bfnm{M}\binits{M.}}, \bauthor{\bsnm{Cavuoti},~\bfnm{S}\binits{S.}}, \bauthor{\bsnm{Cohen-Tanugi},~\bfnm{J}\binits{J.}}, \bauthor{\bsnm{Connolly},~\bfnm{A~J}\binits{A.~J.}}, \bauthor{\bsnm{DeRose},~\bfnm{J}\binits{J.}}, \bauthor{\bsnm{Freeman},~\bfnm{P~E}\binits{P.~E.}}, \bauthor{\bsnm{Graham},~\bfnm{M~L}\binits{M.~L.}}, \bauthor{\bsnm{Iyer},~\bfnm{K~G}\binits{K.~G.}}, \bauthor{\bsnm{Jarvis},~\bfnm{M~J}\binits{M.~J.}}, \bauthor{\bsnm{Kalmbach},~\bfnm{J~B}\binits{J.~B.}}, \bauthor{\bsnm{Kovacs},~\bfnm{E}\binits{E.}}, \bauthor{\bsnm{Lee},~\bfnm{A~B}\binits{A.~B.}}, \bauthor{\bsnm{Longo},~\bfnm{G}\binits{G.}}, \bauthor{\bsnm{Morrison},~\bfnm{C~B}\binits{C.~B.}}, \bauthor{\bsnm{Newman},~\bfnm{J~A}\binits{J.~A.}}, \bauthor{\bsnm{Nourbakhsh},~\bfnm{E}\binits{E.}},
  \bauthor{\bsnm{Nuss},~\bfnm{E}\binits{E.}}, \bauthor{\bsnm{Pospisil},~\bfnm{T}\binits{T.}}, \bauthor{\bsnm{Tranin},~\bfnm{H}\binits{H.}}, \bauthor{\bsnm{Wechsler},~\bfnm{R~H}\binits{R.~H.}}, \bauthor{\bsnm{Zhou},~\bfnm{R}\binits{R.}}, \bauthor{\bsnm{Izbicki},~\bfnm{R}\binits{R.}} \AND \bauthor{\bsnm{Collaboration)},~\bfnm{(The LSST Dark Energy~Science}\binits{T.~L. D. E.~S.}}
(\byear{2020}).
\btitle{{Evaluation of probabilistic photometric redshift estimation approaches for The Rubin Observatory Legacy Survey of Space and Time (LSST)}}.
\bjournal{Monthly Notices of the Royal Astronomical Society}
\bvolume{499}
\bpages{1587-1606}.
\bdoi{10.1093/mnras/staa2799}
\end{barticle}
\endbibitem

\bibitem{Sen2009NuisanceParameters}
\begin{barticle}[author]
\bauthor{\bsnm{Sen},~\bfnm{Bodhisattva}\binits{B.}}, \bauthor{\bsnm{Walker},~\bfnm{Matthew}\binits{M.}} \AND \bauthor{\bsnm{Woodroofe},~\bfnm{Michael}\binits{M.}}
(\byear{2009}).
\btitle{ON THE UNIFIED METHOD WITH NUISANCE PARAMETERS}.
\bjournal{Statistica Sinica}
\bvolume{19}
\bpages{301--314}.
\end{barticle}
\endbibitem

\bibitem{sisson2018handbook}
\begin{bbook}[author]
\bauthor{\bsnm{Sisson},~\bfnm{Scott~A}\binits{S.~A.}}, \bauthor{\bsnm{Fan},~\bfnm{Yanan}\binits{Y.}} \AND \bauthor{\bsnm{Beaumont},~\bfnm{Mark}\binits{M.}}
(\byear{2018}).
\btitle{Handbook of {A}pproximate {B}ayesian {C}omputation}.
\bpublisher{Chapman and Hall/CRC}.
\end{bbook}
\endbibitem

\bibitem{sisson2007sequential}
\begin{barticle}[author]
\bauthor{\bsnm{Sisson},~\bfnm{Scott~A}\binits{S.~A.}}, \bauthor{\bsnm{Fan},~\bfnm{Yanan}\binits{Y.}} \AND \bauthor{\bsnm{Tanaka},~\bfnm{Mark~M}\binits{M.~M.}}
(\byear{2007}).
\btitle{Sequential monte carlo without likelihoods}.
\bjournal{Proceedings of the National Academy of Sciences}
\bvolume{104}
\bpages{1760--1765}.
\end{barticle}
\endbibitem

\bibitem{stone1982optimal}
\begin{barticle}[author]
\bauthor{\bsnm{Stone},~\bfnm{Charles~J}\binits{C.~J.}}
(\byear{1982}).
\btitle{Optimal global rates of convergence for nonparametric regression}.
\bjournal{The Annals of Statistics}
\bpages{1040--1053}.
\end{barticle}
\endbibitem

\bibitem{talts2018validating}
\begin{barticle}[author]
\bauthor{\bsnm{Talts},~\bfnm{S.}\binits{S.}}, \bauthor{\bsnm{Betancourt},~\bfnm{M.}\binits{M.}}, \bauthor{\bsnm{Simpson},~\bfnm{D.}\binits{D.}}, \bauthor{\bsnm{Vehtari},~\bfnm{A.}\binits{A.}} \AND \bauthor{\bsnm{Gelman},~\bfnm{A.}\binits{A.}}
(\byear{2018}).
\btitle{Validating {B}ayesian Inference Algorithms with Simulation-Based Calibration}.
\bjournal{arXiv preprint arXiv:1804.06788}.
\end{barticle}
\endbibitem

\bibitem{thomas2021lfire}
\begin{barticle}[author]
\bauthor{\bsnm{Thomas},~\bfnm{Owen}\binits{O.}}, \bauthor{\bsnm{Dutta},~\bfnm{Ritabrata}\binits{R.}}, \bauthor{\bsnm{Corander},~\bfnm{Jukka}\binits{J.}}, \bauthor{\bsnm{Kaski},~\bfnm{Samuel}\binits{S.}} \AND \bauthor{\bsnm{Gutmann},~\bfnm{Michael~U.}\binits{M.~U.}}
(\byear{2021}).
\btitle{Likelihood-Free Inference by Ratio Estimation}.
\bjournal{Bayesian Anal.}
\bnote{Advance publication}.
\bdoi{10.1214/20-BA1238}
\end{barticle}
\endbibitem

\bibitem{Boom2019nuisance}
\begin{barticle}[author]
\bauthor{\bparticle{van~den} \bsnm{Boom},~\bfnm{W}\binits{W.}}, \bauthor{\bsnm{Reeves},~\bfnm{G}\binits{G.}} \AND \bauthor{\bsnm{Dunson},~\bfnm{D~B}\binits{D.~B.}}
(\byear{2020}).
\btitle{Approximating posteriors with high-dimensional nuisance parameters via integrated rotated {Gaussian} approximation}.
\bjournal{Biometrika}.
\bdoi{10.1093/biomet/asaa068}
\end{barticle}
\endbibitem

\bibitem{ventura2010bootstrap}
\begin{bincollection}[author]
\bauthor{\bsnm{Ventura},~\bfnm{Val{\'e}rie}\binits{V.}}
(\byear{2010}).
\btitle{Bootstrap tests of hypotheses}.
In \bbooktitle{Analysis of parallel spike trains}
\bpages{383--398}.
\bpublisher{Springer}.
\end{bincollection}
\endbibitem

\bibitem{wald1943tests}
\begin{barticle}[author]
\bauthor{\bsnm{Wald},~\bfnm{Abraham}\binits{A.}}
(\byear{1943}).
\btitle{Tests of statistical hypotheses concerning several parameters when the number of observations is large}.
\bjournal{Transactions of the American Mathematical society}
\bvolume{54}
\bpages{426--482}.
\end{barticle}
\endbibitem

\bibitem{warne2024generalised}
\begin{barticle}[author]
\bauthor{\bsnm{Warne},~\bfnm{David~J}\binits{D.~J.}}, \bauthor{\bsnm{Maclaren},~\bfnm{Oliver~J}\binits{O.~J.}}, \bauthor{\bsnm{Carr},~\bfnm{Elliot~J}\binits{E.~J.}}, \bauthor{\bsnm{Simpson},~\bfnm{Matthew~J}\binits{M.~J.}} \AND \bauthor{\bsnm{Drovandi},~\bfnm{Christopher}\binits{C.}}
(\byear{2024}).
\btitle{Generalised likelihood profiles for models with intractable likelihoods}.
\bjournal{Statistics and Computing}
\bvolume{34}
\bpages{50}.
\end{barticle}
\endbibitem

\bibitem{wasserman2020universal}
\begin{barticle}[author]
\bauthor{\bsnm{Wasserman},~\bfnm{Larry}\binits{L.}}, \bauthor{\bsnm{Ramdas},~\bfnm{Aaditya}\binits{A.}} \AND \bauthor{\bsnm{Balakrishnan},~\bfnm{Sivaraman}\binits{S.}}
(\byear{2020}).
\btitle{Universal inference}.
\bjournal{Proceedings of the National Academy of Sciences}
\bvolume{117}
\bpages{16880--16890}.
\end{barticle}
\endbibitem

\bibitem{wilkinson2014accelerating}
\begin{binproceedings}[author]
\bauthor{\bsnm{Wilkinson},~\bfnm{Richard}\binits{R.}}
(\byear{2014}).
\btitle{Accelerating {ABC} methods using {G}aussian processes}.
In \bbooktitle{Artificial Intelligence and Statistics}
\bpages{1015--1023}.
\end{binproceedings}
\endbibitem

\bibitem{wilks1938LRAsymptotic}
\begin{barticle}[author]
\bauthor{\bsnm{Wilks},~\bfnm{S.~S.}\binits{S.~S.}}
(\byear{1938}).
\btitle{The Large-Sample Distribution of the Likelihood Ratio for Testing Composite Hypotheses}.
\bjournal{Ann. Math. Statist.}
\bvolume{9}
\bpages{60--62}.
\bdoi{10.1214/aoms/1177732360}
\end{barticle}
\endbibitem

\bibitem{Wood2010GaussSynth}
\begin{barticle}[author]
\bauthor{\bsnm{Wood},~\bfnm{Simon}\binits{S.}}
(\byear{2010}).
\btitle{Statistical inference for noisy nonlinear ecological dynamic systems}.
\bjournal{Nature}
\bvolume{466}
\bpages{1102-4}.
\bdoi{10.1038/nature09319}
\end{barticle}
\endbibitem

\bibitem{yang2017frequentist}
\begin{barticle}[author]
\bauthor{\bsnm{Yang},~\bfnm{Yun}\binits{Y.}}, \bauthor{\bsnm{Bhattacharya},~\bfnm{Anirban}\binits{A.}} \AND \bauthor{\bsnm{Pati},~\bfnm{Debdeep}\binits{D.}}
(\byear{2017}).
\btitle{Frequentist coverage and sup-norm convergence rate in {Gaussian} process regression}.
\bjournal{arXiv preprint arXiv:1708.04753}.
\end{barticle}
\endbibitem

\bibitem{zhao2021diagnostics}
\begin{binproceedings}[author]
\bauthor{\bsnm{Zhao},~\bfnm{David}\binits{D.}}, \bauthor{\bsnm{Dalmasso},~\bfnm{Niccol{\`o}}\binits{N.}}, \bauthor{\bsnm{Izbicki},~\bfnm{Rafael}\binits{R.}} \AND \bauthor{\bsnm{Lee},~\bfnm{Ann~B}\binits{A.~B.}}
(\byear{2021}).
\btitle{Diagnostics for conditional density models and bayesian inference algorithms}.
In \bbooktitle{Uncertainty in Artificial Intelligence}
\bpages{1830--1840}.
\bpublisher{PMLR}.
\end{binproceedings}
\endbibitem

\bibitem{zhu2020nuisance}
\begin{barticle}[author]
\bauthor{\bsnm{Zhu},~\bfnm{Yunzhang}\binits{Y.}}, \bauthor{\bsnm{Shen},~\bfnm{Xiaotong}\binits{X.}} \AND \bauthor{\bsnm{Pan},~\bfnm{Wei}\binits{W.}}
(\byear{2020}).
\btitle{On high-dimensional constrained maximum likelihood inference}.
\bjournal{Journal of the American Statistical Association}
\bvolume{115}
\bpages{217-230}.
\end{barticle}
\endbibitem

\end{thebibliography}

%\newpage
%\input{sections/supplementary.tex}

\end{document}